\documentclass[a4paper,11pt]{amsart}
\usepackage{style} 

\usepackage[hyperref,doi=false,url=false,isbn=false, giveninits=true, backref=true,style=alphabetic,maxbibnames=99, backend=bibtex, natbib=true]{biblatex}
\usepackage{subcaption}

\title{Quantitative Gaussian-Process Limits of Tensor Programs}
\author{Andrea Agazzi}
\address{Institute for Mathematical Statistics and Actuarial Sciences, University of Bern, Switzerland}
\email{andrea.agazzi@unibe.ch}
\author{Eloy Mosig Garc\'\i a}
\address{Department of Mathematics, University of Pisa, Italy}
\email{eloy.mosig@phd.unipi.it}
\author{Dario Trevisan}
\address{Department of Mathematics, University of Pisa, Italy}
\email{dario.trevisan@unipi.it}
\date{\today}

\begin{document}

\begin{abstract}
  We study the infinite-width Gaussian-process limit of random neural networks
  through the lens of tensor programs, and we provide a quantitative convergence
  theory in Wasserstein distance.
  Our main result gives explicit finite-width error bounds, of order inverse square-root of the widths
  between finite-network executions and their
  Gaussian-process limits.   The framework is architecture-agnostic and covers feed-forward models together
  with weight-sharing schemes relevant for recurrent and transformer-type
  architectures.\\[-25pt]
\end{abstract}

\maketitle

\section{Introduction}

A depth-$M$ Multilayer Perceptron (MLP) with input $x\in\R^{d}$ and layer widths $(n_0,\ldots,n_{M})$ 
 is defined as the map $h^{(M)}~:~\R^{n_0}\to \R^{n_M}$, where $h^{(0)}(x) := x$ and
\begin{align}
  \label{eq:intro-ffnet}
  \hfill\qquad \qquad h^{(\ell+1)}(x) &:= \phi_\ell\bra{W^{(\ell)} h^{(\ell)}(x)},
  \qquad \text{for } \ell=0,\ldots,M-1,
\end{align}
with activation functions  $\phi_\ell:\R\to\R$ acting componentwise on their input, and parameters (or weights) $W^{(\ell)}\in\R^{n_{\ell+1}\times n_\ell}$ for 
$\ell \in  \{0, \dots M-1 \}$. These parameters are typically initialized 
randomly, drawn independently  from a common (layer-dependent) distribution. A standard practice is to choose such distribution as Gaussian centered with $\mathcal O(n_{\ell}^{-1})$ variance, for instance
\begin{equation}\label{e:init}
W^{(\ell)}_{ij} \sim_{iid} \mathcal N(0, n_\ell^{-1})\,.
\end{equation}
Under this choice of scaling, when the widths $n_1,\ldots,n_{M-1}$ go to infinity, 
the pre-activations $g^{(\ell)}:=W^{(\ell)}h^{(\ell)}$ converge
to a centered Gaussian process whose kernel is given by an explicit recursion
in $\phi$ and in the input Gram matrix. This is the
\emph{neural-network Gaussian-process} (NNGP) limit, first identified by
\cite{neal96} for $M = 2$ and later studied for generic integer $M \geq 2$ by
\cite{Lee2020}. Quantitative versions of these Central Limit Theorem (CLT) results were obtained in 
\cite{BasteriTrevisan2023,Favaro2025, trevisan2023wide} and more recently in \cite{celli2026wide, giovagnini2026universalitydeepneuralnetworks}. This work extends such quantitative convergence results to a significantly richer class of functions called \emph{tensor programs} defined below.

\subsection{Tensor programs} 
As discussed in \cite{yang1}, the above function class, as well as many other neural network architectures, can be interpreted as a specific instance of abstract algorithms, called $\netsor$ programs. These are a subclass of the more general \emph{tensor programs} \cite{yang2,yang3}, consisting of a sequence of lines where variables are declared and then combined via elementary operations. In particular, a $\netsor$ program $T$ is defined by specifying its \emph{inputs} and its \emph{operations} as follows:

\subsubsection*{Inputs} The inputs\footnote{a set of variables that are declared at the beginning of the program, and whose definition does not rely on other variables.} of a {\netsor} program are associated to two different variable types:
\begin{itemize}
  \item {\bf (Input) $\sH$-vars} 
are a collection of fixed real vectors
\[
\cX:=\{x_k\}_{k} \qquad x_k \in \mathbb R^{d_k}
\]
generalizing the input data on which the network acts. The dimensions of this collection are stored in set $\nin := \{d_k\}_k$. These variables belong to the class of $\sH$-vars, i.e., real, vector-valued quantities interpretable as hidden states of the network.
\item {\bf $\sA$-vars} denote an abstract collection of matrix-valued parameters 
$$\sW := \{\sW^{(j)}\}_{j },$$
representing the weights (and possibly biases)  of the network. 
\end{itemize}
Conforming to the notation in \cite{yang1}, we recall the type and dimension of a newly declared variable through the $\colon$ notation\footnote{This notation is borrowed from the variable declaration syntax in computer science and should be thought of as an alternative to the symbol $\in$.}, so that any $\sA$-var $\sW$ of dimension $n \times m$ will be declared as $\sW:\sA(n,m)$, while $\sH$-vars (and $\sG$-vars defined below)
of dimension $n$ will be respectively  introduced as 
$\sh:\sH(n)$ and $\sg : \sG(n)$. 

\subsubsection*{Operations} The variables defined above can be combined through the following operations 
\begin{itemize}
  \item $\matmul$:  The Matrix-vector multiplication operation combines a $\sW : \sA(n, m)$ and a $\sh : \sH(m)$. The result of this operation is a new $n$-dimensional vector-valued $\sH$-var behaving as a Gaussian when the dimension being contracted diverges. Denoting the subset of $\sH$-vars satisfying this property as $\sG$-vars (abstracting the concept of pre-activations), we write an instance of this operation as
$$\matmul \quad \sg := \sW \sh \colon  \sG(n).$$
\item $\nonlin$: Componentwise composition of $\sh_1, \ldots, \sh_m : \sH(n)$ with a (possibly nonlinear) function $\phi: \R^m \to \R$, yielding a new $\sH$-var of dimension $n$. This operation is declared as
\[ \nonlin \quad \sh := \phi(\sh_1, \ldots, \sh_m) \colon \sH(n)\]
\end{itemize}
%
%
The execution of each of the above operations yields a \emph{line} of the {\netsor} program. We denote by $L\in \mathbb N$ the total number of lines in a program.

\subsubsection*{Hidden widths and structural constants} 
The hidden widths of the program are defined as the dimensions over which the program performs internal matrix contractions. More precisely, a dimension $n$ is called a \emph{hidden width} if there exists a $\sG$-var of shape $n$ which is produced by a $\matmul$ line and whose downstream descendants in the graph are later used as the parent of another $\matmul$ line. Equivalently, $n$ is internal to a chain of matrix multiplications. We denote by $\nhidd$ the collection of hidden widths, which will be considered as asymptotically large in this work. By contrast, recall that input dimensions $\nin$ are kept fixed. Dimensions that are neither input nor hidden may be referred to as \emph{output dimensions} $\nout$. 

Furthermore, quantities that remain fixed throughout the asymptotic limit, such as the input space $\mathcal X$, the input dimensions $\nin$, the activations $\phi$ appearing in $\nonlin$ lines, and the order of the Wasserstein distance, are collectively referred to as \emph{structural constants}. We then use $\scc$ to denote a generic bounding constant that depends exclusively on these structural parameters and is independent or uniform in the hidden widths $\nhidd$. To keep notation simple, we conventionally allow the value of $\scc$ to change from line to line.

\begin{example}[Shallow neural network]\label{example:shallow} The following $3$-line {\netsor} program yields \eqref{eq:intro-ffnet} with $M =2$.  In this case, $\nin = \{d\}$, $\nhidd = \{n\}$, $\nout = \{d_{\textrm{out}}\}$.
\begin{algorithm}
\floatname{algorithm}{$\netsor$ program}
\caption{Shallow neural network evaluated on a single input}
    \label{alg:shallow}
\begin{algorithmic}
\Require $\sx: \sH(d)$
\Require $\sW^{(0)}: \sA(n, d)$
\Require $\sW^{(1)}: \sA(d_{\textrm{out}},n)$
\State \algline $\sg^{(1)} = \sW^{(0)} \sx: \sG(n)$ \Comment{\matmul}
\State \algline $\sh^{(1)} = \phi( \sg^{(1)}): \sH(n)$ \Comment{\nonlin}
\State \algline $\sg^{(2)} = \sW^{(1)} \sh^{(1)}: \sG(d_{\textrm{out}})$ \Comment{\matmul}
\end{algorithmic}
\end{algorithm}
\end{example}

\subsubsection*{Program graphs}
To better visualize the above concepts, it is useful to associate to a {\netsor} program a directed graph, encoding the flow of variables through the program lines. The nodes of the graph are the $\sH$-vars (hence also the $\sG$-vars) labelled by their type and shape. A {\matmul} line $\sg=\sW\sh$ contributes a solid directed edge $\sh\longrightarrow \sg$ labelled by the corresponding $\sA$-var $\sW$. Thus, repeated use of the same $\sA$-var is represented by repeated occurrences of the same edge label. A {\nonlin} line $\sh=\phi(\sh_1,\ldots,\sh_k)$ contributes directed edges $\sh_i \longrightarrow \sh$ which we draw as dashed lines with $\phi$ as label.

\begin{figure}[h]
\centering
\begin{tikzpicture}[
  node distance=1.85cm,
  var/.style={rectangle, draw, rounded corners, minimum height=7mm, inner sep=4pt},
  edge/.style={->, thick, >=Latex},
  nonlin/.style={->, thick, dashed, >=Latex}
]
\node[var] (x) {$\sx:\sH(d)$};
\node[var, right=of x] (g1) {$\sg^{(1)}:\sG(n)$};
\node[var, right=of g1] (h1) {$\sh^{(1)}:\sH(n)$};
\node[var, right=of h1] (g2) {$\sg^{(2)}:\sG(d_{\rm out})$};

\draw[edge] (x) -- node[above] {$\sW^{(0)}$} (g1);
\draw[nonlin] (g1) -- node[above] {$\phi$} (h1);
\draw[edge] (h1) -- node[above] {$\sW^{(1)}$} (g2);

\node[below=0.35cm of x] {\scriptsize input};
\node[below=0.35cm of h1] {\scriptsize hidden width $n$};
\node[below=0.35cm of g2] {\scriptsize output};
\end{tikzpicture}
\caption{Program graph of a shallow fully connected network evaluated on a single input. The input dimension is $d\in\nin$, the hidden width is $n\in\nhidd$, and the output dimension is $d_{\rm out}\in\nout$.}

\label{fig:program-graph-shallow}
\end{figure}

 \subsubsection*{Program execution} For a given choice of inputs $\mathcal X$, 
 the \emph{execution} of a program consists in the realization of its $\sH$-vars $\sh^{(j)}$ (including its $\sG$-vars $\sg^{(j)}$) as vector-valued random variables, with a given distribution on a common probability space. 

At the level of generality considered so far, the execution of a $\netsor$ program and the statistical properties of its $\sH$-vars can be interpreted in different ways.  In this work, we focus on the following two execution rules. In both executions, $\nonlin$ is intended as the component-wise application of $\phi$, hence they differ only on the concrete implementations of the {\netsor} operation.
 \begin{itemize}
  \item \emph{Finite-width}: this execution corresponds to the classical evaluation of the feed-forward pass of a neural network, with Gaussian initialization of its weights. For every input $\sA$-var $\sW:\sA(n,m)$, we realize $W_{ij}\sim \mathcal{N}(0,1/m)$, all independently of each other. Then, given a $\sH$-var $h \in \R^{m}$ and an $\sA$-var $W \in \mathbb \R^{n \times m}$, the $\matmul$ line $\sg = \sW \sh$ is interpreted as a standard matrix-vector multiplication, defining a random vector $g = W h$ with values in $\R^{n}$. 
\item \emph{Infinite-width}: this execution  replaces the finite-width matrix multiplications by their Gaussian central-limit approximation, and interprets $\sG$-vars $\bar g^{(1)}, \dots, \bar g^{(G)}$ appearing in the program as \emph{jointly Gaussian} random variables. Precisely,  the output of a $\matmul$ line $\sg := \sW \sh$ with $\sW:\sA(n,m)$ and $\sh:\sH(m)$
is defined as Gaussian random variable $\bar g$ taking values in $\R^n$, such that, for any previously introduced $\sG$-var $\sg'$ with corresponding Gaussian $\bar g'$ (including the case $\sg' = \sg$), the cross-covariance $\Sigma_{\bar g \bar g'}$ is defined as 
\begin{equation}
\label{eq:sigma-gg-mf}
\Sigma_{\bar g\bar g'}:=
\begin{cases}
0\qquad &
\text{if $\sg' = \sW' \sh'$ for an $\sA$-var $\sW'\neq \sW$}
\\
\bar{ h}\ldot \bar{h}' \Id_n \qquad &  
\text{if $\sg' = \sW \sh'$, with same $\sA$-var $\sW$ and $\sh':\sH(m)$}
\end{cases}
\end{equation}
where for two executions $\bar h, \bar h' \in \mathbb R^m$  of $\sH$-vars we introduce the \emph{infinite-width} product notation:
\begin{equation}\label{eq:infinite-width-product}  \bar{ h}\ldot \bar{h}' := \frac1m\,\EE\sqa{\bar h^\top \bar h'},\end{equation}
where with $\top$ we denote the matrix transpose.
\end{itemize}

\subsection{Main result} The two executions presented above are connected by our main result, presenting quantitative distributional convergence estimates in the wide limit 
$$ \min_{n \in \nhidd} n \to \infty\,,$$
 for the finite-width execution $h^{(\ell)}$ of a generic {\netsor} program to its infinite-width counterpart $\bar h^{(\ell)}$. The convergence is established in Wasserstein distance, defined for any $p \geq 1$ for a pair of probability distributions $\mu, \nu$ with finite $p$-th moments as
$$\mathcal W_p(\mu, \nu)^p := \inf_{\gamma \in \Gamma(\mu, \nu)}\int \|x-y\|^p \mathrm{d} \gamma(x,y)$$
where $\|\cdot\|$ denotes the Euclidean norm and $\Gamma(\mu,\nu)$ is the set of all couplings of $\mu$ and $\nu$. By slight abuse of notation, for a pair of random vectors $X,Y$ with distributions $\mathbb P_X, \mathbb P_Y$ we write $\mathcal W_p(X, Y) := \mathcal W_p(\mathbb P_X, \mathbb P_Y)$.
\begin{theorem}[Quantitative CLT]
\label{thm:main}
Consider a $\netsor$ program with $L$ lines, denote the $\sH$-vars defined by the program with $\sh^{(\ell)}:\sH(n_\ell)$, and write 
$$
\bigl(h^{(\ell)}\bigr)_{\ell=1,\ldots,L},
\qquad
\bigl(\bar h^{(\ell)}\bigr)_{\ell=1,\ldots,L},
$$
for their finite-width and infinite-width executions. Assume that all functions appearing in {\nonlin} lines are Lipschitz continuous.
 Then, for every $p\ge1$, there exists a
constant $\scc<\infty$, depending only on $p$ and on the structural constants of the
program, such that
\begin{equation}\label{eq:inequality-theorem-main-nngp}
\W_p\left(
\left(\frac{h^{(\ell)}}{\sqrt{n_\ell}}\right)_{\ell=1,\ldots,L},
\left(\frac{\bar h^{(\ell)}}{\sqrt{n_\ell}}\right)_{\ell=1,\ldots,L}
\right)
\le
\scc
\sum_{m\in\nhidd}\frac1{\sqrt m}.
\end{equation}
\end{theorem}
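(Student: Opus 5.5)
We argue by induction on the number of lines $L$ and build, at each step, an explicit coupling of the finite-width execution $(h^{(\ell)})_{\ell\le L}$ with a copy of the infinite-width execution $(\bar h^{(\ell)})_{\ell\le L}$. The aim is the coupling estimate
\[
\Bigl(\EE\Bigl[\sum_{\ell\le L} n_\ell^{-p/2}\|h^{(\ell)}-\bar h^{(\ell)}\|^p\Bigr]\Bigr)^{1/p}\le \scc\sum_{m\in\nhidd} m^{-1/2},
\]
which bounds $\W_p$ by definition.

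The $\nonlin$ lines are harmless. If $h=\phi(h_1,\dots,h_k)$ componentwise and $\bar h=\phi(\bar h_1,\dots,\bar h_k)$, then the Lipschitz property gives $\|h-\bar h\|\le \mathrm{Lip}(\phi)\sum_i\|h_i-\bar h_i\|$, and the normalisation $n^{-1/2}$ is shared. All of the work is in the $\matmul$ lines.

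For a $\matmul$ line $g=Wh$ with $\sW:\sA(n,m)$, we condition on the $\sigma$-algebra $\mathcal F$ generated by the previously executed lines. Following Yang's Gaussian conditioning trick, we list the earlier uses of the same $\sA$-var, $g_i=Wh_i$ for $i=1,\dots,k$, and also the transposed constraints if weights appear transposed (this is not needed here, since $\netsor$ has no transposes). Conditionally on $\mathcal F$, we then have
\[
W\overset{d}{=}\mathbb E[W\mid \mathcal F]+\tilde W\Pi^\perp,
\]
where $\tilde W$ is an independent copy and $\Pi^\perp$ projects onto the orthogonal complement of $\mathrm{span}(h_1,\dots,h_k)$. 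Thus $g=\sum_i c_i g_i+\tilde W\Pi^\perp h$, with coefficients $c=H^+h$ determined by the empirical Gram matrix $\frac1m H^\top H$ and $\frac1m H^\top h$.

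The infinite-width counterpart has exactly the same structure: $\bar g=\sum_i\bar c_i\bar g_i+\bar\sigma Z$, with $\bar c$ and $\bar\sigma$ computed from the limiting Gram matrix $(\bar h_i\ldot\bar h_j)$. We couple by taking $Z=\tilde W\Pi^\perp h/\|\Pi^\perp h/\sqrt m\|$ when this is nonzero; conditionally on $\mathcal F$ it is exactly standard Gaussian in $\R^n$ and independent of $\mathcal F$. Then
\[
n^{-1/2}\|g-\bar g\|\lesssim \sum_i|c_i-\bar c_i|\,n^{-1/2}\|g_i\|+\sum_i|\bar c_i|\,n^{-1/2}\|g_i-\bar g_i\|+\bigl|\|\Pi^\perp h\|/\sqrt m-\bar\sigma\bigr|\,n^{-1/2}\|Z\|.
\]
The middle term is controlled by the induction hypothesis. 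For the other two it suffices to show, in $L^p$, that
\[
\Bigl|\tfrac1m h_i^\top h_j-\bar h_i\ldot\bar h_j\Bigr|\le \scc\sum_{m'} m'^{-1/2}.
\]
This splits into two parts. The first is the distance between $\frac1m h_i^\top h_j$ and $\frac1m\bar h_i^\top\bar h_j$, which by Cauchy--Schwarz is bounded by the inductive coupling error times moment bounds; uniform moment bounds on $n^{-1/2}\|h\|$ follow easily by induction, using Lipschitz growth. The second is the fluctuation $\frac1m\bar h_i^\top\bar h_j-\EE[\cdot]/m$. By construction the infinite-width coordinates are i.i.d. across the $m$ entries, since $\Sigma$ is a multiple of $\Id$ and $\nonlin$ acts componentwise, so this fluctuation is $O(m^{-1/2})$ in $L^p$ by Rosenthal/Marcinkiewicz--Zygmund.

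The main obstacle is the degeneracy of the limiting Gram matrix. The map from the Gram matrix to $(\bar c,\bar\sigma)$ is only locally Lipschitz, and it blows up when $(\bar h_i\ldot\bar h_j)$ is singular or $\bar\sigma=0$. The first thing to try is to replace the coefficient formulation by a matrix square root: write both $g$ and $\bar g$ jointly with their predecessors as $K^{1/2}$ applied to a common Gaussian array. For the nondegenerate directions, we then restrict to a maximal linearly independent subfamily in the limit. This is a structural choice, so its smallest nonzero eigenvalue is a structural constant. On the event where the empirical Gram matrix is within half that eigenvalue of the limit, we use a Lipschitz perturbation bound for the square root/pseudoinverse. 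The complementary event has probability $O(m^{-q})$ for every $q$ by the moment bound above, and is absorbed through Hölder together with the moment bounds. The dependent directions, which in the limit are exact linear combinations, are handled by showing that their finite-width residuals are $O(m^{-1/2})$ in $L^p$ and that they contribute only through $\Pi^\perp h$, using $|\sqrt a-\sqrt b|\le\sqrt{|a-b|}$ only where necessary. If that square-root loss appears, we would instead use the Lipschitz square-root bound on the nondegenerate block.

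Finally, we sum the errors over the $L$ lines, with constants multiplying at each step. This yields the bound of Theorem~\ref{thm:main}, with the sum running over the hidden widths that actually appear in contractions.
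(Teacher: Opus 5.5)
Your same-noise coupling, the kernel LLN (coupling plus Rosenthal over the i.i.d.\ infinite-width coordinates) and the $\nonlin$ step match the paper's. The gap is in how you treat degeneracy, specifically when earlier parents of the same $\sA$-var are linearly dependent in the limit but the current parent is not. Conditionally on the past, the finite-width $g=Wh$ is governed by the projection onto the span of the \emph{finite-width} parents. Restricting to a maximal limit-independent subfamily $J$ therefore changes the object you are estimating, and the size of the residuals is beside the point.

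Suppose $\sg_1=\sW\sh_1$ and $\sg_2=\sW\sh_2$ precede the current line $\sg=\sW\sh$, with $\bar h_2=\bar h_1$ but $h_2=h_1+\delta$, $\delta\neq0$. Let $\Pi^\perp_{h_1}$ be the projection onto $h_1^\perp$, let $e$ be the unit vector along $\Pi_{h_1}^\perp\delta$, and let $Z_2:=\sqrt m\,We$ be the standardized noise of the $\sg_2$-line. Then
\[
\mu_{g\mid\cF}=\frac{h^\top h_1}{\|h_1\|^2}\,g_1+u\,Z_2,\qquad
a=\frac1m\big\|\Pi_{h_1}^\perp h\big\|^2-u^2,\qquad
u:=\Big\langle \frac{h}{\sqrt m},e\Big\rangle ,
\]
whereas the infinite-width conditional mean and variance contain neither term. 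However small $\|\delta\|$ is, $u$ is the component of a normalised vector along a unit direction, and nothing in your argument controls it. The resulting errors, $uZ_2/\sqrt n$ in the mean and $u^2$ in the variance, can be of order one. So the claim that dependent directions ``contribute only through $\Pi^\perp h$'' fails unless $\delta=0$ exactly. Put differently, your Lipschitz bound applies to the Gram matrix of $J$, but the finite-width law depends on the Gram matrix of all parents, and its pseudoinverse (or Cholesky factor) is discontinuous at the rank drop.

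The missing idea is the paper's exact reduction (Lemma~\ref{lemma:nondegenerate-representative}, via Proposition~\ref{prop:matmul-reduction}). Process the lines in order. Once all earlier $\sG$-vars are jointly non-degenerate, an infinite-width identity $\bar h=\sum_i\alpha_i\bar h_i$ among parents says that a continuous function of the per-coordinate vector of earlier $\sG$-vars vanishes almost surely under a non-degenerate Gaussian. It therefore vanishes Lebesgue-a.e., hence everywhere, so $h=\sum_i\alpha_i h_i$ holds \emph{exactly} at finite width and $\delta=0$. The degenerate $\matmul$ line can then be rewritten as a linear $\nonlin$ line without changing either execution. After this, every limiting Gram matrix is invertible and your perturbation argument goes through.

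Two smaller points:
\begin{itemize}
\item A joint representation ``$K^{1/2}$ applied to a common Gaussian array'' is compatible with the sequential coupling only if $K^{1/2}$ is the lower-triangular Gram--Schmidt factor. That is exactly the paper's inner induction over earlier uses of $\sW$ with standardized residuals $Z_s$.
\item The coefficients $c=H^+h$ have no a priori moment bound, unlike the paper's $|u_s|\le\|h\|$. The bad event must therefore be absorbed at the level of $g-\bar g$, not $c-\bar c$.
\end{itemize}
With these repairs, comparing $c$ with $\bar c$ directly is a legitimate alternative to the paper's inner induction.
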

Our second main result can be regarded as a law of large numbers (LLN) for empirical kernels built from $\sH$-vars, and follows from Theorem~\ref{thm:main}, but it also provides a key ingredient in the induction argument for its proof.

\begin{corollary}[Kernel LLN]
\label{coro:kernel-lln}
Assume the hypotheses of Theorem~\ref{thm:main}. Then for any pair of $\sH$-vars
$\sh,\sh':\sH(n)$,
\[
\left\|
\frac1n h^\top h'
-
\bar h\ldot\bar h'
\right\|_{L^p}
\le
\scc
\sum_{m\in\nhidd\cup\nout}\frac1{\sqrt m}.
\]
where $\scc<\infty$ depends only on $p$ and on the structural constants of the program.
\end{corollary}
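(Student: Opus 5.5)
We derive the corollary from Theorem~\ref{thm:main} by comparing with an auxiliary i.i.d.\ object, splitting the error into a coupling term and a law-of-large-numbers term. The first step is the structure of the infinite-width execution. Every $\sH$-var of width $n$ has the form $\bar h=\phi(\bar g_1,\dots,\bar g_k)$ applied componentwise, where the relevant $\sG$-vars of width $n$ have covariances proportional to $\Id_n$ by \eqref{eq:sigma-gg-mf}. Input vectors are the only exception, and their dimensions are fixed, so for them the statement is trivial or directly covered. It follows that the coordinates $(\bar h_i,\bar h'_i)_{i\le n}$ are i.i.d.\ across $i$, with a law $\mu$ that does not depend on $n$. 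That law has moments of all orders bounded by $\scc$, because the functions in $\nonlin$ lines are Lipschitz, hence of linear growth, and the kernel entries are bounded recursively by structural constants. Consequently $\EE[\frac1n\bar h^\top\bar h']=\bar h\ldot\bar h'$.

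We then split
\[
\frac1n h^\top h'-\bar h\ldot\bar h'=\Bigl(\frac1n h^\top h'-\frac1n \bar h^\top\bar h'\Bigr)+\Bigl(\frac1n\bar h^\top\bar h'-\bar h\ldot\bar h'\Bigr),
\]
where in the first bracket $(h,h')$ and $(\bar h,\bar h')$ are realized on a common space through an optimal $\W_{2p}$ coupling of the normalized vectors $(h/\sqrt n,h'/\sqrt n)$ and $(\bar h/\sqrt n,\bar h'/\sqrt n)$. This coupling is supplied by Theorem~\ref{thm:main}, after projecting onto the two relevant coordinates, since projection is $1$-Lipschitz.

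The second bracket is a centered average of $n$ i.i.d.\ variables with bounded $p$-th moments. Rosenthal or Marcinkiewicz--Zygmund inequalities bound its $L^p$ norm by $\scc/\sqrt n$. If $n\in\nhidd$ or $n\in\nout$, this is at most $\scc\sum_{m\in\nhidd\cup\nout}m^{-1/2}$. If $n$ is an input dimension, $\bar h=h$ is deterministic and the whole difference vanishes.

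For the first bracket, write $u=h/\sqrt n$, $\bar u=\bar h/\sqrt n$, and similarly $v,\bar v$ for $h',\bar h'$. Then
\[
|u^\top v-\bar u^\top\bar v|\le \|u-\bar u\|\|v\|+\|\bar u\|\|v-\bar v\|.
\]
Applying Hölder, the $L^p$ norm is at most $\|u-\bar u\|_{L^{2p}}\|v\|_{L^{2p}}+\|\bar u\|_{L^{2p}}\|v-\bar v\|_{L^{2p}}$. The difference terms are bounded by $\scc\sum_{m\in\nhidd}m^{-1/2}$ via Theorem~\ref{thm:main} with exponent $2p$. The norm terms need $\|\bar u\|_{L^{2p}}\le\scc$, which follows from the i.i.d.\ moment bound, and $\|v\|_{L^{2p}}\le\|\bar v\|_{L^{2p}}+\|v-\bar v\|_{L^{2p}}\le\scc$, which follows from the same estimates.

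The main obstacle is the uniform moment control: showing that $\frac1n\|\bar h\|^2$ has all moments bounded independently of the widths, and that the kernel recursion stays bounded. I would handle this by induction on the line index. Lipschitz activations give $|\phi(x)|\le C(1+|x|)$, and each Gaussian $\sG$-var has a variance that is a previously controlled kernel entry. A secondary subtlety is keeping the right-hand side exactly as stated: the appearance of $\nout$ comes from the LLN term when $n$ is an output width. If the paper intends the corollary to serve also inside the induction proving Theorem~\ref{thm:main}, the same decomposition works line by line, applying the theorem only to earlier lines.
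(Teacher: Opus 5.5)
Your proposal is correct and follows essentially the same route as the paper. You split the error by the triangle inequality into a coupling term and an i.i.d.\ sampling term $\frac1n\sum_i \bar h_i\bar h'_i-\bar h\ldot\bar h'$: the coupling term is handled by the Cauchy--Schwarz estimate, Theorem~\ref{thm:main} at exponent $2p$ and the uniform infinite-width moment bounds, while Rosenthal's inequality gives the $n^{-1/2}$ rate for the sampling term, which is where the $\nout$ contribution comes from. The only difference is cosmetic: you work with $L^p$ norms under an explicit optimal coupling, whereas the paper phrases the same bound via the Wasserstein triangle inequality and its estimate \eqref{eq:kernel-basteri-tool2}.
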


We remark that the set $\nout$ appears in the right-hand side only when one takes empirical kernels over output dimensions, i.e., $n \in \nout$. 

\subsubsection*{Scope and comparison with existing limits.}
Let us emphasize that Theorem~\ref{thm:main} is not merely a reformulation of existing
quantitative central limit theorems for fully connected feed-forward networks. The tensor
program formulation allows one to treat architectures in which the same weight matrix is
used several times in the computation, possibly across different layers or different input
branches. This includes, for example, weight-sharing patterns arising in recurrent networks
and in the joint evaluation of a network on several inputs (see the examples proposed in Subsection \ref{subsec:program-graphs}). In such cases, the relevant
finite-width dependencies are not captured by a layer-by-layer independent CLT, and the
conditional Gaussian structure of tensor programs becomes essential.

The result should also be compared with Yang's master theorem \cite[Theorem 5.4]{yang1} for tensor programs. That theorem gives a qualitative law-of-large-numbers description of the infinite-width limit for a very broad class of programs and nonlinearities. Our theorem is more restrictive in its regularity assumptions, since we require Lipschitz continuity, but it gives an explicit quantitative Wasserstein estimate between
the finite-width execution and the infinite-width Gaussian-process execution. In this sense, the present result can be viewed as a quantitative counterpart, for a restricted, but architecture-rich, class of tensor programs, to the qualitative tensor-program limit theory.

Finally, the basic {\netsor} language does not cover all operations needed for modern
architectures, such as transformers. Subsection~\ref{subsec:extensions-kernel} introduces a
restricted extension, denoted $\netsork$, in which scalar kernel variables and
scalar-parametric nonlinearities are allowed. This extension is designed to cover
attention-type layers while keeping the quantitative proof based on kernel LLN estimates and finite-dimensional stability bounds.

\subsubsection*{Structure of the proof}

The proof of Theorem~\ref{thm:main} is a line-by-line induction on the program.
One mild subtlety is that the kernel LLN in Corollary~\ref{coro:kernel-lln} is both a
consequence of the quantitative convergence theorem and an ingredient in its proof. This
does not lead to a circular argument: at the induction step from lines $1,\ldots,r-1$ to
line $r$, we use the theorem only for the truncated program up to line $r-1$. This gives
the quantitative coupling of all previously constructed $\sH$-vars. Combining this coupling
with a standard quantitative law of large numbers for the independent coordinates of the
infinite-width execution yields the kernel LLN for all kernels that can appear before line
$r$.

Once this kernel control is available, the induction step is as follows. If line $r$ is a
\textsc{NonLin} instruction, the estimate is propagated directly by the Lipschitz stability
of the activation. If line $r$ is a {\matmul} instruction, both the finite-width and
infinite-width executions are conditionally Gaussian given the previous $\sH$-vars. We
couple these two conditional Gaussian laws by using the same standard Gaussian noise.
The resulting error is reduced to estimating the difference between the corresponding
conditional means and square-roots of conditional covariances.

The structure lemmas \ref{lem:structure-lemma-finite-width} and \ref{lem:structure-lemma-infinite-width} show that these conditional means and covariances are finite-dimensional
functions of the Gram matrices of the parent $\sH$-vars associated with the same
$\sA$-var. After the non-degeneracy reduction, these functions are differentiable at
the infinite-width Gram point. The kernel LLN for the truncated program therefore controls
the difference between the finite-width and infinite-width conditional parameters. This
closes the induction and proves Theorem~\ref{thm:main}. Applying the same argument to
the full program then gives Corollary~\ref{coro:kernel-lln}.



\subsubsection*{Related works}

The convergence of neural networks to Gaussian Processes in the wide limit was first established for shallow, fully connected neural networks by \cite{neal96} and subsequently extended to deep fully connected networks \cite{matthews18}. These qualitative results, coupled with the introduction of the Neural Tangent Kernel (NTK) \cite{jacot, arora}, provided a rigorous foundation for analyzing the training and generalization of infinite-width networks.

To systematically encompass the vast landscape of modern architectures, the Tensor Programs formalism was introduced \cite{yang0, yang1, yang2, yang3}. This framework provides a universal algebraic language to prove that the forward and backward passes of essentially any standard architecture converge to Gaussian Process and NTK limits. Importantly, the Tensor Programs formalism also clarifies the distinction between the NTK regime and the feature learning regime \cite{yang4}, where representations actively evolve during training. In this work, we focus on the former setting.

Parallel to the infinite-width asymptotic, the infinite-depth and joint wide-and-deep limits have been extensively explored to understand hierarchical feature propagation. For instance, \cite{peluchetti20a} demonstrated that residual networks converge to diffusion processes in the infinite-depth limit, while \cite{hayou} formalized the commutativity of sequential width and depth limits. More recently, \cite{agazzi2026, borjan} established quantitative scaling limits for residual attention architectures in the wide, deep, and particle mean-field limits.

While foundational results established qualitative asymptotic limits, determining the precise rate of convergence at finite width has become a central focus. From a rigorous probabilistic standpoint, quantitative CLTs characterizing the convergence of finite-width network architectures to their Gaussian wide limits were established for fully connected networks in \cite{BasteriTrevisan2023, trevisan2023wide, bordino25, Favaro2025} by using optimal transport theory in the first two cases, second-order Poincaré inequalities in the third, and Stein-Malliavin method in the last. Notably, also functional quantitative CLTs were proven in \cite{eldan21, cammarota23} and extended to deep networks and strengthened in \cite{Favaro2025}. Moving beyond initialization, \cite{mosig26} extended the optimal transport quantitative bounds from \cite{BasteriTrevisan2023} to positive training time in the shallow MLP case. 

Very recently, these quantitative guarantees have been significantly broadened. \cite{celli2026wide} provided convergence rates with an explicit dependence on general weight distributions and hyperparameters, while \cite{giovagnini2026universalitydeepneuralnetworks} bypassed traditional Gaussian assumptions to prove universality in deep networks via the Lindeberg exchange principle. Concurrently, efforts have been made to quantitatively analyze more intricate architectures. Prior to these quantitative efforts, \cite{hron20} rigorously derived the exact NNGP and NTK kernels for deep attention networks in both $d^{-1}$ and $d^{-1/2}$ scalings. Building upon this, \cite{sakai2026infinitewidthlimitsingleattention} recently leveraged the Tensor Programs framework to analyze the infinite-width limit of attention layers. Notably, they extended the Master Theorem in \cite{yang1} to attention architectures with the $d^{-1/2}$ scaling in the dot product, as opposed to the $d^{-1}$ scaling used by \cite{yang1}.

Finally, we mention a different but related direction, concerning Gaussian-process limits for quantum neural networks. In that setting, the random function generated by a parametrized quantum circuit is typically an expectation value of observables over many qubits, and Gaussian limits arise in a large-width or many-qubit regime under suitable locality or weak-correlation assumptions. Recent work \cite{girardi2025trained, melchor2025quantitative} has shown that both untrained and trained quantum neural networks may converge to Gaussian processes, and that this convergence can be quantified in Wasserstein distance. Although the mechanisms and architectures differ substantially from the tensor-program setting considered here, these results share the same general objective: to turn qualitative Gaussian-process limits for random neural models into explicit finite-size approximation bounds.

\subsubsection*{Structure of this work}  Section~\ref{sec:examples_extensions} illustrates the scope of the main theorem through several examples, including shallow and deep fully connected networks, recurrent architectures, and its extension to $\netsork$ programs, which adds scalar kernel variables and scalar-parametric nonlinearities. 
Section~\ref{sec:tech} develops the structural ingredients of the argument: conditional-law formulas for finite- and infinite-width executions, and the reduction to non-degenerate representative programs. Section~\ref{sec:proof-main} proves the main results
for both base $\netsor$ programs and extended $\netsork$ programs.  
Finally, Section~\ref{sec:numerical_experiments} presents numerical experiments illustrating finite-width convergence to the Gaussian-process limit across several architectures.

\section{Examples and Extensions}
\label{sec:examples_extensions}

Before we move to the proof of our main results, we illustrate how they  apply to the forward pass of a large family of architectures.

\subsection{Examples} We start with simple examples, moving towards more complex ones.
\label{subsec:program-graphs}

\subsubsection*{Shallow fully connected network}
The previously defined $\netsor$ program \ref{alg:shallow} showcases a single-hidden layer neural network. The corresponding graph is shown in Figure \ref{fig:program-graph-shallow-graph}.
In this case, Theorem \ref{thm:main} yields:
\begin{equation}
\W_p\left(\left(\frac{h^{(1)}}{\sqrt{n}},\frac{g^{(2)}}{\sqrt{d_{\rm out}}}\right),
\,\left(\frac{\bar h^{(1)}}{\sqrt n}, \frac{\bar g^{(2)}}{\sqrt{d_{\rm out}}}\right) \right)\le \frac\scc{\sqrt n},
\end{equation}
where 
$\bar h^{(1)} / \sqrt{n}$ is the scaled componentwise image via $\phi$ of the random vector $\bar g^{(1)}$ with entries i.i.d. centered Gaussian variables with variance equal to $x^\top x$
and $\mathcal G = \bar g^{(2)} / \sqrt{d_{\rm out}}$ is the Neural Network Gaussian process (NNGP) for a shallow network \cite{jacot,Lee2020,BasteriTrevisan2023}.
$\bar g^{(1)}$ and $\bar g^{(2)}$ are to be interpreted as centered Gaussian variables with covariance as in Equation \eqref{eq:sigma-gg-mf}.

\begin{figure}[h]
\centering
\begin{tikzpicture}[
  node distance=1.85cm,
  var/.style={rectangle, draw, rounded corners, minimum height=7mm, inner sep=4pt},
  edge/.style={->, thick, >=Latex},
  nonlin/.style={->, thick, dashed, >=Latex}
]
\node[var] (x) {$\sx:\sH(d)$};
\node[var, right=of x] (g1) {$\sg^{(1)}:\sG(n)$};
\node[var, right=of g1] (h1) {$\sh^{(1)}:\sH(n)$};
\node[var, right=of h1] (g2) {$\sg^{(2)}:\sG(d_{\rm out})$};

\draw[edge] (x) -- node[above] {$\sW^{(0)}$} (g1);
\draw[nonlin] (g1) -- node[above] {$\phi$} (h1);
\draw[edge] (h1) -- node[above] {$\sW^{(1)}$} (g2);

\node[below=0.35cm of x] {\scriptsize input};
\node[below=0.35cm of h1] {\scriptsize hidden width $n$};
\node[below=0.35cm of g2] {\scriptsize output};
\end{tikzpicture}
\caption{Program graph of a shallow fully connected network. The dimension $n$ is a
hidden width because it is produced by the first $\matmul$ line and later contracted
by the readout $\matmul$ line. The input dimension $d$ and output dimension
$d_{\rm out}$ are structural.}

\label{fig:program-graph-shallow-graph}
\end{figure}

\subsubsection*{Deep fully connected network}

Figure \ref{fig:program-graph-deep} and the corresponding $\netsor$ program \ref{alg:deep} extend the shallow architecture to any $M$-layer deep feed-forward network. The $\sH$-vars propagate through a sequence of matrix multiplications parameterized by independent $\sA$-vars $\sW^{(\ell)}$ and componentwise nonlinearities $\phi_\ell$. In this case Theorem \ref{thm:main} provides joint convergence of the pre-activations to a centered Gaussian process:
$\mathcal{G} = (\mathcal{G}^{(\ell)})_{\ell=1,\ldots,M} \sim \mathcal N \bra{0,K^{(M)}\Id_{d_{\rm out}}}$,
where $\Id_n$ denotes the identity matrix in $\R^{n\times n}$, recursively defining the deep NNGP kernel:
\[
K^{(1)} = \frac{c^{(0)}}{d}x^\top x, \quad K^{(\ell+1)} = \frac{c^{(\ell)}}{n_\ell} \E\left[\phi^{(\ell)}(\mathcal G^{(\ell)})^\top\phi^{(\ell)}(\mathcal G^{(\ell)})\right]\in \R,
\]
where $W^{(\ell)}_{ij}\sim\mathcal N\bra{0,c^{(\ell)}/n_\ell}$, for each $i,j \le n_\ell$ and $\ell\le M-1$.
Note that the NNGP kernel may also be written as a function of the $\sH$-vars of the program via the infinite-width product $\ldot$ defined in Equation \eqref{eq:infinite-width-product}:
\[K^{(\ell+1)} = c^{(\ell)}\bar {h}^{(\ell)}\ldot\bar {h}^{(\ell)}, \quad \text{for each }\ell \le M.\]

Note that, in our notation, $\mathcal G^{(\ell)} = \bar g^{(\ell)} / \sqrt{n_\ell}$ for each $\ell$, so the definition of the NNGP at each layer is consistent with the shallow network example.
Restricting the bound in Theorem \ref{thm:main} to the $\sG$-vars of the program recovers the bound from the main result in \cite{BasteriTrevisan2023}:
\begin{equation}
\W_p\left(\left(\frac{g^{(\ell)}}{\sqrt{n_\ell}}\right)_{\ell=1,\ldots,M},\, \mathcal{G}\right) \le\scc
\sum_{i=1}^{M-1}\frac1{\sqrt n_i}.
\end{equation}
Note that for the fully connected architecture this rate has been improved in \cite{trevisan2023wide} to $\mathcal O \bra{\bra{\min_i n_i}^{-1}}$.

\begin{algorithm}[htpb]
\floatname{algorithm}{Netsor program}
\caption{Deep feed-forward network}
\label{alg:deep}
\begin{algorithmic}
\Require $\sx: \sH(d)$
\Require $\sW^{(0)}: \sA(n_1, d)$
\Require $\sW^{(\ell)}: \sA(n_{{\ell+1}}, n_\ell)$ for $\ell = 1, \dots, M-2$
\Require $\sW^{(M-1)}: \sA(d_{\rm{out}}, n_{M-1})$
\State \algline $\sh^{(0)} = \sx: \sH(d)$ 
\For{$\ell =1, \ldots, M$}
  \State \algline $\sg^{(\ell)} = \sW^{(\ell-1)} \sh^{(\ell-1)}: \sG(n_\ell)$ \Comment{\matmul}
  \State \algline $\sh^{(\ell)} = \phi( \sg^{(\ell)}): \sH(n_\ell)$ \Comment{\nonlin}
\EndFor
\end{algorithmic}
\end{algorithm}

\begin{figure}[h]
\centering
\begin{tikzpicture}[
  font=\small,
  x=2.7cm,
  y=1.35cm,
  var/.style={rectangle, draw, rounded corners, minimum height=7mm, inner sep=4pt, align=center},
  edge/.style={->, thick, >=Latex},
  nonlin/.style={->, thick, dashed, >=Latex}
]
\node[var] (x)  at (0,0) {$\sx:\sH(d)$};
\node[var] (g1) at (1,0) {$\sg^{(1)}:\sG(n_1)$};
\node[var] (h1) at (2,0) {$\sh^{(1)}:\sH(n_1)$};

\node[var] (g2) at (1,-1.5) {$\sg^{(2)}:\sG(n_2)$};
\node[var] (h2) at (2,-1.5) {$\sh^{(2)}:\sH(n_2)$};
\node[var] (dots) at (3.2,-1.5) {$\ldots$};

\node[var] (g3) at (0,-3) {$\sg^{(M-1)}:\sG(n_{M-1})$};
\node[var] (h3) at (1.4,-3) {$\sh^{(M-1)}:\sH(n_{M-1})$};
\node[var] (gL) at (2.8,-3) {$\sg^{(M)}:\sG(d_{\rm out})$};
\node[var] (y) at (4,-3) {$y:\sH(d_{\rm out})$};

\draw[edge] (x) -- node[above] {$\sW^{(0)}$} (g1);
\draw[nonlin] (g1) -- node[above] {$\phi_1$} (h1);

\draw[edge] (h1) to[out=225,in=45] node[above] {$\sW^{(1)}$} (g2);
\draw[nonlin] (g2) -- node[above] {$\phi_2$} (h2);
\draw[edge] (h2) -- node[above] {$\sW^{(2)}$} (dots);
\draw[edge] (dots) to[out=225,in=45] node[above] {$\sW^{(M-2)}$} (g3);
\draw[nonlin] (g3) -- node[above] {$\phi_{M-1}$} (h3);

\draw[edge] (h3) -- node[above] {$\sW^{(M-1)}$} (gL);
\draw[nonlin] (gL) -- node[above] {$\phi_{M}$} (y);

\node[above=0.18cm of x] {\scriptsize input};
\node[above=0.18cm of h1] {\scriptsize hidden $n_1$};
\node[above=0.18cm of h2] {\scriptsize hidden $n_2$};
\node[above=0.18cm of dots] {\scriptsize intermediate layers};
\node[below=0.18cm of y] {\scriptsize output};
\end{tikzpicture}
\caption{Program graph of a deep feed-forward network with activated output. The internal widths are
$n_1,\ldots, n_{M-1}$. The terminal output dimension $d_{\rm out}$ is structural, even if the final
program line is obtained by a componentwise nonlinearity.}
\label{fig:program-graph-deep}
\end{figure}

\subsubsection*{Time-unrolled RNN with residue}

The graph in Figure \ref{fig:program-graph-rnn} and the corresponding $\netsor$ program \ref{alg:rnn} illustrate a recurrent neural network unrolled over two time steps. The structurally defining feature of this architecture is the reuse of the $\sA$-vars $\sW^x$ and $\sW^h$ across the temporal sequence, which introduces dependencies between the generated $\sH$-vars, as opposed to the two previous examples.
In particular, the results for fully connected architectures are no longer applicable in this case.
Under the $\netsor$ formalism, Theorem \ref{thm:main} guarantees that the unrolled hidden states jointly converge to the recurrent NNGP \cite{yang1} with the expected rate. Specifically, we have
\begin{equation}
\W_p\left(\left(\frac{h_1}{\sqrt{n}}, \frac{h_2}{\sqrt{n}},\frac{y}{\sqrt{d_{\rm{out}}}}\right),\, (Z_1, Z_2, \mathcal G)\right) \le \frac{\scc}{\sqrt{n}},
\end{equation}
where \(\mathcal G=\bar y/\sqrt{d_{\rm out}}\) denotes the recurrent NNGP associated with this architecture and $Z_1 = \phi(U_1), \; Z_2 = \psi(V,U_2)$ with
 \[U_1 \sim \mathcal N \bra{0, \frac 1 d x_1^\top x_1}, \quad U_2 \sim \mathcal N\bra{0,\frac{1}{d}x_2^\top x_2}, \quad \text{and } V \sim \mathcal N\bra{0,\frac{1}{n}\E_Z\left[\phi(U_1)^\top \phi(U_1)\right]}.\]
Then the NNGP \(\mathcal G=\bar y/\sqrt{d_{\rm out}}\) is, explicitly, a $\R^{d_{\rm out}}$-valued random variable with i.i.d.~components centered Gaussian with variance:
\[
\Sigma_{\mathcal G} = \Id_{d_{\rm out}}\E\left[\psi(V,U_2)^\top \psi(V,U_2)\right].
\]
Note that this agrees with the choice of covariance in the infinite-width execution \eqref{eq:sigma-gg-mf}.

\begin{algorithm}[htpb]
\floatname{algorithm}{Netsor program}
\caption{Time-unrolled recurrent architecture}
\label{alg:rnn}
\begin{algorithmic}
\Require $\sx_1, \sx_2: \sH(d)$
\Require $\sW^x: \sA(n, d)$
\Require $\sW^h: \sA(n, n)$
\Require $\sW^o: \sA(d_{\rm out}, n)$
\State \algline $\sg_1 = \sW^x \sx_1: \sG(n)$ \Comment{\matmul}
\State \algline $\sh_1 = \phi(\sg_1): \sH(n)$ \Comment{\nonlin}
\State \algline $\sg_3 = \sW^h \sh_1: \sG(n)$ \Comment{\matmul}
\State \algline $\sg_2 = \sW^x \sx_2: \sG(n)$ \Comment{\matmul}
\State \algline $\sh_2 = \psi(\sg_2, \sg_3): \sH(n)$ \Comment{\nonlin}
\State \algline $\sy = \sW^o \sh_2: \sG(d_{\rm out})$ \Comment{\matmul}
\end{algorithmic}
\end{algorithm}

\begin{figure}[h]
\centering
\begin{tikzpicture}[
  font=\small,
  x=2.8cm,
  y=1.35cm,
  var/.style={rectangle, draw, rounded corners, minimum height=7mm, inner sep=4pt, align=center},
  edge/.style={->, thick, >=Latex},
  nonlin/.style={->, thick, dashed, >=Latex}
]
\node[var] (x1) at (0,0) {$\sx_1:\sH(d)$};
\node[var] (g1) at (1,0) {$\sg_1:\sG(n)$};
\node[var] (h1) at (2,0) {$\sh_1:\sH(n)$};
\node[var] (g3) at (3,0) {$\sg_3:\sG(n)$};

\node[var] (x2) at (0,-1.5) {$\sx_2:\sH(d)$};
\node[var] (g2) at (1,-1.5) {$\sg_2:\sG(n)$};
\node[var] (h2) at (2,-1.5) {$\sh_2:\sH(n)$};

\node[var] (y) at (1,-3.0) {$\sy:\sG(d_{\rm out})$};

\draw[edge] (x1) -- node[above] {$\sW^x$} (g1);
\draw[nonlin] (g1) -- node[above] {$\phi$} (h1);
\draw[edge] (h1) -- node[above] {$\sW^h$} (g3);

\draw[edge] (x2) -- node[above] {$\sW^x$} (g2);
\draw[nonlin] (g3) -- node[right] {$\psi$} (h2);
\draw[nonlin] (g2) -- node[above] {$\psi$} (h2);

\draw[edge] (h2) -- node[right] {$\sW^o$} (y);

\node[left=0.18cm of x1] {\scriptsize input time 1};
\node[left=0.18cm of x2] {\scriptsize input time 2};
\node[below=0.18cm of y] {\scriptsize output};
\end{tikzpicture}
\caption{Time-unrolled recurrent architecture. The same input weight $\sW^x$ is reused
across time, and $\sW^h$ maps the previous hidden state $\sh_1$ into the next preactivation $\sg_3$.
The postactivation is defined as $\sh_2 = \psi(\sg_2,\sg_3)$.
The recurrent dimension $n$ is an internal hidden width.}
\label{fig:program-graph-rnn}
\end{figure}

\subsubsection*{Two-input shallow network}

To analyze the covariance structure of the outputs given multiple data points, Figure \ref{fig:program-graph-two-inputs} depicts a shallow network evaluated on two distinct inputs $\sx^1$ and $\sx^2$. 
The corresponding $\netsor$ program \ref{alg:two_inputs} is obtained by duplicating the lines of the shallow network program \ref{alg:shallow}.
Theorem \ref{thm:main} yields the expected estimate:
\begin{equation}
\W_p \left(\left(\frac{y^1}{\sqrt{d_{\rm{out}}}}, \frac{y^2}{\sqrt{d_{\rm{out}}}}\right), \overline{\mathcal G}\right)\le \frac{\scc}{\sqrt{n}},
\end{equation}
where $\overline{\mathcal G}= \bra{\mathcal G^1,\mathcal G^2}$ with $\mathcal G^j$ the NNGP  for the shallow architecture applied to the inputs $x^j$, for $j=1,2$.

\begin{algorithm}
\floatname{algorithm}{$\netsor$ program}
\caption{Two-input shallow network}
\label{alg:two_inputs}
\begin{algorithmic}
\Require $\sx^1, \sx^2: \sH(d)$
\Require $\sW^{(0)}: \sA(n, d)$
\Require $\sW^{(1)}: \sA(d_{\rm{out}},n)$
\State \algline $\sg^1 = \sW^{(0)} \sx^1: \sG(n)$ \Comment{\matmul}
\State \algline $\sg^2 = \sW^{(0)} \sx^2: \sG(n)$ \Comment{\matmul}
\State \algline $\sh^1 = \phi( \sg^1): \sH(n)$ \Comment{\nonlin}
\State \algline $\sh^2 = \phi( \sg^2): \sH(n)$ \Comment{\nonlin}
\State \algline $\sy^1 = \sW^{(1)} \sh^1: \sG(d_{\rm{out}})$ \Comment{\matmul}
\State \algline $\sy^2 = \sW^{(1)} \sh^2: \sG(d_{\rm{out}})$ \Comment{\matmul}
\end{algorithmic}
\end{algorithm}

\begin{figure}[h]
\centering
\begin{tikzpicture}[
  font=\small,
  x=2.8cm,
  y=1.35cm,
  var/.style={rectangle, draw, rounded corners, minimum height=7mm, inner sep=4pt, align=center},
  edge/.style={->, thick, >=Latex},
  nonlin/.style={->, thick, dashed, >=Latex}
]
\node[var] (x1) at (0,0) {$\sx^1:\sH(d)$};
\node[var] (g1) at (1,0) {$\sg^1:\sG(n)$};
\node[var] (h1) at (2,0) {$\sh^1:\sH(n)$};

\node[var] (x2) at (0,-1.5) {$\sx^2:\sH(d)$};
\node[var] (g2) at (1,-1.5) {$\sg^2:\sG(n)$};
\node[var] (h2) at (2,-1.5) {$\sh^2:\sH(n)$};

\node[var] (y1) at (3,0) {$\sy^1:\sG(d_{\rm out})$};
\node[var] (y2) at (3,-1.5) {$\sy^2:\sG(d_{\rm out})$};

\draw[edge] (x1) -- node[above] {$\sW^{(0)}$} (g1);
\draw[edge] (x2) -- node[above] {$\sW^{(0)}$} (g2);

\draw[nonlin] (g1) -- node[above] {$\phi$} (h1);
\draw[nonlin] (g2) -- node[above] {$\phi$} (h2);

\draw[edge] (h1) -- node[above] {$\sW^{(1)}$} (y1);
\draw[edge] (h2) -- node[above] {$\sW^{(1)}$} (y2);

\node[left=0.18cm of x1] {\scriptsize input 1};
\node[left=0.18cm of x2] {\scriptsize input 2};
\node[below=0.18cm of y1] {\scriptsize output 1};
\node[below=0.18cm of y2] {\scriptsize output 2};
\end{tikzpicture}
\caption{A shallow network evaluated on two inputs. The same $\sA$-vars are reused for the two
inputs, represented by repeated edge labels $\sW^{(0)}$ and $\sW^{(1)}$. The width $n$
is hidden and is common to both branches.}
\label{fig:program-graph-two-inputs}
\end{figure}

\subsection{Extensions}
\label{subsec:extensions-kernel}

The $\netsorpl$ language introduced in \cite{yang1} allows rather general scalar-valued operations ($\moment$) yielding additional parameters that can be used in functions of $\nonlin$ lines. This extension covers in particular attention layers, where scalar variables are used mainly to store empirical kernels, and these kernels are then fed into scalar nonlinearities such as the softmax. In this section, we describe a similar extension, denoted here by $\netsork$, that can be obtained with minimal modification of our arguments, but is still sufficient to cover attention and normalization of layers.
Our main result Theorem \ref{thm:main} extends naturally to this extended language, effectively covering attention-based architectures.

\subsubsection*{$\netsork$ programs}

A $\netsork$ program is a $\netsor$ program, augmented with $\sC$-variables: in the finite-width execution they are real-valued random variables, while in the infinite-width execution they are deterministic constants. In addition to the usual $\matmul$ and $\nonlin$ instructions, we allow the following operations:

\begin{itemize}
\item \emph{Kernel variables.} If $\sh,\sh':\sH(n)$, then
\[
(\kernel)\qquad
\ss:=\sh^\star\sh':\sC .
\]
\item \emph{Nonlinearities with scalar parameters.} If $\phi:\R^k\times\R^u\to\R$, if $\sh_1,\ldots,\sh_k:\sH(n)$ and $\ss_1,\ldots,\ss_u:\sC$, then
\[
(\nonlinpl)\qquad
\sh:=\phi(\sh_1,\ldots,\sh_k;\ss_1,\ldots,\ss_u):\sH(n),
\]
where $\phi$ is to be interpreted as acting componentwise on the $\sH$-vars.
\item \emph{Scalar maps.} If $\psi:\R^u\to \R$ and $\ss_1,\ldots,\ss_u:\sC$, then
\[
(\scal)\qquad
\ss:=\psi(\ss_1,\ldots,\ss_u):\sC .
\]
\end{itemize}

\subsubsection*{Extended program execution}

The execution rules for $\netsork$ naturally extend those of $\netsor$. Keeping $\matmul$ and $\nonlin$ unchanged, the execution of a $\kernel$ line $\ss=\sh^\star\sh'$ is defined in finite-width interpretation as $s=\frac1n h^\top h'$, and in infinite-width as $\bar s=\bar h\ldot \bar h'$. The $\scal$ and $\nonlinpl$ instructions are executed as componentwise compositions. Moreover, we assume that input $\sC$-variables, if present, are to be executed as deterministic constants. 

Hence, every $\sC$-var generated by a $\netsork$ program is a measurable function of the previously generated $\sH$-vars. In particular, scalar variables do not introduce additional observations of the weights beyond those already encoded by the filtration. Consequently, the conditional Gaussian formulas for $\matmul$ lines remain identical to those of the base $\netsor$ language.

We impose the following regularity assumption on $\netsork$ programs: functions $\phi$ appearing in $\nonlinpl$ lines satisfy local Lipschitz estimate with linear growth. Namely, there exists a constant $C<\infty$ such that, for all $x,y\in\R^k$ and $s,t\in\R^u$,
\begin{equation}
\label{eq:separated-linear-lip}
|\phi(x;s)-\phi(y;t)|
\le
C(1+\nor{s}+\nor{t})\,\nor{x-y}
+
C(1+\nor{x}+\nor{y})\,\nor{s-t}.
\end{equation}

\begin{remark}[Relation with Yang's $\moment$ instruction]
\label{rmk:netsork-vs-moment}
The distinction between $\netsork$ and the $\netsorpl$ language of \cite{yang2} is mainly analytic rather than algebraic. In Yang's formulation, scalar variables are generated by a general $\moment$ instruction of the form
\[
\textsc{(Moment)} \quad 
\ss
:=
\frac1n\sum_{i=1}^n
\psi\bigl(\sg^1_i,\ldots,\sg^k_i;\ss_1,\ldots,\ss_u\bigr)
:\sC .
\]
Whenever the componentwise quantity can itself be represented as the $i$-th coordinate of an $\sH(n)$-var, the corresponding moment can be written as a kernel against the constant-one vector, $\mathbf 1_n:=(1,\ldots,1)\in \R^n$. Thus, at a purely formal level, $\kernel$ together with scalar operations can reproduce any $\moment$ instructions.

The reason for isolating the restricted language $\netsork$ is that the quantitative argument requires stability estimates compatible with the norms controlled by the main theorem. Kernel variables are controlled directly by Corollary \ref{coro:kernel-lln}, while subsequent nonlinear maps are required to satisfy the Lipschitz condition \eqref{eq:separated-linear-lip}. By contrast, a fully general $\moment$ instruction with a locally Lipschitz integrand of polynomial growth would require additional empirical high-moment estimates. Hence, the restriction to $\kernel$ variables serves to state a quantitative theorem under transparent and checkable analytic assumptions.
\end{remark}

\subsubsection*{Extended program graphs}
\label{subsec:program-graphs-netsork}

For $\netsork$ programs, the directed program graph must also record scalar variables. In addition to $\sH$- and $\sG$-nodes, we include nodes of type $\sC$. A $\kernel$ line $\ss=\sh^\star\sh'$ contributes directed edges $\sh\longrightarrow \ss$, $\sh'\longrightarrow \ss$, which we draw as dotted arrows with the label $\star$. A $\scal$ line $\ss=\psi(\ss_1,\ldots,\ss_u)$ contributes directed edges $\ss_i\to \ss$, labelled by $\psi$. Finally, a scalar-parametric function contributes edges both from the $\sH$-variables and from the scalar variables to the output $\sh$.

The definition of a hidden width expands accordingly: a dimension $n$ is a hidden width if there exists an $\sH$-var of shape $n$ produced by a $\matmul$ line whose downstream descendants are later contracted as the parent of a further $\matmul$ line, or within a $\kernel$ line $\ss=\sh^\star\sh'$. We continue to denote by $\nhidd$ the collection of such widths. This convention is important for attention layers. The query and key vectors, denoted in the program by $\sq$ and $\sk$ respectively, have dimension $u \in \nhidd$, and the pre-softmax scores
\[\sw^{ij}=\sq^i{}^\star\sk^j, \quad \text{for each } i,j \le u,\]
are obtained by contracting over that dimension. Hence $u$ is a hidden width even if the corresponding variables are not later used as parents of another $\matmul$ line.

Dimensions not
belonging to $\nhidd$ are either considered input $\nin$, hence structural, or output $\nout$.

\begin{example}[Single-head attention]
\label{ex:attention-netsork}

The following program describes a single layer of single-head attention with deterministic kernel scaling. The nonlinear functions involved are 
\[
\layernorm_\varepsilon(x;\mu,\sigma):=\frac{x-\mu}{\sqrt{\sigma^2 + \varepsilon }},
\qquad
\softmax(z_1,\ldots,z_u)_j
:=
\frac{\exp(z_j)}{\sum_{m=1}^u\exp(z_m)}.
\]

Condition \eqref{eq:separated-linear-lip} is tailored to the attention output operation $(v^1,\ldots,v^u; a^{i1},\ldots,a^{iu}) \longmapsto \sum_{j=1}^u a^{ij}v^j$, which is not globally Lipschitz jointly in $(v,a)$, but satisfies the locally Lipschitz bound with $C=1$. For layer normalization $\layernorm_\varepsilon$, the nonlinearity also satisfies \eqref{eq:separated-linear-lip} with $C$ depending on $\varepsilon$.

\begin{figure}[htpb]
\centering
\begin{tikzpicture}[
  font=\small,
  x=2.8cm,
  y=1.35cm,
  var/.style={rectangle, draw, rounded corners, minimum height=7mm, inner sep=4pt, align=center},
  cvar/.style={rectangle, draw, rounded corners, minimum height=7mm, inner sep=4pt, align=center, fill=gray!10},
  edge/.style={->, thick, >=Latex},
  nonlin/.style={->, thick, dashed, >=Latex},
  scalar/.style={->, thick, dotted, >=Latex}
]
\node[var] (x)  at (0,0) {$\sx^i:\sH(d)$};
\node[var] (q)  at (1,0.9) {$\sq^i:\sG(n)$};
\node[var] (k)  at (1,0) {$\sk^j:\sG(n)$};
\node[var] (v)  at (1,-0.9) {$\sv^j:\sG(n)$};

\node[cvar] (w) at (2,0.45) {$\sw^{ij}:\sC$};
\node[cvar] (a) at (2,-0.45) {$\sa^{ij}:\sC$};

\node[var] (y)  at (2,-1.35) {$\sy^i:\sH(n)$};

\draw[edge] (x) -- node[above left] {$\sW^q$} (q);
\draw[edge] (x) -- node[above] {$\sW^k$} (k);
\draw[edge] (x) -- node[below left] {$\sW^v$} (v);

\draw[scalar] (q) -- node[above] {$\star$} (w);
\draw[scalar] (k) -- node[below] {$\star$} (w);

\draw[scalar] (w) -- node[right] {$\softmax$} (a);

\draw[nonlin] (a) -- node[right] {$\sum_j a^{ij}v^j$} (y);
\draw[edge] (v) -- (y);

\node[above=0.18cm of x] {\scriptsize input token};
\node[above=0.18cm of q] {\scriptsize query};
\node[right=0.15cm of w] {\scriptsize score};
\node[right=0.15cm of a] {\scriptsize attention weight};
\node[below=0.18cm of y] {\scriptsize attention output};
\end{tikzpicture}
\caption{Schematic program graph for a single-head attention layer in $\netsork$. Solid arrows are $\matmul$ operations labelled by $\sA$-vars, dotted arrows indicate kernel/scalar operations, and dashed arrows indicate scalar-parametric nonlinearities. The width $n$ is produced by the projections and contracted in the kernel scores $\sw^{ij}=\sq^i {}^\star\sk^j$.}
\label{fig:program-graph-attention}
\end{figure}

\begin{algorithm}[htpb]
\floatname{algorithm}{Netsor program}
\caption{A single attention layer in $\netsork$}
\label{alg:attention-netsork}
\begin{algorithmic}
\Require $\{\sx^i\}_{i=1}^u:\sH(d)$
\Require $\{\mu^i,\sigma^i\}_{i=1}^u:\sC$ \Comment{input LayerNorm statistics, with $\sigma^i>0$}
\Require $\sW^q,\sW^k,\sW^v:\sA(n,d)$

\Statex \Comment{1. Layer normalization}
\For{$i=1,\ldots,u$}
  \State \algline $\tilde{\sx}^i:=\layernorm_\varepsilon(\sx^i;\mu^i,\sigma^i):\sH(d)$ \Comment{$\nonlinpl$}
\EndFor

\Statex \Comment{2. Projections}
\For{$i=1,\ldots,u$}
  \State \algline $\sq^i:=\sW^q\tilde{\sx}^i:\sG(n)$ \Comment{$\matmul$}
  \State \algline $\sk^i:=\sW^k\tilde{\sx}^i:\sG(n)$ \Comment{$\matmul$}
  \State \algline $\sv^i:=\sW^v\tilde{\sx}^i:\sG(n)$ \Comment{$\matmul$}
\EndFor

\Statex \Comment{3. Pre-softmax scores}
\For{$i,j=1,\ldots,u$}
  \State \algline $\sw^{ij}:=\sq^i {}^\star\sk^j:\sC$ \Comment{$\kernel$}
\EndFor

\Statex \Comment{4. Softmax}
\For{$i=1,\ldots,u$}
  \For{$j=1,\ldots,u$}
    \State \algline $\sa^{ij}:=\softmax(\sw^{i1},\ldots,\sw^{iu})_j:\sC$ \Comment{\scal}
  \EndFor
\EndFor

\Statex \Comment{5. Attention output}
\For{$i=1,\ldots,u$}
  \State \algline $\sy^i:=\sum_{j=1}^u \sa^{ij}\sv^j:\sH(n)$ \Comment{$\nonlinpl$}
\EndFor
\end{algorithmic}
\end{algorithm}

For readability, Figure~\ref{fig:program-graph-attention} displays only the dependencies associated with a fixed query token $i$ and key/value token $j$; the full attention layer is obtained by repeating the same operation for all $i,j\le u$. Note that query and key vectors have dimension $n$, and the pre-softmax score is obtained by contracting over that dimension. Hence, $n$ is a hidden width even if the variables are not later used as parents of another $\matmul$ instruction.

Successive iterations of the attention block in Algorithm \ref{alg:attention-netsork} make $\layernorm$ act on non-input $\sH$-vars. In this case the statistics $\mu$ and $\sigma$ can be computed via the new instructions with:
\begin{alignat*}{3}
    &(\kernel)   &\qquad \mu      &= \sh^\star \mathbf 1_n &&\qquad\colon \sC,\\
    &(\nonlinpl) &\qquad \sh_c    &= \sh - \mathbf \mu     &&\qquad\colon \sH(n),\\
    &(\kernel)   &\qquad \sigma^2 &= \sh_c^\star\sh_c      &&\qquad\colon \sC,\\
    &(\scal)     &\qquad \sigma   &= \sqrt{\sigma^2}       &&\qquad\colon \sC.
\end{alignat*}
\end{example}

\begin{remark}
With the $1/n$ normalization in the $\kernel$ rule, the attention scores have deterministic infinite-width limits. This is the law-of-large-numbers scaling. A different scaling, for instance the fluctuation scaling $1/\sqrt{n}$ for centered kernels as the one used in \cite{sakai2026infinitewidthlimitsingleattention}, would lead to Gaussian limits for the scalar variables and requires a quantitative CLT rather than the kernel LLN used here.
\end{remark}

\subsubsection*{Extended convergence result}

With the above formalization, we state the extension of Theorem~\ref{thm:main} to $\netsork$ programs,
under an explicit non-degeneracy assumption for $\sG$-vars in the infinite-width limit.


\begin{theorem}
\label{thm:main-netsork}
Consider a $\netsork$ program such that nonlinearities in $\nonlin$, $\scal$ are Lipschitz and those in $\nonlinpl$ satisfy the local Lipschitz bound \eqref{eq:separated-linear-lip}. Let
\[
\big(h^{(\ell)}\big)_{\ell=1,\ldots ,H}, \big(s^{(j)}\big)_{j=1,\ldots ,S}, \qquad \big(\bar h^{(\ell)}\big)_{\ell=1,\ldots ,H}, \big(\bar s^{(j)}\big)_{j=1,\ldots ,S},
\]
be the finite-width and infinite-width executions of its $\sH$-vars and $\sC$-vars,
where $H$ and $S$ denote, respectively, the number of $\sH$-vars and $\sC$-vars defined in the program,
and assume that its $\sG$-vars are jointly non-degenerate Gaussians in the infinite-width execution. Then, for every $p\ge 1$, there exists a constant $\scc<\infty$, depending only on $p$ and on the structural constants of the program, such that
\begin{align}
  \label{eq:main-netsork}
  \begin{split}
\W_p&\left( \left( \left( \frac{ h^{(\ell)} } {\sqrt{n_\ell} }  \right)_{\ell=1,\ldots ,H}, \left(s^{(j)} \right)_{j=1,\ldots ,S} \right), \left( \left( \frac{ \bar h^{(\ell) }}{\sqrt{n_\ell}} \right)_{\ell=1,\ldots ,H}, \big(\bar s^{(j)}\big)_{j=1,\ldots ,S}\right) \right)
\\
&\quad \le
\scc\sum_{m \in \nhidd } \frac1{\sqrt{m}}.
  \end{split}
\end{align}
\end{theorem}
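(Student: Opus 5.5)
We prove Theorem~\ref{thm:main-netsork} by induction on the lines of the program, following the scheme used for Theorem~\ref{thm:main}. The difference is that the inductive hypothesis now covers $\sH$-vars and $\sC$-vars jointly.

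\textbf{Inductive hypothesis.} For the program truncated at line $r-1$, there is a coupling of the finite- and infinite-width executions such that
\[
\Bigl\|\tfrac{1}{\sqrt{n_\ell}}\|h^{(\ell)}-\bar h^{(\ell)}\|\Bigr\|_{L^q}
\quad\text{and}\quad
\|s^{(j)}-\bar s^{(j)}\|_{L^q}
\]
are all bounded by $\scc\sum_{m\in\nhidd}m^{-1/2}$. We ask for this for every $q\ge1$, not just $q=p$, because the product structure in \eqref{eq:separated-linear-lip} forces Hölder splittings that raise the moment order.

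We also carry uniform moment bounds
\[
\sup\E\bigl[\|h/\sqrt n\|^q\bigr] + \E\bigl[|s|^q\bigr] \le \scc .
\]
These follow inductively from Gaussian moments at $\matmul$ lines and linear growth at the other lines. Note that $\bar s$ is deterministic.

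\textbf{Line types.} We handle each type of line $r$ in turn.

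\emph{$\kernel$ lines.} For $\ss=\sh^\star\sh'$ we split
\[
s-\bar s = \Bigl(\tfrac1n h^\top h' - \tfrac1n \bar h^\top \bar h'\Bigr) + \Bigl(\tfrac1n\bar h^\top\bar h' - \bar h\ldot\bar h'\Bigr).
\]
The first term is at most
\[
\tfrac{1}{\sqrt n}\|h-\bar h\|\,\tfrac{1}{\sqrt n}\|h'\| + \tfrac{1}{\sqrt n}\|\bar h\|\,\tfrac{1}{\sqrt n}\|h'-\bar h'\| ,
\]
which is controlled by Cauchy--Schwarz, the inductive coupling bound and the moment bounds.

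The second term is a sum of $n$ i.i.d.\ centered coordinates with all moments finite. Here we must check that, in the infinite-width execution, the coordinates of $\bar h$ stay i.i.d.\ even though the $\sC$-vars enter through $\nonlinpl$. This holds because $\bar s$ is deterministic. The Marcinkiewicz--Zygmund (or Rosenthal) inequality then gives a bound of order $n^{-1/2}$, and $n\in\nhidd$ by the extended definition of hidden width.

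\emph{$\scal$ lines.} These follow directly from the Lipschitz property of $\psi$.

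\emph{$\nonlinpl$ lines.} We apply \eqref{eq:separated-linear-lip} componentwise and sum over the $n$ coordinates. This gives
\[
\tfrac{1}{\sqrt n}\|h-\bar h\| \le C\bigl(1+|s|+|\bar s|\bigr)\tfrac{1}{\sqrt n}\|x-\bar x\| + C\bigl(\tfrac{1}{\sqrt n}\|x\|+\tfrac{1}{\sqrt n}\|\bar x\|+1\bigr)|s-\bar s| .
\]
Hölder's inequality, together with the moment bounds and the inductive hypothesis at higher $q$, closes this case.

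\emph{$\matmul$ lines.} As noted after the definition of extended execution, the $\sC$-vars are measurable functions of earlier $\sH$-vars, so the conditional law given the past filtration is unchanged. We therefore invoke Lemmas~\ref{lem:structure-lemma-finite-width} and \ref{lem:structure-lemma-infinite-width}. The conditional means and square-root covariances are smooth functions of the Gram matrices of the parents sharing the same $\sA$-var. We couple the two conditional laws with the same standard Gaussian noise.

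The hypothesis that the $\sG$-vars are jointly non-degenerate plays the role of the non-degeneracy reduction used in Theorem~\ref{thm:main}. The representative-program reduction may not commute with $\kernel$ and $\nonlinpl$ lines, so here non-degeneracy is assumed outright. Under this assumption the infinite Gram matrix is invertible and these functions are locally Lipschitz near it.

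The Gram errors are bounded by the $\kernel$ step above, applied to kernels of earlier variables without creating new lines. Off the event where the finite Gram matrix is far from the limit, we use a Markov/Chebyshev bound on that event together with crude moment bounds. This gives an error of any polynomial order in $n^{-1/2}$ and hence is harmless.

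\textbf{Main obstacle.} The hardest step is the $\matmul$ estimate. There the conditional mean involves the inverse of a random Gram matrix, so we need either local Lipschitz control on a high-probability set plus a tail argument, or moment bounds on the inverse, to keep errors in $L^q$. We would first try the good-event/bad-event split with high-moment kernel LLN bounds. The secondary difficulty is the bookkeeping of moment orders across $\nonlinpl$ lines, which is resolved by proving the induction for all $q$ at once, with constants depending on $q$ and $L$.
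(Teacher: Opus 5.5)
Your proposal is correct and follows the paper's proof. The shared ingredients are: a line-by-line coupling with shared Gaussian noise at $\matmul$ lines, using the structure lemmas with non-degeneracy assumed outright and a good/bad-event split as in Lemma~\ref{lem:concentration}; $\kernel$ lines handled by a coupling term plus an i.i.d.\ Rosenthal sampling term, which the paper packages as the kernel LLN of Corollary~\ref{coro:kernel-lln}; and $\nonlinpl$ lines via \eqref{eq:separated-linear-lip} and H\"older. Running the induction for all moment orders at once is exactly the bookkeeping the paper leaves implicit; keep in mind that the conditional mean $G^{(r-1)}_{\sW}H^{(r-1),+}_{\sW}h$ depends on the earlier $\sG$-vars and not only on Gram matrices, so its stability also needs the inductive coupling of those vectors.
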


\begin{remark}
In the basic $\netsor$ setting we used an equivalence reduction to remove degenerate
$\matmul$ lines. For $\netsork$ this reduction is more delicate:
a scalar kernel variable is deterministic in infinite-width but random at finite width, so a
linear relation that holds identically in the infinite-width execution need not hold identically
in the finite-width execution. Specifically, the pointwise lifting of infinite-width linear identities to finite-width linear identities utilized in the proof of Proposition \ref{prop:matmul-reduction} fails when these identities depend on the fluctuations of empirical scalar kernels. For this reason we state the extension under an explicit
non-degeneracy assumption for $\sG$-vars in the infinite-width execution.
\end{remark}

The proof of this theorem requires an extension of our inductive line-by-line coupling arguments to encompass scalar variable executions and is given in Subsection~\ref{sec:proof-netsork}.


\section{Technical lemmas}\label{sec:tech}

In this section we collect some technical results that are used in the proof of Theorem \ref{thm:main}.
We begin by setting some notation:
$\|\cdot\|,\|\cdot\|_{\op}$ and $\|\cdot\|_{L^p}$ denote the Euclidean norm, operator norm and $L^p$ norm, respectively.
Unless specified, the norm inner to the $L^p$ norm is the Frobenius norm.
As anticipated in the introduction, $\scc$ denotes a positive constant that may change from line to line.
Let $S,T$ in $\N$. 
For any block matrix
  \[
    M=\begin{pmatrix} A & B \\ C & D \end{pmatrix} \in \R^{(S+T)\times (S+T)},
  \]
we define
its generalized Schur complement
  \begin{equation}
    \label{eq:schur}
        M/D := A-BD^+C,
  \end{equation}
  where the superscript $+$ denotes the Moore-Penrose pseudoinverse
  \begin{equation}
  \label{eq:moore-penrose}
  M^+ = \lim_{\delta \downarrow 0} (M^\top M + \delta \Id_T)^{-1} M^\top,
  \end{equation}
for any $M \in \R^{S\times T}$.
If the block matrix $M$ as above is symmetric positive semidefinite, then so is $M/D$. 
This extension matches the classical Schur complement when $D$ is invertible, and is the relevant form for Gaussian conditioning with singular covariance matrices, see Equation \eqref{eq:conditional-gaussian} below.

We also recall Greville's formula for the pseudoinverse of a rank-one update of a matrix \citep{Udwadia1997, greville}.
Let $B \in \R^{S\times T}$ be partitioned as $B = [A,a]$, where $A$ consists of the first $T-1$ columns of $B$ and $a$ is the last column.
Then the pseudoinverse of $B$ is given by
\begin{equation}
  \label{eq:greville}
  B^+ = \begin{pmatrix} A^+ - A^+ a b \\ b \end{pmatrix},
\end{equation}
where 
\begin{equation}
  \label{eq:greville-b}
  b = \begin{cases} (a - A A^+a)^+ & \text{if } (\Id_S - A A^+) a \neq 0, \\ 
     (1+a^\top( A^\top A)^+ a)^{-1} a^\top( A^\top A)^+ & \text{otherwise.}
  \end{cases}
\end{equation}
This formula adapts the well known Woodbury-Sherman-Morrison \cite{sherman-morrison} formula to rectangular  matrices and their pseudoinverses.

Furthermore, the generalized Schur complement, the pseudoinverse, and Greville's formula can be understood from an operator-theoretic point of view, see e.g.\ \cite{Ando2005}. Whenever $B$ represents a linear operator between two finite-dimensional Hilbert spaces $\mathcal H$ and $\mathcal H'$,
its pseudoinverse coincides with the matrix pseudoinverse as long as orthonormal bases of $\mathcal H$ and $\mathcal H'$ are chosen to represent the operator,
where $A^\top$ in \eqref{eq:moore-penrose} becomes the adjoint operator to $A$, defined as 
\[
\langle Ax, y \rangle_{\mathcal H'} = \langle x, A^\top y\rangle_{\mathcal H}, \quad \text{for all }x \in \mathcal H \text{ and } y \in \mathcal H',
\]
whenever $A$ is a linear operator from $\mathcal H$ to $\mathcal H'$.
Note that the pseudoinverse explicitly depends on the inner products of $\mathcal H$ and $\mathcal H'$.
Under these considerations, Greville's formula \eqref{eq:greville} can be applied to finite-dimensional Hilbert spaces.

\subsection{Random variables and moment bounds}
\label{subsec:mf-moment-bounds}

Given $p\ge 1$ and a probability space $(\Omega, \cA, \PP)$, we write $X \in L^p(\Omega, \cA, \PP; \R^S)$ to denote a random variable $X: \Omega \to \R^{S}$ such that 
\[\nor{X}_{L^p} := \EE\sqa{ \nor{X}^p}^{1/p} < \infty.\]
Since we are mostly interested in convergence in law, we often do not specify the underlying probability space for random variables, and also write for brevity $X \in \R^S$ instead of $X: \Omega \to \R^S$,  $X \stackrel{law}{=} Y$ to denote that two random variables $X$ and $Y$ have the same law.

The following elementary concentration result will be paramount for proving Theorem \ref{thm:main}.

\begin{lemma}\label{lem:concentration}
 Let $p \ge 1$, let $X$ be a random variable with values in $\R^S$ and let $f: \R^S \to \R^T$ be differentiable at $\bar{x} \in \R^S$. 
 Then, for some constant $\cost = \cost(f, \bar{x})<\infty$ it holds
 \begin{equation}
  \nor{ f(X) - f(\bar{x} )}_{L^p} \le \cost \bra{ \nor{X - \bar x}_{L^p} + \bra{  \nor{f(X)}_{L^{2p}}+ \nor{f(\bar{x})}} \nor{X- \bar x }_{L^{2p}} }.
 \end{equation}
\end{lemma}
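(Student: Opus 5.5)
We split the probability space according to whether $X$ is close to $\bar x$ or not; differentiability controls the first region and a Chebyshev/Markov estimate with Cauchy--Schwarz controls the second. Since $f$ is differentiable at $\bar x$, there exist $\delta>0$ and $\cost_0<\infty$ (for instance $\cost_0=\|Df(\bar x)\|_{\op}+1$) such that
\[
\|f(x)-f(\bar x)\|\le \cost_0\|x-\bar x\| \qquad\text{whenever } \|x-\bar x\|\le\delta .
\]
This is the only place differentiability enters, and it yields a purely local Lipschitz bound. Let $A:=\{\|X-\bar x\|\le\delta\}$. We write
\[
f(X)-f(\bar x)=\bigl(f(X)-f(\bar x)\bigr)\mathbf 1_A+\bigl(f(X)-f(\bar x)\bigr)\mathbf 1_{A^c},
\]
and estimate the two terms separately in $L^p$ using Minkowski's inequality.

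On $A$, the local bound gives directly
\[
\bigl\|(f(X)-f(\bar x))\mathbf 1_A\bigr\|_{L^p}\le \cost_0\|X-\bar x\|_{L^p}.
\]
On $A^c$, Hölder's inequality (Cauchy--Schwarz applied to $\|\cdot\|^p\mathbf 1_{A^c}$) gives
\[
\bigl\|(f(X)-f(\bar x))\mathbf 1_{A^c}\bigr\|_{L^p}\le \|f(X)-f(\bar x)\|_{L^{2p}}\,\PP(A^c)^{1/(2p)}.
\]
The first factor is at most $\|f(X)\|_{L^{2p}}+\|f(\bar x)\|$. For the second, Markov's inequality with exponent $2p$ gives
\[
\PP(A^c)\le \delta^{-2p}\,\|X-\bar x\|_{L^{2p}}^{2p}, \qquad\text{hence}\qquad \PP(A^c)^{1/(2p)}\le \delta^{-1}\|X-\bar x\|_{L^{2p}}.
\]

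Combining the two regions, the claim follows with $\cost=\max(\cost_0,\delta^{-1})$, which depends only on $f$ and $\bar x$ through the differentiability modulus.

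There is no real obstacle in this argument. The only points requiring care are that differentiability at a single point gives only a local linear bound, which is why the far region must be handled by moments, and the choice of exponents in Hölder's inequality so that the $L^{2p}$ norms match the statement. If $f(X)\notin L^{2p}$, the right-hand side is infinite and the inequality holds trivially.
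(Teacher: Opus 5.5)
Your proposal is correct and follows essentially the same argument as the paper. You obtain a local Lipschitz bound from differentiability, split on the event $\{\|X-\bar x\|\le\delta\}$, use H\"older together with Markov at exponent $2p$ on the complement, and take the constant $\max(\cost_0,\delta^{-1})$; this is exactly the paper's proof.
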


\begin{proof}
By differentiability of $f$ at $\bar{x}$, there exists $\eps>0$ and $M < \infty $ such that, if $\nor{x-\bar{x}}\le \eps$, then 
\[\nor{f(x) - f(\bar{x})} \le M \nor{x-\bar{x}}.\]
Consider the event $A =: \cur{ \nor{X- \bar x } \le \eps}$, so that by Markov inequality:
\[ \nor{ \mathbbm{1}_{A^c} }_{L^{2p}} \le \eps^{-1} \nor{X- \bar x}_{L^{2p}}.\]
Then, 
  \begin{equation*} \begin{split} \nor{ f(X) - f(\bar{x}) }_{L^p} & \le \nor{ \bra{ f(X) - f(\bar{x})} \mathbbm{1}_A }_{L^p} + \nor{ \bra{ f(X) - f(\bar{x})} \mathbbm{1}_{A^c} }_{L^p}  \\
  & \le  M \nor{ X-\bar{x} }_{L^p} + \nor{ \bra{\nor{f(X)} + \nor{f(\bar{x})} } \mathbbm{1}_{A^c} }_{L^p}  \\
   & \le M \nor{ X-\bar{x} }_{L^p}  + \bra{ \nor{ f(X)}_{L^{2p}}  + \nor{ f(\bar{x})} } \nor{ \mathbbm{1}_{A^c}}_{L^{2p}}\\
      & \le \cost \bra{ \nor{ X-\bar{x} }_{L^p}  + \bra{ \nor{ f(X)}_{L^{2p}}  + \nor{ f(\bar{x})} }  \nor{ X-\bar{x} }_{L^{2p} } }
  \end{split} \end{equation*}
 with $\cost := \max\cur{M, \eps^{-1}}$.
\end{proof}

When considering matrix valued random variables $W \in \R^{S \times T}$, we will say that the covariance $\Sigma_W$ is \emph{separable} if there exist $L\in \R^{S\times S}$, $R\in \R^{T \times T}$ such that
\[ \Sigma_W = L \otimes R. \]

We will also rely on Rosenthal's inequality to bound the moments of sums of independent, centered random variables \cite{Rosenthal1970}.
Let $p \ge 2$, and let $X_1, \ldots, X_n \in L^p(\Omega; \R^S)$ be such random variables.
Then, there exists a constant $\scc_p$ depending only on $p$ such that
 \begin{equation}
  \label{eq:rosenthal}
\nor{ \sum_{i=1}^n X_i }_{L^p}^p \le \scc_p \max\cur{ \sum_{i=1}^n \nor{X_i}_{L^p}^p, \bra{ \sum_{i=1}^n \nor{X_i}_{L^2}^2 }^{p/2} }.
 \end{equation}

We now record a uniform moment bound for the infinite-width execution of $\netsor$ programs. 

\begin{lemma}[Uniform infinite-width moment bounds]
\label{lem:infinite-width-finite-moments}
Consider a $\netsor$ program such that all activations $\phi$ appearing in
$\nonlin$ lines have polynomial growth. Then, for every $p\ge 1$, there exists a
constant $\scc<\infty$, depending only on $p$ and on the structural constants of the
program, such that for every $\sH$-var $\sh:\sH(n)$ generated by the program, every
coordinate $i\le n$, and every infinite-width execution,
\[
\EE\big[|\bar h_i|^p\big]\le \scc .
\]
In particular, $\scc$ is independent of the hidden widths.
\end{lemma}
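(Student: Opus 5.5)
The plan is an induction over the lines of the program, tracking a single scalar quantity for each $\sH$-var: the common $p$-th moment of its coordinates. Input $\sH$-vars are deterministic vectors in $\cX$ and are structural, so their coordinates are bounded by a structural constant. The first thing to establish is that, in the infinite-width execution, the coordinates of every $\sH$-var $\bar h:\sH(n)$ are identically distributed.

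Exchangeability follows by induction on the lines. By \eqref{eq:sigma-gg-mf}, every $\sG$-var $\bar g:\sG(n)$ produced by a $\matmul$ line has cross-covariances with the other $\sG$-vars of the same shape of the form $c\,\Id_n$, where $c$ is a scalar. Consequently the family of all $\sG$-vars of shape $n$ decomposes into $n$ i.i.d. coordinate blocks $(\bar g^{(1)}_i,\ldots,\bar g^{(k)}_i)_{i\le n}$, each a centered Gaussian vector with covariance matrix $C=(c_{jk})$. These covariance entries are $c_{jk}=\bar h\ldot\bar h'$. A $\nonlin$ line acts coordinatewise, so $\bar h_i$ is a fixed function of the $i$-th block (composed through earlier $\nonlin$ lines). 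Hence the law of $\bar h_i$ does not depend on $i$, and neither does any moment. Moreover $\bar h\ldot\bar h'=\EE[\bar h_1\bar h'_1]$ is a width-independent quantity, determined recursively by the program structure. For the input $\sH$-vars the analogous quantity $x^\top x'/d$ is fixed by $\cX$.

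Next comes the moment bound itself, by induction on the line index $r$. Suppose every $\sH$-var defined before line $r$ has coordinate moments of all orders bounded by structural constants. Consider first a $\matmul$ line $\sg=\sW\sh$. Its coordinate $\bar g_i$ is $\mathcal N(0,\bar h\ldot\bar h)$, and by Cauchy–Schwarz $\bar h\ldot\bar h=\EE[\bar h_1^2]\le\scc$. Gaussian moments then give $\EE|\bar g_i|^p\le \scc_p(\bar h\ldot \bar h)^{p/2}\le\scc$. Now consider a $\nonlin$ line $\sh=\phi(\sh_1,\ldots,\sh_m)$ with $|\phi(y)|\le C(1+\|y\|^q)$. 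Then $\EE|\bar h_i|^p\le \scc\,(1+\sum_j\EE|\bar h_{j,i}|^{pq})$, which is bounded by the inductive hypothesis applied at the higher exponent $pq$.

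The only real obstacle is to confirm that the constants never involve $n$. This holds because each step uses only $\phi$, $p$, and the recursively defined limiting covariances $\bar h\ldot\bar h'$, which are finite-dimensional and width-free. Since a program has finitely many lines, the exponents needed along the way, of the form $p\,q^{L}$, stay finite. The induction therefore closes with a constant depending only on $p$ and on the structural constants.
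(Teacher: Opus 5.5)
Your proof is correct and is essentially the paper's argument: induction over lines, with the \textsc{MatMul} step handled by Gaussian moments of the variance $\bar h\ldot\bar h$ (bounded via the inductive hypothesis at exponent $2$) and the \textsc{NonLin} step handled by polynomial growth plus the hypothesis at exponent $pq$. The exchangeability detour is not needed. The paper simply bounds $\bar h\ldot\bar h=\frac1m\sum_r\EE[|\bar h_r|^2]$ using the uniform per-coordinate hypothesis, which also avoids your overclaim that every $\sH$-var has identically distributed coordinates; that claim fails for input-dimension variables, which you only partly set aside. The step you attribute to Cauchy--Schwarz is just the inductive hypothesis with $p=2$.
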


\begin{proof}
We prove the claim by induction over the program lines. Since the program is finite, it
is enough to show that each step propagates bounds depending only on the previous
bounds and on the structural data of the program, but not on the widths.

Input $\sH$-vars are deterministic. Hence, if $\sx:\sH(d)$ is an input, then
\[
|\bar x_i|^p\le \max_{x\in \X}\max_{1\le i\le d}|x_i|^p,
\]
which is finite and independent of the hidden widths.

Assume now that the bound has been proved for all $\sH$-vars available before a given
line.

If the line is of type $\matmul$, say $\sg=\sW\sh$ with $\sh:\sH(m)$ and
$\sg:\sG(n)$, then by the infinite-width execution rule each coordinate $\bar g_i$ is centered
Gaussian with variance
\[
\Var(\bar g_i)=\bar h\ldot \bar h
=
\frac1m\sum_{r=1}^m \EE[|\bar h_r|^2].
\]
By the induction hypothesis with exponent $p=2$,
\[
\Var(\bar g_i)\le \scc,
\]
uniformly in $m,n$ and in $i$. Therefore, by finiteness of Gaussian moments, there exists a positive constant $\scc_p$ with:
\[
\EE[|\bar g_i|^p]
\le \scc_p \big(\Var(\bar g_i)\big)^{p/2}
\le \scc
\]
This gives the desired uniform bound for the new $\sG$-var.

If the line is of type $\nonlin$, say
\[
\sh=\phi(\sh_1,\ldots,\sh_k),
\]
then, componentwise,
\[
\bar h_i=\phi(\bar h_{1,i},\ldots,\bar h_{k,i}).
\]
By polynomial growth of $\phi$, there exist constants $\scc,q<\infty$ such that
\[
|\phi(x_1,\ldots,x_k)|
\le
\scc\left(1+\sum_{r=1}^k |x_r|^q\right).
\]
Hence, for every $p\ge 1$,
\[
|\bar h_i|^p
\le
\scc_p\left(1+\sum_{r=1}^k |\bar h_{r,i}|^{pq}\right).
\]
Taking expectations and applying the induction hypothesis with exponent $pq$ gives
\[
\EE[|\bar h_i|^p]\le \scc,
\]
where the constant depends only on the previous moment bounds and on the polynomial-growth
constants of $\phi$, but not on the widths.
\end{proof}

\subsection{Conditional laws}\label{sec:weight-conditional-laws}

The proof of Theorem \ref{thm:main} relies on tracking the laws of the variables introduced by the program,
conditionally on the $\sigma$-algebra generated by the previously defined $\sH$-vars.
The key point is that in both finite-width and infinite-width executions, every newly created $\sG$-var at a
$\matmul$ line is conditionally Gaussian, and its conditional mean/covariance can be computed
recursively from earlier lines.

In this subsection we introduce such recursive formulas in Lemmas \ref{lem:structure-lemma-finite-width} and \ref{lem:structure-lemma-infinite-width}, but first we recall some definitions and results from theory of Gaussian variables.

\subsubsection*{Gaussian variables}
Let $S,T$ be natural numbers.
If $(X,Y)$ is jointly Gaussian with values in $\R^S\times\R^T$,
with mean $\mu_{X,Y} = (\mu_X, \mu_Y)$ and covariance
\[
\Sigma_{(X,Y)}
=
\begin{pmatrix} \Sigma_X & \Sigma_{XY} \\ \Sigma_{YX} & \Sigma_Y \end{pmatrix}
\in \R^{(S+T)\times (S+T)},
\]
then the marginal $Y$ is Gaussian and, conditionally on $Y=y$,
the law of $X$ is Gaussian with
\begin{equation}\label{eq:conditional-gaussian}
  \mu_{X|Y=y} = \mu_X + \Sigma_{XY}\Sigma_Y^{+}(y - \mu_Y),
  \qquad
  \Sigma_{X|Y=y} = \Sigma_{(X,Y)}/\Sigma_Y
               = \Sigma_{X} - \Sigma_{XY}\Sigma_Y^{+}\Sigma_{YX},
\end{equation}
where $\Sigma_Y^+$ denotes the Moore-Penrose pseudoinverse and
$\Sigma_{(X,Y)}/\Sigma_Y$ is the generalized Schur complement
\eqref{eq:schur}.
In particular, $y\mapsto\mu_{X|Y=y}$ is affine and
$y\mapsto\Sigma_{X|Y=y}$ is constant.
These properties also characterize joint Gaussianity \cite{rao1973linear}:

\begin{lemma}\label{lem:conditional-gaussian}
Let $(X, Y)$ be a random variable in $\R^{S}\times \R^{T}$ such that
$Y$ is Gaussian and, for a.e.\ $y\in\R^T$, the conditional law of $X$
given $Y=y$ is Gaussian with $y\mapsto\mu_{X|Y=y}$ affine and
$y\mapsto\Sigma_{X|Y=y}$ constant.
Then $(X,Y)$ is jointly Gaussian.
\end{lemma}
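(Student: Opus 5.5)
The plan is to characterize Gaussianity through the characteristic function. Write $m(y):=\mu_{X|Y=y}=a+By$, with $a\in\R^S$ and $B\in\R^{S\times T}$, and write $\Sigma:=\Sigma_{X|Y=y}$, which does not depend on $y$. By Cramér--Wold, or directly via characteristic functions, it suffices to show that
\[
\varphi(u,v):=\EE\sqa{e^{i(u^\top X+v^\top Y)}}
\]
is the exponential of a quadratic polynomial in $(u,v)$. Conditioning on $Y$ and using the Gaussian conditional law gives
\[
\EE\sqa{e^{iu^\top X}\mid Y}=\exp\bra{iu^\top(a+BY)-\tfrac12 u^\top\Sigma u}.
\]
This holds almost surely: it is true for a.e.\ $y$ with respect to Lebesgue measure, and it transfers to the law of $Y$ provided that law is absolutely continuous.

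This transfer is the main subtlety. If $\Sigma_Y$ is singular, the law of $Y$ is supported on an affine subspace, which is Lebesgue-null, so the phrase ``for a.e.\ $y$'' must be read with respect to the law of $Y$. I would state this interpretation explicitly, since it is the natural one for conditional laws, which are defined only up to $\PP_Y$-null sets. With that reading, the conditional identity above holds $\PP$-almost surely with no further argument.

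Next, apply the tower property:
\[
\varphi(u,v)=e^{iu^\top a-\frac12 u^\top\Sigma u}\,\EE\sqa{e^{i(B^\top u+v)^\top Y}}.
\]
Since $Y\sim\mathcal N(\mu_Y,\Sigma_Y)$, the last factor equals
\[
\exp\bra{i(B^\top u+v)^\top\mu_Y-\tfrac12 (B^\top u+v)^\top\Sigma_Y(B^\top u+v)}.
\]
The product is therefore the exponential of $i\times(\text{linear})-\tfrac12\times(\text{quadratic form})$ in $(u,v)$. The quadratic form is
\[
(u,v)^\top\begin{pmatrix}\Sigma+B\Sigma_YB^\top & B\Sigma_Y\\ \Sigma_YB^\top & \Sigma_Y\end{pmatrix}(u,v),
\]
and this matrix is positive semidefinite because $\Sigma\succeq0$ and the remaining part equals $\binom{B}{\Id}\Sigma_Y\binom{B}{\Id}^\top$. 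Hence $\varphi$ is the characteristic function of a Gaussian on $\R^{S+T}$, with mean $(a+B\mu_Y,\mu_Y)$ and the covariance matrix displayed above. By uniqueness of characteristic functions, $(X,Y)$ is jointly Gaussian.

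As a consistency check, the computed covariance gives $\Sigma_{XY}=B\Sigma_Y$, which matches \eqref{eq:conditional-gaussian} on the range of $\Sigma_Y$. No step is hard beyond the measure-theoretic interpretation of ``a.e.'' already discussed.
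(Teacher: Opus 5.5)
Your proof is correct, and it goes further than the paper: the paper gives no argument for this lemma and only refers to \cite{rao1973linear}. Your characteristic-function computation via the tower property is the natural self-contained proof, and it has two side benefits.

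First, it gives the joint law explicitly, with $\Sigma_{XY}=B\Sigma_Y$ and $\Sigma_X=\Sigma+B\Sigma_YB^\top$. Inserting these into \eqref{eq:conditional-gaussian} returns conditional covariance $\Sigma$, because $\Sigma_Y\Sigma_Y^+\Sigma_Y=\Sigma_Y$. It also returns conditional mean $a+By$ on the support of $Y$. So the lemma is exactly the converse of the conditioning formulas used in Subsection~\ref{sec:weight-conditional-laws}.

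Second, and more important, your treatment of ``a.e.'' corrects a real imprecision in the printed statement. Read literally as Lebesgue-a.e., the lemma is false when $\Sigma_Y$ is singular. Take $Y\equiv 0\in\R^T$, a degenerate Gaussian, and any non-Gaussian $X$. The kernel equal to the law of $X$ at $y=0$ and to $\mathcal N(0,\Id_S)$ for $y\neq 0$ is a valid regular conditional law. It is Gaussian, with constant (hence affine) mean and constant covariance, for Lebesgue-a.e.\ $y$. Yet $(X,Y)$ is not jointly Gaussian.

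Degenerate $\Sigma_Y$ is exactly the case the paper handles with pseudoinverses, so the $\PP_Y$-a.e.\ reading you adopt is the correct one. Under that reading, your tower-property step is simply the disintegration of the joint law along a regular conditional distribution of $X$ given $Y$.
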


The formulas \eqref{eq:conditional-gaussian} extend verbatim to
matrix-valued Gaussian variables, once covariances are interpreted
as $4$-index tensors.
Concretely, fix $S_1,S_2,T_1,T_2\in\N$ and let $(X,Y)$ be jointly
Gaussian with values in $\R^{S_1\times S_2}\times\R^{T_1\times T_2}$.
Viewing $X$ and $Y$ as vectors of lengths $S_1 S_2$ and $T_1 T_2$
respectively, all standard vector-Gaussian theory applies,
Regarding $X$ and $Y$ as matrices the cross-covariance
\[
  \Sigma_{XY}
  = \E\bigl[(X-\mu_X)\otimes(Y-\mu_Y)\bigr]
  \in \R^{S_1\times S_2\times T_1\times T_2}
\]
is now a $4$-index tensor, where $\otimes$ denotes the Kronecker product, that is, the outer product of
the two matrices regarded as vectors (i.e.\
$(\Sigma_{XY})_{i_1 i_2 j_1 j_2}
= \E[(X-\mu_X)_{i_1 i_2}(Y-\mu_Y)_{j_1 j_2}]$).
The products appearing in \eqref{eq:conditional-gaussian} are then
tensor contractions over the $Y$-indices: given additionally
$Z$ with values in $\R^{U_1\times U_2}$, the product
$\Sigma_{XY}\Sigma_{YZ}$
contracts the $(T_1,T_2)$ indices shared by
$\Sigma_{XY}\in\R^{S_1\times S_2\times T_1\times T_2}$ and
$\Sigma_{YZ}\in\R^{T_1\times T_2\times U_1\times U_2}$,
yielding an element of $\R^{S_1\times S_2\times U_1\times U_2}$.
Equivalently, identifying each matrix space with the corresponding
vector space via lexicographic ordering of indices, the tensor
contraction coincides with the usual matrix product on the
vectorized representatives, so \eqref{eq:conditional-gaussian}
holds.

In particular, given a Gaussian random matrix $W\in\R^{n\times m}$
and a deterministic vector $h\in\R^m$, the product $Wh$ is
Gaussian with
\[
  \mu_{Wh} = \mu_W h,
  \qquad
  \Sigma_{Wh} = \Sigma_W(hh^\top) \in \R^{n\times n},
\]
where the right-hand side uses the tensor contraction just described
(contracting the $m$-dimensional index of $\Sigma_W\in\R^{n\times m\times n\times m}$
against $hh^\top\in\R^{m\times m}$, yielding $\R^{n\times n}$).
The joint variable $(W, Wh) \in \R^{(n+1)\times m}$ is Gaussian, and
\eqref{eq:conditional-gaussian} gives
\[
  \mu_{W|Wh=g}
  = \mu_W + \Sigma_{W,Wh}\Sigma_{Wh}^{+}(g - \mu_W h),
  \qquad
  \Sigma_{W|Wh=g}
  = \Sigma_W - \Sigma_{W,Wh}\Sigma_{Wh}^{+}\Sigma_{Wh,W}.
\]
In the separable covariance case $\Sigma_W = L\otimes R$
(where $L\in\R^{n\times n}$, $R\in\R^{m\times m}$),
the constituent tensors factorize explicitly:
\[
  \Sigma_{W,Wh} = L\otimes(Rh),
  \qquad
  \Sigma_{Wh}   = L\,(h^\top R h),
  \qquad
  \Sigma_{Wh,W} = L\otimes(h^\top R).
\]
Substituting into the conditioning formulas yields
\begin{align}
  \label{eq:update_muW}
  \mu_{W|Wh=g}
  &= \mu_W + (g - \mu_W h)\,(h^\top R h)^{+}(Rh)^\top, \\
  \label{eq:update_RW}
  \Sigma_{W|Wh=g}
  &= L\otimes\bra{R - Rh\,(h^\top Rh)^{+}h^\top R}.
\end{align}
In particular, the separable structure of $\Sigma_W$ is preserved
under conditioning on $Wh=g$.

Recall that since $\Sigma_W$ is a covariance tensor $R$ is positive semidefinite. Let us notice that, since $h^\top Rh \ge 0$ is scalar, the pseudoinverse $(h^\top Rh)^+$ is actually the inverse if $h^\top Rh>0$ and $0$ otherwise. 
This is quite expected since in the latter case $Wh$ is constant and  conditioning on $Wh=g$ leaves the law of $W$ unchanged. 

Let us remark that the formulas above do not require invertibility of $L$. 
Indeed, if $L$ is not invertible, terms of the form $LL^+L$ appear in the Schur complement, but it holds $LL^+L=L$.

\subsubsection*{Finite-width execution}
To describe the conditional law of $\sG$-vars, it is useful to describe first the conditional law of $\sA$-vars. 
Indeed, each $\sA$-var $\sW: \sA(n,m)$ is initialized as a Gaussian random matrix $W\in\R^{n\times m}$ with separable covariance $\Sigma_W = \frac 1 m\Id_n \otimes \Id_m$. 
Given a $\netsor$ program, denote by $\cH_\ell$ the set of $\sH$-vars defined up to line $\ell$.
For the $\sigma$-algebra
$\cF_\ell:=\sigma(\cH_\ell)$,
 conditioning amounts to linear constraints of the form $g=Wh$, where
$g,h\in\cH_\ell$. 
Hence, the conditional law of $W$ remains Gaussian, with updates given by applications of \eqref{eq:update_muW} and \eqref{eq:update_RW} at each $\matmul$ operation. 
In particular, since $\sW:\sA(n,m)$ has separable initial covariance, then we also have
\[
\Sigma_{W\mid \cF_\ell}=\cL_{W\mid\cF_\ell}\otimes \cR_{W\mid\cF_\ell},
\qquad \ell\ge 0,
\]
with suitable $\cL_{W\mid\cF_\ell}$ and $\cR_{W\mid\cF_\ell}$, that we describe below in \eqref{eq:variance-conditional}.

We start with explicitly writing the update rules for $\matmul$ lines.
If line $\ell$ is a $\matmul$ operation $\sg:=\sW\sh$ with finite-width execution $g = Wh$, then $h$ is
$\cF_{\ell-1}$-measurable $\mu_{W\mid \cF_{\ell}}$, $\cL_{W\mid \cF_{\ell}}$ and $\cR_{W\mid \cF_{\ell}}$ are special cases of \eqref{eq:update_muW} and \eqref{eq:update_RW}. 
These read as follows
\begin{equation}\label{eq:mean-conditional}
 \mu_{\sW\mid \cF_\ell}
=
 \mu_{\sW\mid \cF_{\ell-1}}+
\bra{g- \mu_{\sW\mid \cF_{\ell-1}}\,  h}\;
\bra{h^\top \cR_{\sW\mid \cF_{\ell-1}}\, h}^{+}\;
\bra{h^\top  \cR_{\sW\mid \cF_{\ell-1}}}.
\end{equation}
The left part of the covariance tensor remains unchanged:
\begin{equation}
\cL_{\sW\mid\cF_{\ell}} =\cL_{\sW\mid\cF_{\ell-1}} = \frac 1 m \Id_n,
\end{equation}
while the right part satisfies:
\begin{equation}\label{eq:variance-conditional}
 \cR_{\sW\mid \cF_\ell}
=
 \cR_{\sW\mid \cF_{\ell-1}}-
\bra{\cR_{\sW\mid \cF_{\ell-1}}\,  h}\;
\bra{ h^\top  \cR_{\sW\mid \cF_{\ell-1}}\,  h}^{+}\;
\bra{ h^\top  \cR_{\sW\mid \cF_{\ell-1}}}.
\end{equation}
Moreover different $\sA$-vars remain independent.

We obtain as a consequence the conditional law of $\sG$-vars. 
Indeed, these are defined only via $\matmul$ operations: therefore, if line $\ell$ is as above a $\matmul$ operation $\sg:=\sW\sh$ with finite-width execution $g = Wh$, then $h$ is
$\cF_{\ell-1}$-measurable and, conditionally on $\cF_{\ell-1}$,  $g$ is Gaussian with
\begin{equation}\label{eq:mu_g_conditional}
\mu_{g\mid \cF_{\ell-1}}=\mu_{W\mid \cF_{\ell-1}}\,h,
\end{equation}
and
\begin{equation}\label{eq:sigma_g_conditional}
\Sigma_{g\mid \cF_{\ell-1}}
=\cL_{W\mid \cF_{\ell-1}}\;\bra{h^\top \cR_{W\mid \cF_{\ell-1}}\,h}.
\end{equation}

We notice that $\nonlin$ lines do not enlarge the $\sigma$-algebras, hence $\cF_\ell=\cF_{\ell-1}$ for such lines, hence conditional laws remain unchanged in such case. 
Thus, conditional laws are algebraically characterized by following the update rules \eqref{eq:mean-conditional} and \eqref{eq:variance-conditional} along the program lines. 
However, some structural properties are more evident by providing also a geometric description. To this aim, for any program line $\ell$, we introduce a Gram matrix associated to each $\sW: \sA(n,m)$, defined in the following way. 
We collect all the $\matmul$ lines before (possibly including) line $\ell$, in which $\sW$ appears:
\[
\sg^{(1)}=\sW\sh^{(1)},\ \dots,\ \sg^{(k)}=\sW\sh^{(k)},
\]
and we collect such $\sG$-vars and $\sH$-vars in the (random) matrices, respectively in $\R^{n\times k}$ and $\R^{m\times k}$,
\begin{equation}
G_{\sW}^{(\ell)} :=
\bra{ g^{(i)} }_{1\le i\le k}, \quad
H_{\sW}^{(\ell)} :=
\bra{ h^{(i)} }_{1\le i\le k},
\end{equation}
so that the  Gram matrix (associated to the $\sh$-vars for $\sW$ up to line $\ell$) in $\R^{k \times k}$, is given by
\begin{equation}
    \label{eq:gram-finite-width}
H_{\sW}^{(\ell), \top }  H_{\sW}^{(\ell)}  =\bra{h^{(i), \top } h^{(j)}}_{1\le i,j\le k}, 
\end{equation}
and the orthogonal projection on the (random) subspace in $\R^m$, spanned by $\cur{ h^{(i)} }_{1\le i\le k}$, is given by
\begin{equation}
    \label{eq:projection-finite-width}
 H_{\sW}^{(\ell)}   \bra{ H_{\sW}^{(\ell), \top }  H_{\sW}^{(\ell)} }^{+} H_{\sW}^{(\ell) ,\top} =  H_{\sW}^{(\ell)}{H_{\sW}^{(\ell)}}^+ \in \R^{m \times m}.
\end{equation}
With this notation, we will prove the structure lemma \ref{lem:structure-lemma-finite-width} for conditional laws.

\begin{example}
  
Let \(\sW:A(2,3)\), and let its finite-width execution be a Gaussian matrix
\(W\in\mathbb R^{2\times 3}\) with independent entries of variance \(1/3\).
Suppose that, before line \(\ell\), the same A-var has already appeared in two
MatMul lines
\[
 \sg_1=\sW\sh_1,\qquad \sg_2=\sW\sh_2,\qquad \sh_1,\sh_2\in\mathbb R^3,
\]
and assume, for simplicity, that the finite executions \(h_1,h_2\) are linearly independent. After
conditioning on \(\cF_{\ell-1}\), the vectors \(h_1,h_2,g_1,g_2\) are fixed.
Using the matrices
\[
H_W^{(\ell-1)}=(h_1,h_2),\qquad
G_W^{(\ell-1)}=(g_1,g_2),\qquad
{H_W^{(\ell-1)}}^+
=\big((H_W^{(\ell-1)})^\top H_W^{(\ell-1)}\big)^{-1}(H_W^{(\ell-1)})^\top .
\]
we can decompose the finite execution of a new $\matmul$ line \(\sg=\sW\sh\), into the \(\mathcal F_{\ell-1}\)-measurable part
(the one already ``tested'' by \(W\)) and its orthogonal residual:
\begin{align*}
h&= H_W^{(\ell-1)}{H_W^{(\ell-1)}}^+ h + \bra{\Id_3 - H_W^{(\ell-1)}{H_W^{(\ell-1)}}^+}h\\
&= H_W^{(\ell-1)}\alpha+r\\
&=\alpha_1h_1+\alpha_2h_2+r,\qquad \text{where}\\
\alpha=\big(H_W^{(\ell-1)}\big)^+h=(\alpha_1,\alpha_2)^\top,
\qquad&
R_{W\mid\mathcal F_{\ell-1}}
=\Id_3-H_W^{(\ell-1)}\big(H_W^{(\ell-1)}\big)^+,\qquad
r=R_{W\mid\mathcal F_{\ell-1}}h .
\end{align*}
Note that here $R_{W\mid\mathcal F_{\ell-1}}$ is the projection on the orthogonal complement of $\text{span}\{h_1,h_2\}$. Applying \(W\) to the above formula allows a similar decomposition of $g$
\[
g=Wh=\alpha_1Wh_1+\alpha_2Wh_2+Wr
=\alpha_1g_1+\alpha_2g_2+Wr .
\]
where the first two terms are \(\mathcal F_{\ell-1}\)-measurable (i.e., given under the conditioning), while \(Wr\) is fresh Gaussian noise. Based on this decomposition, the conditional distribution of $g$ is given by
\begin{equation}\label{eq:ex_condg}
 g\mid\mathcal F_{\ell-1}
 \sim
 \mathcal N\!\left(
 G_W^{(\ell-1)}\big(H_W^{(\ell-1)}\big)^+h,
 \frac{I_2}{3}\,h^\top R_{W\mid\mathcal F_{\ell-1}}h
 \right).
\end{equation}
This computation foreshadows the content of the structure lemmas (Lemma~\ref{lem:structure-lemma-finite-width} and Lemma~\ref{lem:structure-lemma-infinite-width}) below, characterizing in full generality the information that previous uses of the same A-var provide on how
\(W\) acts on \(\operatorname{span}\{h_1,h_2\}\). As outlined above, these lemmas show that a new product \(Wh\) is predictable
on the projection of \(h\) onto this span, but remains Gaussian through the residual
component orthogonal to it, resulting in the formula \eqref{eq:ex_condg}. If the residual vanishes, then \(h\in\operatorname{span}\{h_1,h_2\}\), so
\(Wh=\alpha_1g_1+\alpha_2g_2\). In this case, the execution of the program can be equivalently rewritten in nondegenerate form by replacing the $\matmul$ line with a $\nonlin$ line, as formalized in Lemma~\ref{lemma:nondegenerate-representative} below.

\end{example}

\begin{lemma}[Structure lemma for finite-width execution]
    \label{lem:structure-lemma-finite-width}
    Consider an $\sA$-var $\sW : \sA(n,m)$ in a $\netsor$ program and let $\cF_\ell = \sigma(\cH_\ell)$ be the $\sigma$-algebra generated by the $\sH$-vars defined up to line $\ell$ by the finite-width execution of the program.
With the notation introduced above, $\cR_{W\mid \cF_{\ell}}$ is the projector on the orthogonal subspace to the span of $\cur{ h^{(i)} }_{1\le i\le k}$,
\[ \cR_{W\mid \cF_{\ell}} = \Id_{m} - H_{\sW}^{(\ell)}  {H_{\sW}^{(\ell)}}^+, \]
and
\[ \mu_{W\mid \cF_{\ell}} = G_{\sW}^{(\ell)} {H_{\sW}^{(\ell)}}^+.\]
If moreover line $\ell$ is $\matmul$ $\sg = \sW \sh$, then, conditionally upon $\cF_{\ell-1}$, $g$ is Gaussian with mean
\[ \mu_{g\mid \cF_{\ell-1}} = G_{\sW}^{(\ell-1)} {H_{\sW}^{(\ell-1)}}^+ h,\]
and covariance
\begin{equation}\label{eq:sigmag}  \Sigma_{g\mid \cF_{\ell-1}} = \frac {\Id_n} m \bra{ h^\top h - h^\top H_{\sW}^{(\ell-1)}   {H_{\sW}^{(\ell-1)}}^+ h },\end{equation}
where $h \in \cF_{\ell-1}$ is the finite-width execution of $\sh$.
\end{lemma}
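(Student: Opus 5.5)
We argue by induction on the number $k$ of $\matmul$ lines involving $\sW$ up to line $\ell$, using the recursive update rules \eqref{eq:mean-conditional} and \eqref{eq:variance-conditional}. Lines not involving $\sW$ leave the conditional law of $W$ unchanged. For $\nonlin$ lines this holds because $\cF_\ell=\cF_{\ell-1}$. For $\matmul$ lines with a different $\sA$-var $\sW'$ it holds because $W'$ is independent of $W$ and of the earlier history. There is one subtlety: conditioning on $g'=W'h'$ adds information about $W'$ only, and $h'$ is already $\cF_{\ell-1}$-measurable. So the conditional law of $W$ given $\cF_\ell$ equals that given $\cF_{\ell-1}$. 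We check this by writing the joint conditional density of the pair $(W,W')$ as a product.

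At the base case $k=0$ we have $\mu_W=0$ and $\cR_W=\Id_m$. This agrees with the claimed formulas, with the convention that $H$ and $G$ are empty matrices, so that $HH^+=0$. The initial left factor is $\frac1m\Id_n$, and it stays unchanged.

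For the inductive step, write $P_{k-1}:=H_{k-1}H_{k-1}^+$ and $\cR:=\Id_m-P_{k-1}$, and let the new line be $g=Wh$. Set $r:=\cR h$. Since $\cR$ is a symmetric projection, $h^\top\cR h=\|r\|^2$ and $\cR h=r$. Formula \eqref{eq:variance-conditional} then gives $\cR-rr^\top/\|r\|^2$ when $r\neq0$, and $\cR$ when $r=0$. This is the projector onto $\operatorname{span}(h^{(1)},\dots,h^{(k)})^\perp$, because $P_k=P_{k-1}+rr^\top/\|r\|^2$ by Gram--Schmidt. The same identity follows from Greville's formula \eqref{eq:greville} applied to $H_k=[H_{k-1},h]$.

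For the mean, \eqref{eq:mean-conditional} gives
\[
\mu_k=G_{k-1}H_{k-1}^+ + (g-G_{k-1}H_{k-1}^+h)\,r^\top/\|r\|^2 .
\]
We must show this equals $G_kH_k^+$. In the case $r\neq0$, Greville's formula gives $b=r^+=r^\top/\|r\|^2$ and $H_k^+=\begin{pmatrix}H_{k-1}^+-H_{k-1}^+hb\\ b\end{pmatrix}$. Multiplying by $G_k=[G_{k-1},g]$ yields $G_{k-1}H_{k-1}^+ + (g-G_{k-1}H_{k-1}^+h)b$, which is exactly $\mu_k$.

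The degenerate case $r=0$ is the main obstacle. Then $h=H_{k-1}\alpha$, and the pseudoinverse update in \eqref{eq:mean-conditional} vanishes, so $\mu_k=\mu_{k-1}$. Greville's second branch, however, produces a nonzero $b$. We therefore need $G_kH_k^+=G_{k-1}H_{k-1}^+$, and this requires the consistency relation $g=G_{k-1}\alpha$ almost surely, i.e.\ $Wh=\sum\alpha_iWh^{(i)}$. This holds pathwise by linearity, because $g=Wh$ exactly.

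We prove the degenerate case via the range characterization $H_k^+=H_k^+P_k$, using $G_k=WH_k$ pathwise. Since $\ker H_k$ is annihilated by $G_k$, we can write $G_k=G_kH_k^+H_k$. Indeed, $G_kH_k^+H_k=WH_kH_k^+H_k=WH_k$. It then suffices to note that $\mu_k$ and $G_kH_k^+$ coincide on $\operatorname{span}(H_k)$, where both act as $W$. On $\operatorname{span}(H_k)^\perp$ both vanish: $G_kH_k^+$ vanishes there by definition, and for $\mu_k$ we use the inductive hypothesis together with the explicit update. This argument covers both cases uniformly, and it also gives an alternative proof of the first case.

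Finally, the law of $g$ given $\cF_{\ell-1}$ follows from \eqref{eq:mu_g_conditional} and \eqref{eq:sigma_g_conditional}. Substituting the formulas just proved at stage $\ell-1$ gives mean $G_{k-1}H_{k-1}^+h$ and covariance $\frac1m\Id_n\,h^\top(\Id-P_{k-1})h$, which is \eqref{eq:sigmag}.
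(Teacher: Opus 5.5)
Your proof is correct and takes essentially the same route as the paper. It inducts along the $\matmul$ lines using \eqref{eq:mean-conditional}--\eqref{eq:variance-conditional}, applies the first branch of Greville's formula for the mean when $r\neq 0$, and in the degenerate case uses $G_kH_k^+=WH_kH_k^+=WP_k$ with $P_k=P_{k-1}$. The differences are cosmetic: you identify the new projector through $P_k=P_{k-1}+rr^\top/\|r\|^2$ instead of the paper's image argument, and you spell out the conditional independence of distinct $\sA$-vars, which the paper only asserts.
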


    In particular, Lemma \ref{lem:structure-lemma-finite-width} implies that $\|\cR_{W\vert \cF_\ell}\|_{\op}\le 1$ whenever the $\ell$-th line is a $\matmul$ instruction.

\begin{proof}

    We reason by induction on $\ell$.
    Let $k$ be the number of $\matmul$ lines involving $\sW$ up to line $\ell$.
    If the $\ell$-th line is $\nonlin$, then there is nothing to prove.
    Hence, assume that the $\ell$-th line is $\sg^{(k+1)} = \sW \sh^{(k+1)}$.
    Write $h = h^{(k+1)}$ and $g = g^{(k+1)}$ for simplicity,
    and $\mu^{(\ell)} = \mu_{\sW\mid \cF_\ell}$, 
    $\cR^{(\ell)} = \cR_{\sW\mid \cF_\ell}$ for each $\ell \le L$.
    For the sake of clarity and since $\sW$ is fixed, we will also omit the dependence of $\sW$ in the block matrices $H^{(\ell)}_\sW$ and $G^{(\ell)}_\sW$.
    We proceed to prove the claim for $\mu^{(\ell)}$ and $\cR^{(\ell)}$ separately.

    \vspace{0.5cm}

    \emph{Proof for $\cR^{(\ell)}$:}

    We begin by studying the case $h^\top \cR^{(\ell-1)} h = 0$.
 This implies that $h$ is in the span of $\{h^{(1)},\dots,h^{(k)}\}$ and $\bra{h^\top \cR^{(\ell-1)} h}^+ = 0$, yielding
 \[\cR^{(\ell)} = \cR^{(\ell-1)} = \Id_{m} - H^{(\ell-1)}  {H^{(\ell-1)}}^+.\]
 Note that since the column spaces of $\{h^{(1)},\dots,h^{(k)}\}$ and $\{h^{(1)},\dots,h^{(k)},h\}$ coincide, 
 we have $H^{(\ell)}{H^{(\ell)}}^+ = H^{(\ell-1)}{H^{(\ell-1)}}^+$, so the claim holds in the degenerate case.
We now consider the case $h^\top \cR^{(\ell-1)} h > 0$.
Denote by $P$ the orthogonal complement to the span of $\{h^{(1)},\dots,h^{(k)}\}$ in $\R^m$.
Recalling the update \eqref{eq:variance-conditional},
\[
\cR^{(\ell)}
= \cR^{(\ell-1)}
\left(
\Id_m - h h^\top \cR^{(\ell-1)}(h^\top \cR^{(\ell-1)} h)^{-1}
\right),
\]
is immediate under the induction hypothesis that $\cR^{(\ell)}$ is symmetric and idempotent.
Interpreting $\cR^{(\ell)}$ as a linear operator on $\R^m$, we have
$\im \cR^{(\ell)} \subseteq \im \cR^{(\ell-1)} = P$.
Moreover, for every $x \in \R^m$,
\[
h^\top \cR^{(\ell)} x
= h^\top \cR^{(\ell-1)} x
- h^\top \cR^{(\ell-1)} h (h^\top \cR^{(\ell-1)} h)^{-1} h^\top \cR^{(\ell-1)} x 
= 0,
\]
so that
$\im \cR^{(\ell)} \subseteq P$.
Conversely, if $y \in P$, then
$\cR^{(\ell-1)} y = y$ and $h^\top y = 0$, hence
\[
\cR^{(\ell)} y = \cR^{(\ell-1)} y -
\cR^{(\ell-1)} h  (h^\top \cR^{(\ell-1)} h)^{-1} h^\top \cR^{(\ell-1)} y
= y .
\]
Thus $P \subseteq \im \cR^{(\ell)}$. 
The uniqueness of the orthogonal projection implies the first claim.

\vspace{0.5cm}

    \emph{Proof for $\mu^{(\ell)}$}:

    Again, we study first the case $h^\top \cR^{(\ell-1)} h = 0$.
    In this case, the update \eqref{eq:mean-conditional} reduces to $\mu^{(\ell)} = \mu^{(\ell-1)}$.
    In effect, note that $G^{(i)} = WH^{(i)}$ for each $i \le \ell$. 
    Therefore, by reasoning as in the previous case:
    \[G^{(\ell)} {H^{(\ell)}}^+ = WH^{(\ell)} {H^{(\ell)}}^+ =WH^{(\ell-1)} {H^{(\ell-1)}}^+ = G^{(\ell-1)} {H^{(\ell-1)}}^+.\]
    This proves the claim in the degenerate case.
    Let's now assume that $h^\top \cR^{(\ell-1)} h > 0$.
The recursion \eqref{eq:mean-conditional} can be written as a rank-one correction
\[
\mu^{(\ell)}
=
\mu^{(\ell-1)}
+
\bigl(g-\mu^{(\ell-1)}h\bigr)q_\ell^\top,
\]
where
\[
q_\ell^\top
:=
\bigl(h^\top R^{(\ell-1)}h\bigr)^{-1}
h^\top R^{(\ell-1)} .
\]

By induction hypothesis,
\[\mu^{(\ell)} = G^{(\ell-1)}{H^{(\ell-1)}}^+ + (g-G^{(\ell-1)}{H^{(\ell-1)}}^+h)q_\ell.\]
On the other hand, since $G^{(\ell)} = \bra{G^{(\ell-1)}, g}$ and $H^{(\ell)} = \bra{H^{(\ell-1)}, h}$, 
and since $h$ is not in the span of $\cur{h^{(i)}}_{1\le i\le k}$ by assumption, 
we have $(\Id_m-H^{(\ell-1)}{H^{(\ell-1)}}^+)h \ne 0$, 
so the first branch of Greville's formula \eqref{eq:greville} can be applied with $A = H^{(\ell-1)}$ and $a=h$.
The corresponding row vector $b \in \R^{1\times m}$ is:
\[
b = (h-H^{(\ell-1)}{H^{(\ell-1)}}^+h)^+ = (\cR^{(\ell-1)}h)^+.
\]
Since $\cR^{(\ell-1)}$ is an orthogonal projector and $\cR^{(\ell-1)}h\ne 0$, the pseudoinverse is given by:
\[
b = \frac{(\cR^{(\ell-1)}h)^\top}{h^\top\cR^{(\ell-1)}h}=q_\ell^\top.
\]
Then, Greville's formula \eqref{eq:greville} yields
\[{H^{(\ell)}}^+ = \begin{pmatrix}
    {H^{(\ell-1)}}^+(\Id_m - h q_\ell) \\ q_\ell^\top
\end{pmatrix},\]
Therefore,
\begin{align}
G^{(\ell)}{H^{(\ell)}}^+ &= G^{(\ell-1)}{H^{(\ell-1)}}^+(\Id_m - h q_\ell^\top) + gq_\ell^\top \\
&= G^{(\ell-1)}{H^{(\ell-1)}}^+ + (g-G^{(\ell-1)}{H^{(\ell-1)}}^+h)q_\ell^\top\\
&= \mu^{(\ell)}.
\end{align}
This concludes the proof of the second claim.

Lastly, the formulas for the conditional mean and covariance of $g$ follow directly from the first two claims and the general update formulas \eqref{eq:mu_g_conditional} and \eqref{eq:sigma_g_conditional},
substituting $\cL_{W\mid \cF_{\ell-1}} = \frac 1 m \Id_n$ in the latter.
\end{proof}

\subsubsection*{Infinite-width execution}

In the infinite-width execution, $\sA$-vars are not realized as actual random matrices. Nevertheless,
it is convenient to associate to each $\sA$-var $\sW: \sA(n,m)$ a pair of \emph{barred} parameters
$\bmu_{\sW\mid \bF_\ell}$ and $\bSigma_{\sW\mid \bF_\ell}$ playing the role of conditional mean and covariance. 
As in the finite-width execution, the covariance update preserves separability; we write
\[
\bSigma_{\sW\mid \bF_\ell}=\bL_{\sW\mid \bF_\ell}\otimes \bR_{\sW\mid \bF_\ell}.
\] 
These quantities evolve by the same regression update maps as in the finite-width
execution, but with all contractions computed using the infinite-width pairing $\ldot$ defined in \eqref{eq:infinite-width-product}.  Concretely, when line $\ell$ is $\matmul$ $\sg=\sW\sh$ with executions
$\bar g$, $\bar h$ and $\bF_\ell:=\sigma(\bar{\cH}_\ell)$, we set
\begin{equation}\label{eq:mean-bar-conditional}
\bmu_{\sW\mid \bF_\ell}
=
\bmu_{\sW\mid \bF_{\ell-1}}
+
\bra{\bar g-\bmu_{\sW\mid \bF_{\ell-1}}\, \bar h}\;
\bra{\bar h\ldot \bR_{\sW\mid \bF_{\ell-1}}\,\bar h}^{+}\;
\bra{\bar h\ldot \bR_{\sW\mid \bF_{\ell-1}}},
\end{equation}
and 
where $\cL_{\sW\mid \bF_\ell} = \cL_{\sW\mid \bF_{\ell-1}} = \Id_n$, and  
\begin{equation}\label{eq:variance-bar-conditional}
\bR_{\sW\mid \bF_\ell}
=
\bR_{\sW\mid \bF_{\ell-1}}
-
\bra{\bR_{\sW\mid \bF_{\ell-1}}\, \bar h}\;
\bra{\bar h\ldot \bR_{\sW\mid \bF_{\ell-1}}\, \bar h}^{+}\;
\bra{\bar h\ldot \bR_{\sW\mid \bF_{\ell-1}}}.
\end{equation}
Such barred parameters can be thought as mere notational convenience, in order to describe the actual conditional law of $\sG$-vars: indeed,
conditionally on $\bF_{\ell-1}$, the new $\bar g\in\R^{n}$ is Gaussian with mean
\begin{equation}
    \label{eq:mu_g_bar_conditional}
    \mu_{\bar g \mid\bF_{\ell-1} } = \bmu_{\sW\mid\bF_{\ell-1}}\bar h,
\end{equation}
and covariance
\begin{equation}
    \label{eq:sigma_g_bar_conditional}
\Sigma_{\bar g \mid\bF_{\ell-1}} = \bL_{\sW\mid\bF_{\ell-1}}\bra{\bar h\ldot \bR_{\sW\mid\bF_{\ell-1}}\bar h}.
\end{equation} 
We notice  that also in the infinite-width execution, \textsc{NonLin} lines do not enlarge the $\sigma$-algebras,
 hence $\bF_\ell=\bF_{\ell-1}$ for such lines, and conditional laws remain unchanged in such case.

To provide a geometric description for the infinite-width conditional laws, we proceed analogously to the finite-width execution. We collect all the $\matmul$ lines before (possibly including) line $\ell$, in which $\sW$ appears:
\[
\sg^{(1)}=\sW\sh^{(1)},\ \dots,\ \sg^{(k)}=\sW\sh^{(k)}.
\]
We define the collections of their infinite-width executions $\bar g^{(i)} \in \R^n$ and $\bar h^{(i)} \in L^2(\R^m)$ via the formal operators:
\[
\bG_{\sW}^{(\ell)} := \bra{\bar g^{(i)}}_{1\le i\le k}, \quad \bH_{\sW}^{(\ell)} := \bra{\bar h^{(i)}}_{1\le i\le k}.
\]
Analogously to \eqref{eq:gram-finite-width}, we define the Gram matrix in the infinite-width execution using the infinite-width pairing $\ldot$ defined in Equation \eqref{eq:infinite-width-product}:
\[
\bH^{(\ell)}_{\sW}\ldot\bH^{(\ell)}_{\sW} := \bra{\bar h^{(i)} \ldot \bar h^{(j)}}_{1\le i,j\le k}.
\]
In view of \eqref{eq:sigma-gg-mf}, the covariance matrix of the stacked vector $(\bar g^{(i)})_{i=1, \ldots, k}$ is given by $(\bH^{(\ell)}_{\sW}\ldot\bH^{(\ell)}_{\sW}) \otimes \Id_n$.
The orthogonal projection on the span of $\{\bar h^{(i)}\}_{1\le i \le k}$ in $L^2(\R^m)$ is written compactly as:
\[
  \bH_{\sW}^{(\ell)} \bra{ \bH_{\sW}^{(\ell)} \ldot \bH_{\sW}^{(\ell)} }^{+} \bH_{\sW}^{(\ell)} \ldot = \bH_{\sW}^{(\ell)}{\bH_{\sW}^{(\ell)}}^+.
\]

A proof analogous to Lemma \ref{lem:structure-lemma-finite-width} yields the same structure for the infinite-width execution, 
with the only difference that all products are computed using the infinite-width pairing $\ldot$ instead of the Euclidean inner product.
In particular, Greville's formula \eqref{eq:greville} still holds in this Hilbert space setting by choosing an orthonormal basis $\cur{u_i}_{1\le i\le k}$ of the span of $\cur{\bar h^{(i)}}_{1\le i\le k}$ in $L^2(\R^m)$,
as detailed at the beginning of the present section.

\begin{lemma}[Structure lemma for infinite-width execution]
    \label{lem:structure-lemma-infinite-width}
    Consider an $\sA$-var $\sW : \sA(n,m)$ in a $\netsor$ program and let $\bF_\ell$ be the $\sigma$-algebra generated by the $\sH$-vars defined up to line $\ell$ by the infinite-width execution of the program.
With the notation introduced above, $\bR_{W\mid \bF_{\ell}}$ is the projector on the orthogonal subspace to the span of $\cur{ \bar h^{(i)} }_{1\le i\le k}$,
\[ \bR_{W\mid \bF_{\ell}} = \Id_{m} - \bH_{\sW}^{(\ell)}  \bH_{\sW}^{(\ell),+} . \]
and
\[ \bmu_{W\mid \bF_{\ell}} = \bG_{\sW}^{(\ell)} \bH_{\sW}^{(\ell),+}.\]
If moreover line $\ell$ is $\matmul$ $\sg = \sW \sh$, then, conditionally upon $\bF_{\ell-1}$, $\bar g$ is Gaussian with mean
\[ \mu_{\bar g\mid \bF_{\ell-1}} =\bG_{\sW}^{(\ell-1)} \bH_{\sW}^{(\ell-1),+} \bar h\]
and covariance
\begin{equation}\label{eq:sigmag-infty}  
    \Sigma_{\bar g\mid \bF_{\ell-1}} = 
    \Id_n \bra{\bar h \ldot \bar h - \bar h \ldot \bH_{\sW}^{(\ell-1)} \bH_{\sW}^{(\ell-1),+} \bar h},
\end{equation}
where $\bar h \in \bF_{\ell-1}$ is the infinite-width execution of $\sh$.
\end{lemma}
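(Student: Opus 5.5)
We follow the proof of Lemma~\ref{lem:structure-lemma-finite-width} line by line, replacing the Euclidean space $\R^m$ with the Hilbert space $L^2(\R^m)$ and the inner product $\frac1m h^\top h'$ with the pairing $\ldot$ of \eqref{eq:infinite-width-product}. We argue by induction on $\ell$. The base case, before any $\matmul$ line involving $\sW$, is trivial: $\bmu=0$, $\bR=\Id_m$, and the span is empty. $\nonlin$ lines leave both sides unchanged, since $\bF_\ell=\bF_{\ell-1}$ and the collections $\bH_\sW,\bG_\sW$ do not change. So it suffices to treat a $\matmul$ line $\sg=\sW\sh$. Here we set $\bar h=\bar h^{(k+1)}$ and apply the updates \eqref{eq:mean-bar-conditional} and \eqref{eq:variance-bar-conditional}.

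\emph{Covariance part.} By induction, $\bR^{(\ell-1)}$ is the $\ldot$-orthogonal projector onto $P:=\mathrm{span}\{\bar h^{(i)}\}_{i\le k}^{\perp}$. We split into cases according to whether $\bar h\ldot\bR^{(\ell-1)}\bar h$ vanishes.
\begin{itemize}
\item If it vanishes, then $\bar h$ lies in the span, the pseudoinverse is $0$, $\bR$ is unchanged, and the span is unchanged as well.
\item Otherwise, we repeat the three algebraic checks of the finite-width proof with $\ldot$ in place of $h^\top$: idempotence and self-adjointness with respect to $\ldot$; $\bar h\ldot\bR^{(\ell)}x=0$, giving $\im\bR^{(\ell)}\subseteq P\cap\bar h^\perp$; and $\bR^{(\ell)}y=y$ for $y\in P\cap \bar h^{\perp}$. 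Uniqueness of orthogonal projections then gives the claim.
\end{itemize}

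\emph{Mean part.} In the degenerate case, note that $\bar h=\bH^{(\ell-1)}\alpha$ holds as an identity in $L^2$, i.e.\ a.s. Since $\bar g$ is jointly Gaussian with the previous $\bar g^{(i)}$, with covariance $(\bH\ldot\bH)\otimes\Id_n$ by \eqref{eq:sigma-gg-mf}, we get $\mathrm{Var}(\bar g-\bG^{(\ell-1)}\alpha)=(\alpha,-1)^\top\,\mathrm{Gram}\,(\alpha,-1)\,\Id_n=\nor{\bar h-\bH\alpha}^2_{\ldot}\Id_n=0$. Hence $\bar g=\bG^{(\ell-1)}\alpha$ a.s., which plays the role of the identity $G=WH$ in the finite-width proof, and yields $\bG^{(\ell)}\bH^{(\ell),+}=\bG^{(\ell-1)}\bH^{(\ell-1),+}$. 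In the nondegenerate case, we apply Greville's formula \eqref{eq:greville} in the Hilbert setting, via an orthonormal basis of $\mathrm{span}\{\bar h^{(i)}\}_{i\le k+1}$ as remarked before the lemma. This gives $b=(\bR^{(\ell-1)}\bar h)^{+}=(\bar h\ldot\bR^{(\ell-1)}\bar h)^{-1}\,\bar h\ldot\bR^{(\ell-1)}$, which is exactly the rank-one vector in \eqref{eq:mean-bar-conditional}. The same two-line algebra as in the finite case then closes the induction.

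\emph{Conditional law of $\bar g$.} We insert the two identities into \eqref{eq:mu_g_bar_conditional} and \eqref{eq:sigma_g_bar_conditional}, using $\bL=\Id_n$ and $\bar h\ldot\bR\bar h=\bar h\ldot\bar h-\bar h\ldot\bH\bH^+\bar h$. The remaining point, which I expect to be the main subtlety, is that these barred formulas genuinely describe the conditional law of $\bar g$ given $\bF_{\ell-1}$, rather than just being notation. To check this, we use that $\bar g$ is jointly Gaussian with the earlier $\sG$-vars. For a $\sG$-var $\sg'$ defined with a different $\sA$-var, \eqref{eq:sigma-gg-mf} gives zero cross-covariance, hence independence. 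Given the earlier $\bar g^{(i)}$ produced by $\sW$, \eqref{eq:conditional-gaussian} gives mean $\bar g^{\top}$-regression coefficients $(\bar h\ldot\bH)(\bH\ldot\bH)^{+}$ and the Schur complement above, which matches $\bH^{+}\bar h$. One must also ensure that conditioning on all of $\bF_{\ell-1}$ equals conditioning on the Gaussian vars. This holds because every infinite-width $\sH$-var is a measurable function of the $\sG$-vars and inputs, and because $\bar h\ldot\bar h'$ is deterministic. With these facts the statement follows.
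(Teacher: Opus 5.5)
Your proposal is correct and follows the paper's route. The paper's proof just says that the finite-width argument carries over with $\ldot$ in place of the Euclidean product and with Greville's formula applied through an orthonormal basis, which is what you do. Your extra steps fill in two details the paper leaves implicit. First, in the degenerate case you replace the identity $G=WH$ by a variance computation; applied to arbitrary coefficient vectors, the same computation shows that every linear relation among the $\bar h^{(i)}$ passes to the $\bar g^{(i)}$, which is what makes $\bG^{(\ell)}\bH^{(\ell),+}=\bG^{(\ell-1)}\bH^{(\ell-1),+}$ go through. Second, you check that the barred formulas really give the conditional law of $\bar g$, using joint Gaussianity and the block structure of \eqref{eq:sigma-gg-mf}.
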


\color{black}

\begin{remark}[Input parents in the basic $\netsor$ language]
In a basic $\netsor$ program, if a $\matmul$ line \(\sg=\sW\sh\) has parent dimension
\(m\in\nin\), then the finite execution \(h\) of $\sh$ is obtained from deterministic input variables by ordinary
$\nonlin$ operations. Hence its finite-width and infinite-width executions coincide.
In particular, the corresponding Gram quantities are structural. This statement is used
only in the proof for basic $\netsor$ programs. In the extended $\netsork$ language, input-dimensional
parents may also depend on scalar random variables, and the resulting finite-width fluctuations
are controlled separately in the $\nonlinpl / \scal$ cases of the induction step.
\end{remark}

\subsection{Program equivalence and reduction to non-degeneracy}
\label{subsec:equivalence-reduction}

In this subsection we introduce a notion of equivalence between \netsor\ programs, and we use it to show that every program is equivalent to one in which all $\matmul$ lines have nondegenerate conditional Gaussian outputs, in the infinite-width execution. 
This reduction is useful in the proof of the main theorem, since it allows us to work with densities and avoid technicalities related to degenerate Gaussians and non-invertible covariance matrices.

\begin{definition}[Program equivalence]
Let $T$ and $T'$ be two \netsor\ programs of length $M$, and denote by
\[
\sh^{(1)},\ldots,\sh^{(M)}
\qquad\text{and}\qquad
\sh'^{(1)},\ldots,\sh'^{(M)}
\]
their $\sH$-vars. We say that $T$ and
$T'$ are \emph{infinite-width-equivalent} if there exists a bijection
\[
\pi:\{\sh^{(1)},\ldots,\sh^{(M)}\}\longrightarrow
\{\sh'^{(1)},\ldots,\sh'^{(M)}\}
\]
such that, writing $\pi(\sh^{(i)})=\sh'^{(\pi(i))}$, the corresponding infinite-width
executions satisfy
\[
\bigl(\bar h^{(1)},\ldots,\bar h^{(M)}\bigr)
\stackrel{\rm law}{=}
\bigl(\bar h'^{(\pi(1))},\ldots,\bar h'^{(\pi(M))}\bigr).
\]
We say that $T$ and $T'$ are \emph{equivalent} if the same bijection also identifies the
finite-width executions, namely
\[
\bigl(h^{(1)},\ldots,h^{(M)}\bigr)
\stackrel{\rm law}{=}
\bigl(h'^{(\pi(1))},\ldots,h'^{(\pi(M))}\bigr).
\]
\end{definition}

We stress that in the definition above the bijection is not required to preserve the syntactic subtype: a $\sG$-var in one program may correspond to an $\sH$-var which is not a $\sG$-var in the other.

\begin{lemma}[Non-degenerate representative]
\label{lemma:nondegenerate-representative}
Let $T$ be a \netsor\ program whose $\nonlin$ functions are Lipschitz continuous.
Then $T$ is equivalent to a \netsor\ program $T'$ such that, in the infinite-width
execution of $T'$, every $\matmul$ line has non-degenerate conditional Gaussian
output: if $\sg$ is generated at line $\ell$ of $T'$, then
\[
\bar g \mid \bF_{\ell-1}
\]
is a non-degenerate Gaussian random vector.
\end{lemma}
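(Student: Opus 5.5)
I would argue by induction on the number of lines, constructing $T'$ line by line so that the equivalence bijection is the identity on labels except that some $\matmul$ lines are replaced by $\nonlin$ lines. Suppose the lines $1,\dots,\ell-1$ have already been rewritten into an equivalent program whose $\matmul$ lines are all non-degenerate in the infinite-width execution, and consider line $\ell$. If it is a $\nonlin$ line, it is copied unchanged. If it is $\sg=\sW\sh$ and $\bar g\mid\bF_{\ell-1}$ is non-degenerate, it is also copied unchanged. The only case requiring work is when, by Lemma~\ref{lem:structure-lemma-infinite-width}, the conditional variance $\bar h\ldot\bar h-\bar h\ldot \bH_{\sW}^{(\ell-1)}\bH_{\sW}^{(\ell-1),+}\bar h$ vanishes. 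This means that $\bar h$ lies in the $L^2$-span of the earlier parents $\bar h^{(1)},\dots,\bar h^{(k)}$ of $\sW$, so that $\bar h=\sum_i\alpha_i\bar h^{(i)}$ almost surely, with deterministic coefficients $\alpha=\bH_{\sW}^{(\ell-1),+}\bar h$.

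In that case I would replace the line by the $\nonlin$ line $\sg:=\sum_i\alpha_i\sg^{(i)}$, with the linear, hence Lipschitz, function $\phi(y_1,\dots,y_k)=\sum_i\alpha_iy_i$. On the infinite-width side, Lemma~\ref{lem:structure-lemma-infinite-width} shows that the conditional law of $\bar g$ is the Dirac mass at $\bG^{(\ell-1)}_{\sW}\bH^{(\ell-1),+}_{\sW}\bar h=\sum_i\alpha_i\bar g^{(i)}$, so the joint laws agree. The subsequent barred parameters are unchanged as well, since a degenerate update leaves $\bmu,\bR$ fixed. The new program is therefore infinite-width-equivalent, and later lines inherit the same conditional laws.

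The main obstacle is finite-width equivalence. Here I need the pointwise identity $h=\sum_i\alpha_i h^{(i)}$ to hold almost surely in the finite-width execution; then $Wh=\sum_i\alpha_iWh^{(i)}=\sum_i\alpha_i g^{(i)}$ exactly. The key observation is that the infinite-width relation $\bar h-\sum_i\alpha_i\bar h^{(i)}=0$ holds coordinatewise. Each coordinate of $\bar h$ is a fixed function (a composition of Lipschitz $\nonlin$ maps) of the coordinates of the $\sG$-vars and inputs, and these coordinates are jointly Gaussian with covariance indexed by the Gram quantities. I would show by induction that each $\sH$-var is represented as $F(\text{inputs},\ g\text{-coordinates})$ with the same $F$ in both executions.

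I would then argue that the support of the finite-width conditional law of the $\sG$-vars, given the previous ones, is contained in the support of the infinite-width one. Both are conditionally Gaussian with the linear structure of Lemmas~\ref{lem:structure-lemma-finite-width} and~\ref{lem:structure-lemma-infinite-width}. Since all earlier $\matmul$ lines in the rewritten program are infinite-width non-degenerate, the infinite-width conditional laws have full support in each coordinate block. A continuous function ($F$ is Lipschitz) vanishing almost everywhere under a full-support Gaussian then vanishes identically, so the relation lifts to every finite-width realization.

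Two points need care in this last step. First, I must handle input (deterministic) coordinates separately. Second, I must check that the identity holds for all $m$ coordinates simultaneously, which follows because the finite-width coordinates are still given by the same $F$. Carrying out this full-support argument rigorously is where I expect most of the effort to go.
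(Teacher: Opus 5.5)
Your proposal is correct and follows essentially the same route as the paper. Like you, the paper removes degenerate $\matmul$ lines one at a time in program order, replacing each with the linear $\nonlin$ line $\sum_i\alpha_i\sg^{(i)}$, and checks infinite-width equivalence from the vanishing conditional covariance; it then lifts the almost-sure relation $\bar h=\sum_i\alpha_i\bar h^{(i)}$ to the finite-width execution by writing both sides as the same continuous function of the non-degenerate, hence full-support, Gaussian $\sG$-vars, and using that an open set of zero Gaussian measure is empty (Proposition~\ref{prop:matmul-reduction}). Your separate treatment of input coordinates and the explicit coupling through the same $W$ are details the paper leaves implicit, but they do not change the argument.
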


\begin{remark}[Gram formulation of non-degeneracy]
\label{rem:gram-nondegeneracy}
By the structure lemma for the infinite-width execution, the conclusion of
Lemma~\ref{lemma:nondegenerate-representative} is equivalent to saying that no
$\matmul$ line contracts a parent which is already in the infinite-width span of the
previous parents associated with the same $\sA$-var. More explicitly, if the line is
\[
\sg=\sW\sh
\]
and the previous uses of the same $\sA$-var $\sW$ have parents
$\sh^{(1)},\ldots,\sh^{(k)}$, then the conditional covariance of $\bar g$ is
non-degenerate precisely when
\[
\bar h \ldot \bR_{\sW|\bF_{\ell-1}}\bar h >0,
\]
or equivalently when the enlarged Gram matrix
\[
\overline H^{(\ell)}_{\sW}\ldot \overline H^{(\ell)}_{\sW}
\]
has rank one more than
\[
\overline H^{(\ell-1)}_{\sW}\ldot \overline H^{(\ell-1)}_{\sW}.
\]
In particular, after applying Lemma \ref{lemma:nondegenerate-representative}, the Gram matrices for each $\sA$-var are
invertible along the program.
\end{remark}

Before proving Lemma \ref{lemma:nondegenerate-representative} we present an auxiliary result on the geometrical properties of the covariance tensor update \eqref{eq:variance-conditional}.

\begin{proposition}
\label{prop:matmul-reduction}
Given a program with line $\ell$ of $\matmul$ type $\sg=\sW\sh$, the following are equivalent:
\begin{enumerate}
\item[\emph{(i)}] $\bar g$ is Gaussian degenerate, conditionally upon $\bF_{\ell-1}$;
\item[\emph{(ii)}] $\bar h\ldot \bR_{\sW\mid \bF_{\ell-1}}\,\bar h = 0$;
\item[\emph{(iii)}] there exist (deterministic) $(\alpha_i)_{i=1, \ldots, k}$ and previously defined
$\sG$-vars $(\sg^{(i)})_{i=1, \ldots, k} \in \cH_{\ell-1}$ using the same $\sA$-var $\sW$ such that, replacing the $\matmul$ line $\sg=\sW\sh$ by a $\nonlin$ instruction:
\[
\sg' := \sum_{i=1}^k \alpha_i \sg^{(i)}
\]
yields an infinite-width-equivalent program under the correspondence $\sg  \mapsto \sg'$ and keeps the other variables fixed.
\end{enumerate}
Moreover, if all the $\sG$-vars in $\cH_{\ell-1}$ are non-degenerate in the infinite-width execution and non-linearities are Lipschitz continuous, then (iii) yields an equivalent program under the same correspondence. 
\end{proposition}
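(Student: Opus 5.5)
We prove the cycle (i)$\Leftrightarrow$(ii), (ii)$\Rightarrow$(iii), (iii)$\Rightarrow$(ii). The strengthened claim for the finite-width execution is handled separately at the end.

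\emph{(i)$\Leftrightarrow$(ii).} By \eqref{eq:sigma_g_bar_conditional} and Lemma~\ref{lem:structure-lemma-infinite-width}, the conditional covariance of $\bar g$ given $\bF_{\ell-1}$ is $\Id_n\,(\bar h\ldot \bR_{\sW\mid\bF_{\ell-1}}\bar h)$. This is a scalar multiple of the identity, so it is degenerate exactly when the scalar vanishes.

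\emph{(ii)$\Rightarrow$(iii).} By Lemma~\ref{lem:structure-lemma-infinite-width}, $\bR_{\sW\mid\bF_{\ell-1}}$ is the orthogonal projector in $L^2(\R^m)$, with the pairing $\ldot$, onto the complement of $\mathrm{span}\{\bar h^{(1)},\dots,\bar h^{(k)}\}$. Since it is a projector, $\bar h\ldot\bR\bar h=\bar h\ldot \bR\bR \bar h=(\bR\bar h)\ldot(\bR\bar h)$. So (ii) says that $\bar h$ lies in that span in $L^2$. The coefficients $\alpha:=\bH^{(\ell-1),+}_{\sW}\bar h$ are computed from the Gram matrices, which are expectations, so they are deterministic. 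Hence $\bar h=\sum_i\alpha_i\bar h^{(i)}$ almost surely, coordinatewise.

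The conditional law of $\bar g$ is then the Dirac mass at its mean $\bG^{(\ell-1)}_\sW\bH^{(\ell-1),+}_\sW\bar h=\sum_i\alpha_i\bar g^{(i)}$. Therefore $\bar g=\sum_i\alpha_i\bar g^{(i)}$ almost surely. Replacing the line by the $\nonlin$ instruction $\sg'=\sum_i\alpha_i\sg^{(i)}$, which is linear and hence Lipschitz, produces the same variable at line $\ell$. All later lines are measurable functions of earlier variables, or are $\matmul$ lines whose conditional laws, by Lemma~\ref{lem:structure-lemma-infinite-width}, depend only on Gram data and parent values. The joint law is therefore preserved by induction on the subsequent lines.

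One point needs care. In the modified program the pair $(\sg,\sh)$ no longer appears among the uses of $\sW$, so later conditional laws are computed from a smaller collection of parents. This changes nothing: $\bar h$ lies in the span, so the projector $\bR$ and the mean map $\bG\bH^+$ are unchanged. This is the same degenerate-case argument as in the proof of Lemma~\ref{lem:structure-lemma-finite-width}, with $\bH^{(\ell)}\bH^{(\ell),+}=\bH^{(\ell-1)}\bH^{(\ell-1),+}$.

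\emph{(iii)$\Rightarrow$(ii).} If the modified program is infinite-width-equivalent, then $\bar g$ is $\bF_{\ell-1}$-measurable. Its conditional law is therefore a Dirac mass, which has zero covariance, and (ii) follows from the formula above.

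\emph{Finite-width statement; main obstacle.} We must show that $h=\sum_i\alpha_i h^{(i)}$ almost surely in the finite execution as well. Then $Wh=\sum_i\alpha_i g^{(i)}$ pointwise, and equivalence follows exactly as before. The difficulty is that the identity $\bar h=\sum\alpha_i\bar h^{(i)}$ is only an almost-sure identity in the infinite-width execution.

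We first apply the convergence theorem to the truncated program up to line $\ell-1$. It gives $\W_p$-closeness, but that only yields smallness of $\|h-\sum\alpha_i h^{(i)}\|/\sqrt m$, not exact equality, so it cannot be the whole argument. Instead we argue directly by the structure of the lines.

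Each of $h,h^{(i)}$ is coordinatewise a deterministic Lipschitz function $F$ of the earlier $\sG$-vars and inputs. In the infinite execution, non-degeneracy makes the vector of previous $\sG$-vars (per coordinate) a Gaussian with a density whose support is the whole space. The identity $F(\bar z)=\sum\alpha_iF_i(\bar z)$ almost surely then holds for every $z$ by continuity.

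In the finite execution, the corresponding vector of $\sG$-vars per coordinate is also supported inside that space, since the same linear relations hold there. The identity therefore transfers pointwise. The hardest step is to verify this support inclusion for the finite-width $\sG$-vars inductively, which again uses the structure lemmas.
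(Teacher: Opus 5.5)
Your proof is correct and follows the paper's route. You use the scalar covariance factor for (i)$\Leftrightarrow$(ii) and the span characterization of Lemma~\ref{lem:structure-lemma-infinite-width} for (ii)$\Rightarrow$(iii); your Dirac-at-the-conditional-mean argument is a repackaging of the paper's covariance expansion of $\bar g-\sum_i\alpha_i\bar g^{(i)}$. For the finite-width claim you use, as the paper does, the fact that a continuous function vanishing almost surely under an everywhere-positive Gaussian density vanishes identically.

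The ``support inclusion'' you flag as the hardest step is vacuous under the stated hypothesis. Since the earlier $\sG$-vars are jointly non-degenerate, $\Phi:=F-\sum_i\alpha_iF_i$ vanishes on all of $\R^{k'}$, hence also at the finite-width values, with no induction required. That induction would only be needed if you tried to drop the non-degeneracy assumption.
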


\begin{remark}
    The equivalence (iii) changes the filtration of the $\netsor$ program by substituting the $\sG$-var $\sg$ with the $\sH$-var $\sg'$. 
    Informally, this provides a way to eliminate degenerate conditional infinite-width executions of $\sG$-vars by skipping ``redundant'' levels of the filtration and replacing the $\sG$-var with a linear combination of previous $\sG$-vars, which can be implemented as a $\nonlin$ line with a linear map.
\end{remark}

\begin{proof}[Proof of Lemma \ref{lemma:nondegenerate-representative}]
We construct a finite sequence of programs $T_0, T_1, \dots, T_N$, starting with $T_0 = T$, where each program is equivalent to the previous one and has strictly fewer degenerate $\matmul$ lines.
Iterate through the lines of the current program $T$ sequentially, from $\ell = 1$ to $M$. 
Suppose line $\ell$ is the first $\matmul$ instruction $\sg^{(\ell)} = \sW \sh$ that is degenerate in the sense of Proposition~\ref{prop:matmul-reduction},
i.e., $\bar g^{(\ell)}  \mid \bF_{\ell-1}$ is a degenerate random variable. 
Because we proceed sequentially, all $\sG$-vars generated by $\matmul$ lines prior to $\ell$ are already non-degenerate in the infinite-width execution conditionally upon $\bF$. 
We apply Proposition~\ref{prop:matmul-reduction} to replace the instruction at line $\ell$ with the $\nonlin$ instruction $\sg' := \sum_{i=1}^k \alpha_i \sg^{(i)}$, yielding a new program $T_{new}$. 

By Proposition~\ref{prop:matmul-reduction}.(iii), $T_{new}$ is infinite-width-equivalent to the previous program. 
Since the nonlinearities are assumed to be continuous, the final step of Proposition~\ref{prop:matmul-reduction} applies and the replacement yields an equivalent
program.

We update our current program to $T_{new}$ and continue the iteration. 
Since the original program $T$ has a finite number of lines $M$, and each substitution strictly removes a single degenerate $\matmul$ line without introducing new ones, 
this procedure terminates. The final program $T_N := T'$ is equivalent to $T$ by transitivity, and every remaining $\matmul$ line has strictly positive conditional variance.
\end{proof}

\begin{proof}[Proof of Proposition \ref{prop:matmul-reduction}]
\emph{(i)$\Leftrightarrow$(ii).}
By 
Equation \eqref{eq:sigma_g_bar_conditional}
\[
\Sigma_{\bar g\mid \bF_{\ell-1}}
=\bL_{\sW\mid \bF_{\ell-1}}\big(\bar h\ldot \bR_{\sW\mid \bF_{\ell-1}}\bar h\big),
\]
so conditional degeneracy of $\bar g$ given $\bF_{\ell-1}$ is equivalent to vanishing of the scalar factor
$\bar h\ldot \bR_{\sW\mid \bF_{\ell-1}}\bar h$.

\emph{(ii)$\Rightarrow$(iii).}
By 
Lemma \ref{lem:structure-lemma-infinite-width},
the identity $\bar h\ldot \bR_{\sW\mid \bF_{\ell-1}}\bar h=0$ implies that $\bar h$ belongs to
the span of the previous parents $\{\bar h_i:\ \sg^{(i)}=\sW\sh^{(i)} \text{ occurs before line }\ell\}$.
Since only finitely many parents occur before $\ell$, there exist deterministic coefficients
$(\alpha_i)_{i=1}^k$ such that
\begin{equation}\label{eq:linear-reduction}
\bar h=\sum_{i=1}^k \alpha_i \bar h_i\quad \text{almost surely.}
\end{equation}

Using the bilinearity of the covariance, we can expand the covariance matrix of the difference $\bar g - \sum_{i=1}^k \alpha_i \bar g^{(i)}$. 
Applying \eqref{eq:sigma-gg-mf} to each term yields:
\begin{align*}
    \Sigma_{ \bar g -\sum_{i=1}^k \alpha_i \bar g^{(i)} } &= \Sigma_{\bar g} - 2 \sum_{i=1}^k \alpha_i \Sigma_{\bar g, \bar g^{(i)}} + \sum_{i,j=1}^k \alpha_i \alpha_j \Sigma_{\bar g^{(i)}, \bar g^{(j)}} \\
    &= (\bar h \ldot \bar h)\Id_n - 2 \sum_{i=1}^k \alpha_i (\bar h \ldot \bar h_i)\Id_n + \sum_{i,j=1}^k \alpha_i \alpha_j (\bar h_i \ldot \bar h_j)\Id_n.
\end{align*}
By the bilinearity of the infinite-width pairing $\ldot$, this expression factors into:
\[
    \Sigma_{ \bar g -\sum_{i=1}^k \alpha_i \bar g^{(i)} } = \bra{ \bar h - \sum_{i=1}^k \alpha_i \bar h_i }\ldot\bra{ \bar h - \sum_{j=1}^k \alpha_j \bar h_j } \Id_n.
\]
By \eqref{eq:linear-reduction}, the term inside the brackets is zero, meaning this covariance matrix is identically zero. 
Since a centered Gaussian vector with zero variance must be zero almost surely, we obtain $\bar g = \sum_{i=1}^k \alpha_i \bar g^{(i)}$ almost surely,
hence replacing the $\sG$-var $\sg=\sW\sh$ with the $\sH$-var $\sg'=\sum_i\alpha_i \sg^{(i)}$ yields an infinite-width-equivalent program.
To show the last claim, 
we introduce continuous functions $(\phi^{(i)})_{i=1, \ldots, k}$ and $\phi$ such that $\phi^{(i)}( (\bar g^{(j)})_{j=1, \ldots, k'}) = \bar h^{(i)}$ and similarly $\phi$ such that $\phi( (\bar g^{(j)})_{j=1, \ldots, k'}) = \bar h$, where possibly the  set of $\sG$-vars involved $(\sg^{(j)})_{j=1, \ldots, k'}$ is different than $(\bar g^{(i)})_{i=1, \ldots, k}$, but still contained in $\cH_{\ell-1}$, hence they are non-degenerate Gaussians. 
Consider the continuous function $\Phi\colon \R^{k'}\to \R$ given by
\[
\Phi((z_j)_{j=1, \ldots, k'}) := \phi((z_i)_{j=1, \ldots, k'}) - \sum_{i=1}^k \alpha_i \phi^{(i)}((z_j)_{j=1, \ldots, k'}).
\]
Then $\Phi((\bar g^{(j)})_{j=1, \ldots, k'}) = 0$ (componentwise in $\R^n$) Lebesgue-almost surely.
Since the $(\bar g^{(j)})_{j=1, \ldots, k'}$ are non-degenerate, the set $\{z\in\R^{k'}:\ \Phi(z)\ne 0\}$ has null Lebesgue measure, and by continuity of $\Phi$ it is an open set, hence empty. In particular, $\Phi((g^{(j)})_{j=1, \ldots, k'})=0$ almost surely under finite-width execution as well, so the same linear relation \eqref{eq:linear-reduction} holds in finite-width execution. Multiplying by $W$ gives $g'=g$ almost surely.

\emph{(iii)$\Rightarrow$(ii).}
Assume $\bar g=\sum_i\alpha_i \bar g^{(i)}$ where each $\bar g^{(i)}=\sW\bar h^{(i)}$ uses the same $\sW$ and
appears before $\ell$. By Lemma~\ref{lem:structure-lemma-infinite-width}, the conditional covariance of the
stacked block $(\bar g^{(1)},\dots,\bar g^{(k)},\bar g)$ equals a Kronecker product of $I_n$
with the Gram matrix of the parents $(\bar h^{(1)},\dots,\bar h^{(k)},\bar h)$. The linear relation among the
$\bar g$'s forces this Gram matrix to be singular, hence $\bar h$ lies in the span of previous
parents. Equivalently, its orthogonal projection coefficient vanishes:
$\bar h\ldot \bR_{\sW\mid \bF_{\ell-1}}\bar h=0$.
\end{proof}

\section{Proof of main results}
\label{sec:proof-main}

The proof of Theorem~\ref{thm:main} is by induction on the program lines. A key input in
the induction is a quantitative kernel LLN for pairs of $\sH$-vars already constructed
before the current line. To avoid any circularity, we first prove this kernel estimate in
the following conditional form: if the conclusion of Theorem~\ref{thm:main} holds for the
truncated program up to line $r-1$, then the corresponding kernel LLN holds for all
$\sH$-vars generated up to line $r-1$.

This conditional kernel estimate will be used in the induction step from $r-1$ to $r$.
Once the induction proving Theorem~\ref{thm:main} is complete, the same argument gives
Corollary~\ref{coro:kernel-lln} for the full program. We therefore begin with the proof of
the kernel estimate, and then turn to the proof of Theorem~\ref{thm:main}.

For the sake of completeness we recall some elementary inequalities involving the Wasserstein distance that will be used in the proofs of Theorem \ref{thm:main} and Corollary \ref{coro:kernel-lln}.
The reader interested in a general treatment of the Wasserstein distance is referred to \cite{villani_transport}.
Given random variables $(X,X')$, $(Y, Y')$ taking values in $\R^n\times \R^n$, it holds
 \begin{equation}\label{eq:kernel-basteri-tool}
  \W_p\bra{ X^\top X', Y^\top Y' } \le  \bra{ \nor{X'}_{L^{2p}}+ \nor{Y}_{L^{2p}} } \W_{2p}\bra{(X, X'),(Y, Y')},
 \end{equation}
 with $\cost = \cost(p)<\infty$.
Indeed, consider any coupling of $(X,X')$ and $(Y,Y')$. 
By adding and subtracting $Y^\top X'$,
\begin{equation}
  \label{eqaux_kernel-basteri-tool}
  \nor{ X^\top X' - Y^\top Y'}
\le \nor{(X-Y)^\top X'} + \nor{Y^\top (X'-Y')}
\le \|X-Y\|\,\|X'\| + \|Y\|\,\|X'-Y'\|.
\end{equation}
Taking the $L^p$ norm, using the triangle inequality and Cauchy-Schwarz inequality, we obtain \eqref{eq:kernel-basteri-tool}.

Similarly, the following bound holds:
\begin{equation}\label{eq:kernel-basteri-tool2}
  \W_p\bra{ X^\top X', Y^\top Y' } \le  \bra{1+\nor{(Y, Y')}_{L^{2p}} +  \W_{2p}(X',Y')} \W_{2p}\bra{(X, X'),(Y, Y')}.
 \end{equation}
 This is obtained from using the triangle inequality $\|X'\| \le \|X'-Y'\| + \|Y'\|$ in \eqref{eqaux_kernel-basteri-tool} above.

\subsection{Proof of Corollary~\ref{coro:kernel-lln}}
\label{sec:kernel-lln-proof}

Throughout this proof, $\scc<\infty$ denotes a constant that may depend on $p$ and on
the structural constants of the program, but not on the hidden widths. Its value may
change from line to line. Moreover, we can assume $p \ge 2$ without loss of generality, and also that $n \notin \nin$, otherwise the thesis is trivial because both variables are deterministic, noticing that the left hand side is zero in this case.
Let $\sh,\sh':\sH(n)$ be two $\sH$-vars of common length $n$, with finite-width executions
$(h,h')$ and infinite-width executions $(\bar h,\bar h')$. 
Set
\[
K_n := \frac1n h^\top h',
\qquad
\bar K_n := \frac1n \bar h^\top \bar h',
\qquad
K := \E[\bar K_n]=\bar h\ldot \bar h'.
\]
Since $K$ is deterministic,
\[
\|K_n-K\|_{L^p}
=
\W_p(K_n,K).
\]
By the triangle inequality for Wasserstein distances,
\begin{equation}\label{eq:kernel-decomp}
\|K_n-K\|_{L^p}
=
\W_p(K_n,K)
\le
\W_p(K_n,\bar K_n)+\W_p(\bar K_n,K).
\end{equation}
We estimate the two terms separately.

For the first term, apply \eqref{eq:kernel-basteri-tool2} with
\[
X:=\frac{h}{\sqrt n},\quad X':=\frac{h'}{\sqrt n},
\qquad
Y:=\frac{\bar h}{\sqrt n},\quad Y':=\frac{\bar h'}{\sqrt n}.
\]
Then $X^\top X'=K_n$ and $Y^\top Y'=\bar K_n$, and therefore
\[
\W_p(K_n,\bar K_n)
\le
\scc_p
\Bigl(
1+\|(Y,Y')\|_{L^{2p}}+\W_{2p}(X',Y')
\Bigr)
\W_{2p}\bigl((X,X'),(Y,Y')\bigr).
\]
The moment factor $\|(Y,Y')\|_{L^{2p}}$ is bounded uniformly in the widths by the
infinite-width moment bounds. Moreover,
\[
\W_{2p}(X',Y')
\le
\W_{2p}\bigl((X,X'),(Y,Y')\bigr)
\le
\W_{2p}\!\left(
\left(\frac{h^{(\ell)}}{\sqrt{n_\ell}}\right)_{\ell=1,\ldots,M},
\left(\frac{\bar h^{(\ell)}}{\sqrt{n_\ell}}\right)_{\ell=1,\ldots,M}
\right).
\]
Using \eqref{eq:inequality-theorem-main-nngp} with exponent $2p$, we obtain therefore:
\[
\W_{2p}(X',Y')
+
\W_{2p}\bigl((X,X'),(Y,Y')\bigr)
\le
\scc \sum_{m \in \nhidd} \frac1{\sqrt{m}}.
\]
Thus
\begin{equation}\label{eq:kernel-transport-bound}
\W_p(K_n,\bar K_n)
\le
\scc \sum_{m \in \nhidd} \frac1{\sqrt{m}}.
\end{equation}

It remains to estimate the sampling term $\W_p(\bar K_n,K)$. Since $K$ is deterministic,
\[
\W_p(\bar K_n,K)=\|\bar K_n-K\|_{L^p}.
\]
We write
\[
\bar K_n=\frac1n\sum_{i=1}^n U_i,
\qquad
U_i:=\bar h_i\bar h'_i.
\]
By definition of the infinite-width execution, the variables $(U_i)_{i=1}^n$ are i.i.d.,
and $K=\E[U_1]$. Let $\tilde U_i:=U_i-\E[U_i]$ for $i=1,\ldots,n$.

Then Rosenthal's inequality \eqref{eq:rosenthal}
gives, for each $p\ge 2$,
\begin{align*}
\Big\|\frac1n \sum_{i=1}^n \tilde U_i\Big\|_{L^p}&\le \scc(p) \bra{ \bra{\sum_{i=1}^n \frac{\|\tilde U_i\|_{L^2}^2}{n^2}}^{p/2} +\sum_{i=1}^n \frac{\|\tilde U_i\|_{L^p}^p}{n^p}}^{1/p}\\
&\le \scc(p)  \bra{\frac{\|\tilde U_1\|_{L^2}}{\sqrt{n}}+\frac{\|\tilde U_1\|_{L^p}}{n^{1-1/p}}},
\end{align*}
where the constant $\scc(p)$ depends on $p$ only.
Moreover, since $U_1 = \bar h_1\bar h'_1$ has finite moments by Lemma \ref{lem:infinite-width-finite-moments} and $1-1/p\ge 1/2$ for $p\ge 2$,
 we can further simplify:
\[ \Big\|\frac1n \sum_{i=1}^n \tilde U_i\Big\|_{L^p}\le \frac{\scc}{\sqrt n},\]
where $\scc$ depends on the $p$-th moment of the $\tilde U_i$, hence it is a structural constant of the program.

Therefore, \eqref{eq:kernel-decomp} can be estimated with:
\[
\|K_n-K\|_{L^p} \le \scc \sum_{m \in \nhidd} \frac1{\sqrt{m}} + \frac{\scc}{\sqrt n} \le \scc \sum_{m \in \nhidd \cup \nout} \frac1{\sqrt{m}}.
\]

%
%
\subsection{Proof of Theorem~\ref{thm:main}}
\label{subsec:proof-main-theorem}

We prove the theorem by constructing an explicit coupling between the finite-width and
infinite-width executions, line by line. Throughout the proof, constants denoted by $\scc$
may depend on $p$ and on the structural constants of the program, but never on the hidden
widths.

By Lemma~\ref{lemma:nondegenerate-representative}, we may assume without loss of
generality that the program is non-degenerate in the infinite-width execution. In particular,
the Gram matrices appearing in the structure formulas for the conditional laws stay in a
fixed smooth stratum at the infinite-width point, so that the maps involving pseudoinverses
and matrix square roots are differentiable there.

We denote by $L$ the number of lines in the program.
For $r\le L$, define the normalized error vector
\[
\Delta_r
:=
\left(
\frac{h^{(\ell)}-\bar h^{(\ell)}}{\sqrt{n_\ell}}
\right)_{\ell=1,\ldots,r}.
\]
We prove by induction on $r$ that the coupling can be chosen so that
\begin{equation}
\label{eq:main-inductive-bound}
\|\Delta_r\|_{L^p}
\le
\scc \sum_{m \in \nhidd(r)} \frac1{\sqrt{m}}, 
\end{equation}
where $\nhidd(r)$ denotes the hidden widths of the program restricted to the first $r$ lines.
The theorem follows immediately from \eqref{eq:main-inductive-bound}, since Wasserstein
distance is bounded above by the $L^p$-distance under any coupling.

The case $r=0$ is trivial ($T$ contains only input, hence deterministic, $\sH$-vars). 
Assume that the coupling has been constructed up to line
$r-1$ and that \eqref{eq:main-inductive-bound} holds at level $r-1$. We distinguish the
two possible operations at line $r$.

\noindent \emph{The $\nonlin$ case.}
Assume that line $r$ is
\[
\sh^{(r)}=\phi(\sh_1,\ldots,\sh_k),
\]
where $\sh_1,\ldots,\sh_k:\sH(n_r)$ are previously defined $\sH$-vars. We extend the
coupling by setting
\[
h^{(r)}=\phi(h_1,\ldots,h_k),
\qquad
\bar h^{(r)}=\phi(\bar h_1,\ldots,\bar h_k).
\]
Since $\phi$ is Lipschitz and acts componentwise, we have pointwise
\[
\|h^{(r)}-\bar h^{(r)}\|
\le
\Lip(\phi)
\left(
\sum_{j=1}^k \|h_j-\bar h_j\|^2
\right)^{1/2}.
\]
Dividing by $\sqrt{n_r}$ gives
\[
\left\|
\frac{h^{(r)}-\bar h^{(r)}}{\sqrt{n_r}}
\right\|
\le
\Lip(\phi)
\left(
\sum_{j=1}^k
\left\|
\frac{h_j-\bar h_j}{\sqrt{n_r}}
\right\|^2
\right)^{1/2}.
\]
Therefore,
\[
\left\|
\frac{h^{(r)}-\bar h^{(r)}}{\sqrt{n_r}}
\right\|_{L^p}
\le
\Lip(\phi)
\left\|
\left(
\frac{h_j-\bar h_j}{\sqrt{n_r}}
\right)_{j=1,\ldots,k}
\right\|_{L^p}.
\]
Since $\sh_1,\ldots,\sh_k$ are previous $\sH$-vars of the same shape $n_r$, the last
term is bounded by the induction hypothesis:
\[
\left\|
\frac{h^{(r)}-\bar h^{(r)}}{\sqrt{n_r}}
\right\|_{L^p}
\le
\scc
\sum_{m\in\nhidd(r-1)}\frac1{\sqrt{m}}.
\]
Combining this estimate with the induction hypothesis for the previous coordinates yields
\[
\|\Delta_r\|_{L^p}
\le
\scc
\sum_{m\in\nhidd(r)}\frac1{\sqrt{m}},
\]
noticing that $\nhidd(r-1) = \nhidd(r)$.

\noindent \emph{The $\matmul$ case.}
Assume that line $r$ is
\[
\sg=\sW\sh,
\qquad
\sh:\sH(m_r),
\qquad
\sW:\sA(n_r,m_r),
\]
where $n_r$ and $m_r$ do not necessarily belong to $\nhidd(r-1)$, because $n_r$ may be a completely new width, and $m_r$ could be an input dimension or an output dimension for the restricted program up to line $r-1$. In the former case (as well as in the case that $m_r \in \nhidd(r-1)$ already)  it holds $\nhidd(r) =  \nhidd(r-1)$, while in the latter we have $\nhidd(r) =\nhidd(r-1)\cup \{m_r\}$. Let $g$ and $\bar g$ be the finite-width and infinite-width executions of $\sg$. Conditionally on the past lines, both are Gaussian:
\[
g\mid \cF_{r-1}
\sim
\mathcal N\!\left(\mu_{g\mid \cF_{r-1}},\Sigma_{g\mid \cF_{r-1}}\right),
\]
and
\[
\bar g\mid \bF_{r-1}
\sim
\mathcal N\!\left(\mu_{\bar g\mid \bF_{r-1}},
\Sigma_{\bar g\mid \bF_{r-1}}\right).
\]
The conditional means and covariances are given by the structure lemmas for the finite-width
and infinite-width executions.

We extend the coupling by taking a standard Gaussian vector
$Z\sim\mathcal N(0,\Id_{n_r})$, independent of the past, and setting
\begin{equation}
\label{eq:matmul-coupling}
h^{(r)}=g
:=
\mu_{g\mid \cF_{r-1}}
+
\sqrt{\Sigma_{g\mid \cF_{r-1}}}\,Z,
\qquad
\bar h^{(r)}=\bar g
:=
\mu_{\bar g\mid \bF_{r-1}}
+
\sqrt{\Sigma_{\bar g\mid \bF_{r-1}}}\,Z.
\end{equation}
This gives the correct conditional laws, hence the correct marginal laws up to line $r$.

We now prove the two stability estimates
\begin{align}
\label{eq:mean-stability-main}
\left\|
\frac{
\mu_{g\mid \cF_{r-1}}-\mu_{\bar g\mid \bF_{r-1}}
}{\sqrt{n_r}}
\right\|_{L^p}
&\le
\scc \sum_{m \in \nhidd(r)}\frac1{\sqrt{m}},
\\
\label{eq:cov-stability-main}
\left\|
\sqrt{\Sigma_{g\mid \cF_{r-1}}}
-
\sqrt{\Sigma_{\bar g\mid \bF_{r-1}}}
\right\|_{L^p}
&\le
\scc \sqrt{n_r}
\sum_{m \in \nhidd(r)}\frac1{\sqrt{m}}.
\end{align}

Assuming \eqref{eq:mean-stability-main}-\eqref{eq:cov-stability-main} for the moment,
we conclude the $\matmul$ step. Let
\[
A_r
:=
\sqrt{\Sigma_{g\mid \cF_{r-1}}}
-
\sqrt{\Sigma_{\bar g\mid \bF_{r-1}}}.
\]
By \eqref{eq:matmul-coupling},
\[
\frac{\|g-\bar g\|}{\sqrt{n_r}}
\le
\frac{\|\mu_{g\mid \cF_{r-1}}-\mu_{\bar g\mid \bF_{r-1}}\|}{\sqrt{n_r}}
+
\frac{\|A_r Z\|}{\sqrt{n_r}}.
\]
Since $Z$ is standard Gaussian and independent of the past, the Gaussian moment bound
gives
\[
\|A_r Z\|_{L^p}
\le
\scc_p \|A_r\|_{L^p}.
\]
Therefore, by \eqref{eq:mean-stability-main} and \eqref{eq:cov-stability-main},
\[
\left\|
\frac{g-\bar g}{\sqrt{n_r}}
\right\|_{L^p}
\le
\scc \sum_{m \in \nhidd(r)}\frac1{\sqrt{m}}.
\]
Combining this with the induction hypothesis gives
\[
\|\Delta_r\|_{L^p}
\le
\|\Delta_{r-1}\|_{L^p}
+
\left\|
\frac{g-\bar g}{\sqrt{n_r}}
\right\|_{L^p}
\le
\scc \sum_{m \in \nhidd(r)}\frac1{\sqrt{m}}.
\]

It remains to prove the two stability estimates. We first dispose of the case
$m_r\in\nin$. In this case the parent $\sh:\sH(m_r)$ is an input variable, or a $\nonlin$ transformation of them,  hence its finite-width and infinite-width executions coincide deterministically. Consequently the
finite-width and infinite-width Gram data entering the structure formulas are identical.
The conditional means and covariances of $g$ and $\bar g$ therefore coincide, and the
left-hand sides of both stability estimates vanish. Hence, we may assume from now on that
$m_r\notin\nin$. We prove separately the case of mean and covariance.

It remains to prove the two stability estimates. We first consider the case
\(m_r\in\nin\). In the basic $\netsor$ language, the current parent
\(\sh:\sH(m_r)\) is obtained from deterministic input variables by ordinary
$\nonlin$ operations. Hence its finite-width and infinite-width executions
coincide deterministically, and no new kernel LLN over the input dimension \(m_r\)
is required. The Gram entries involving the current parent are structural.

However, if the same \(\sA\)-var \(\sW\) has already been used earlier in the
program, the conditional means and covariances need not coincide exactly: the only
possible discrepancy comes from previous uses of \(\sW\), and is already controlled
by the induction hypothesis and by the kernel estimates for the previous hidden
parents. Therefore the estimates below remain valid with
\[
\nhidd(r)=\nhidd(r-1).
\]
We may then continue with the same mean and covariance stability arguments, noting
that no additional \(m_r^{-1/2}\) contribution is generated when \(m_r\in\nin\).

\noindent \emph{Covariance stability.}
By the structure lemmas, there exist scalar quantities $a_r,\bar a_r\ge 0$ such that
\[
\Sigma_{g\mid \cF_{r-1}} = a_r\,\Id_{n_r},
\qquad
\Sigma_{\bar g\mid \bF_{r-1}} = \bar a_r\,\Id_{n_r}.
\]
Moreover, $a_r$ and $\bar a_r$ are obtained by applying the same finite-dimensional map
to the finite-width and infinite-width Gram data associated with the previous parents of the weight
$\sW$. More precisely,
\[
a_r
=
f\!\left(
\frac{1}{m_r} H_{\sW}^{(r-1),\top}H_{\sW}^{(r-1)}
\right),
\qquad
\bar a_r
=
f\!\left(
\bH_{\sW}^{(r-1)}\ldot \bH_{\sW}^{(r-1)}
\right),
\]
for a map $f$ depending only on the program. 
In particular, by the structure lemmas, $f(X)$ is exactly the Schur complement $X/Y$, 
where $Y$ is the square submatrix of $X$ obtained by removing the last row and column.
In our case, when $X = \frac1m H_{\sW}^{(r-1),\top}H_{\sW}^{(r-1)}$, the submatrix $Y$ is given by $\frac1m H_{\sW}^{(r-2),\top}H_{\sW}^{(r-2)}$
and when $X = \bH_{\sW}^{(r-1)}\ldot \bH_{\sW}^{(r-1)}$, $Y$ is given by $\bH_{\sW}^{(r-2)}\ldot \bH_{\sW}^{(r-2)}$.

Since the program is non-degenerate in
infinite-width execution, the map $x\mapsto \sqrt{f(x)}$ is differentiable at the infinite-width
Gram point. Lemma~\ref{lem:concentration}, applied to this map, together with the kernel
LLN for the truncated program up to line $r-1$, yields
\[
\|\sqrt{a_r}-\sqrt{\bar a_r}\|_{L^p}
\le
\scc \sum_{m \in \nhidd(r-1)\cup \{m_r\}}\frac1{\sqrt{m}} = \scc\sum_{m \in \nhidd(r)}\frac1{\sqrt{m}},
\]
since $m_r$ is either a hidden width or an output dimension for the restricted program.
Moreover,
\[
\sqrt{\Sigma_{g\mid \cF_{r-1}}}
-
\sqrt{\Sigma_{\bar g\mid \bF_{r-1}}}
=
(\sqrt{a_r}-\sqrt{\bar a_r})\,\Id_{n_r},
\]
we obtain
\[
\left\|
\sqrt{\Sigma_{g\mid \cF_{r-1}}}
-
\sqrt{\Sigma_{\bar g\mid \bF_{r-1}}}
\right\|_{L^p}
=
\sqrt{n_r}\,
\|\sqrt{a_r}-\sqrt{\bar a_r}\|_{L^p}
\le
\scc\sqrt{n_r}
\sum_{m \in \nhidd(r)}\frac1{\sqrt{m}},
\]
which proves \eqref{eq:cov-stability-main}.

\noindent \emph{Mean stability.}
We prove a slightly stronger estimate. Fix the parent $\sh:\sH(m)$ appearing in line
$r$, and write $h,\bar h$ for its finite-width and infinite-width executions. For every
$s\le r-1$, we claim that
\begin{equation}
\label{eq:mean-inner-induction}
\left\|
\frac{
\mu_{W\mid \cF_s}h
-
\bar\mu_{W\mid \bF_s}\bar h
}{\sqrt{n_r}}
\right\|_{L^p}
\le
\scc\sum_{m \in \nhidd(r)}\frac1{\sqrt{m}}.
\end{equation}
Taking $s=r-1$ gives \eqref{eq:mean-stability-main}.

We prove \eqref{eq:mean-inner-induction} by induction over $s$. For $s=0$, both
conditional means are zero. Assume the estimate holds at level $s-1$. If line $s$ is not
a $\matmul$ line using the weight $\sW$, then the conditional mean of $\sW$ does
not change and there is nothing to prove. Thus suppose line $s$ is
\[
\sg'=\sW\sh',
\]
with executions $(g',h')$ and $(\bar g',\bar h')$.

The update formula for the conditional mean gives
\[
\mu_{W\mid \cF_s}h
=
\mu_{W\mid \cF_{s-1}}h
+
Z_s u_s,
\]
where
\[
Z_s
:=
\big(g'-\mu_{W\mid \cF_{s-1}}h'\big)
\sqrt{(h'^{\top}R_{W\mid \cF_{s-1}}h'\big)^{+} }
\in\R^{n_r},
\]
and
\[
u_s
:=
\sqrt{\big(h'^{\top}R_{W\mid \cF_{s-1}}h'\big)^{+}}
h'^{\top}R_{W\mid \cF_{s-1}}h
\in\R.
\]
Similarly,
\[
\bar\mu_{W\mid \bF_s}\bar h
=
\bar\mu_{W\mid \bF_{s-1}}\bar h
+
\bar Z_s\bar u_s,
\]
with the analogous barred definitions. By the coupling construction at line $s$, the
standardized residuals are driven by the same Gaussian noise, hence
\[
Z_s=\bar Z_s
\]
under our coupling. Moreover,
\[
Z_s\mid \cF_{s-1}
\sim
\mathcal N\!\left(0,\frac1m_r\Id_{n_r}\right).
\]
Therefore,
\[
\left\|
\frac{
\mu_{W\mid \cF_s}h-\bar\mu_{W\mid \bF_s}\bar h
}{\sqrt{n_r}}
\right\|_{L^p}
\le
\left\|
\frac{
\mu_{W\mid \cF_{s-1}}h-\bar\mu_{W\mid \bF_{s-1}}\bar h
}{\sqrt{n_r}}
\right\|_{L^p}
+
\frac1{\sqrt{n_r}}\|Z_s(u_s-\bar u_s)\|_{L^p}.
\]
By Hölder's inequality and Gaussian moment estimates,
\[
\frac1{\sqrt{n_r}}\|Z_s(u_s-\bar u_s)\|_{L^p}
\le
\frac1{\sqrt{n_r}}\|Z_s\|_{L^{2p}}\|u_s-\bar u_s\|_{L^{2p}}
\le
\frac{\scc}{\sqrt m_r}\|u_s-\bar u_s\|_{L^{2p}}.
\]

It remains to control $u_s-\bar u_s$. By the projector representation of the right
covariance, both $u_s$ and $\bar u_s$ are obtained from the corresponding Gram matrices via
the same finite-dimensional map:
\[
u_s
=
f_s\!\left(
\frac{1}{m_r} H_{\sW}^{(s-1),\top}H_{\sW}^{(s-1)}
\right),
\qquad
\bar u_s
=
f_s\!\left(
\bH_{\sW}^{(s-1)}\ldot \bH_{\sW}^{(s-1)}
\right).
\]
By non-degeneracy, $f_s$ is differentiable at the infinite-width Gram point. Moreover,
\[
|u_s|
=
\left|
\sqrt{\big(h'^{\top}R_{W\mid \cF_{s-1}}h'\big)^{+}}
h'^{\top}R_{W\mid \cF_{s-1}}h
\right|
\le
\|R_{W\mid \cF_{s-1}}^{1/2}h\|
\le
\|h\|,
\]
since $\|R_{W\mid \cF_{s-1}}\|_{\op}\le 1$, and the same bound holds for $\bar u_s$.
Thus the uniform moment bounds imply
\[
\|u_s\|_{L^{4p}}+\|\bar u_s\|_{L^{4p}}
\le
\scc\sqrt{m_r}.
\]
Applying Lemma~\ref{lem:concentration} to $f_s$, together with the kernel LLN for the
truncated program, yields
\[
\|u_s-\bar u_s\|_{L^{2p}}
\le
\scc \sqrt{m_r}
\sum_{m\in\nhidd(r-1)\cup\{m_r\}}\frac1{\sqrt{m}} = \scc \sqrt{m_r}
\sum_{m\in\nhidd(r)}\frac1{\sqrt{m}}.
\]
Consequently,
\[
\frac1{\sqrt{n_r}}\|Z_s(u_s-\bar u_s)\|_{L^p}
\le
\scc \sum_{m\in\nhidd(r)}\frac1{\sqrt{m}}.
\]
Iterating over $s\le r-1$ proves
\eqref{eq:mean-inner-induction}. Hence \eqref{eq:mean-stability-main} follows.

The $\nonlin$ and $\matmul$ cases close the induction. Taking $r=L$ in
\eqref{eq:main-inductive-bound} gives
\[
\left\|
\left(
\frac{h^{(\ell)}-\bar h^{(\ell)}}{\sqrt{n_\ell}}
\right)_{\ell=1,\ldots,L}
\right\|_{L^p}
\le
\scc \sum_{m\in\nhidd}\frac1{\sqrt{m}}.
\]
Therefore,
\[
\W_p\left(
\left(\frac{h^{(\ell)}}{\sqrt{n_\ell}}\right)_{\ell=1}^L,
\left(\frac{\bar h^{(\ell)}}{\sqrt{n_\ell}}\right)_{\ell=1}^L
\right)
\le
\scc \sum_{m\in\nhidd}\frac1{\sqrt{m}},
\]
as claimed.

\subsection{Proof of Theorem~\ref{thm:main-netsork}}
\label{sec:proof-netsork}

The proof follows the line-by-line coupling used for Theorem~\ref{thm:main}, enlarged to
include the scalar variables. 


The $\matmul$ case is governed by the same conditional Gaussian regression
formulas as in the basic $\netsor$ proof. Indeed, scalar variables are measurable functions
of the previously generated variables and do not introduce additional observations of the
weights. Thus, conditionally on the previous lines, the finite-width execution $g$ of a $\matmul$ line, \(\sg=\sW \sh\), is still
Gaussian with conditional mean and covariance given by the structure formulas of
Section~\ref{sec:weight-conditional-laws}.

There is, however, one minor difference with respect to the basic $\netsor$ setting. In a
$\netsork$ program, an \(H\)-var with structural input dimension $m\in \nin$ need not be one of the
deterministic input variables: it may have been obtained from input variables through a
$\nonlinpl$ instruction depending on scalar variables. Such a variable is therefore not
necessarily deterministic in the finite-width execution, as opposed to the variables obtained from $\nonlin$ instructions applied to input variables. Its infinite-width execution is in any case deterministic, and its finite-width fluctuation is
already controlled by the induction hypothesis on the preceding $\nonlinpl$ and
$\scal$ lines. Consequently, no new kernel LLN over an input dimension is required.

For variables of hidden dimension $m\in \nhidd$, the argument is identical to the $\netsor$ case. The local Lipschitz assumption on $\netsork$ programs ensures that the
finite-dimensional maps appearing in the structure formulas are differentiable at the
infinite-width Gram point.

Assuming that the $r$-th line is a $\kernel$ line $\ss=\sh^\star\sh'$, with $\sh,\sh' \colon \sH(m_r)$, the two executions are
\[
s=\frac{1}{m_r} h^\top h',
\qquad
\bar s=\bar h\ldot\bar h'.
\]
Thus the estimate
\[
\|s-\bar s\|_{L^p}
\le
\scc\sum_{m\in\nhidd(r)}\frac1{\sqrt{m}}
\]
is exactly Corollary \ref{coro:kernel-lln}, applied to the truncated program up to the current line.
Take into account that $m_r$ belongs to $\nhidd(r)$, as opposed to the case of $\netsor$.

For a $\scal$ line,
\[
\ss=\psi(\ss_1,\ldots,\ss_u),
\]
the estimate follows from the finite-dimensional local Lipschitz property of $\psi$,
the induction hypothesis for the previous scalar variables, and the moment bounds for
scalar variables, which are obtained inductively from the kernel LLN and the growth
assumptions.

It remains to discuss a $\nonlinpl$ line,
\[
\sh=\phi(\sh_1,\ldots,\sh_k;\ss_1,\ldots,\ss_u):\sH(m_r).
\]
Let $h,\bar h$ be the corresponding executions. Applying
\eqref{eq:separated-linear-lip} componentwise gives
\[\begin{split}
|h_a-\bar h_a|
& \le
C (1+\nor{s}+\nor{\bar s}) \bra{
\sum_{i=1}^k |h_{i,a}-\bar h_{i,a}|^2}^{1/2}
\\
& \quad + C \left(1+\bra{\sum_{i=1}^k |h_{i,a}|^2}^{1/2}+\bra{\sum_{i=1}^k|\bar h_{i,a}|^2 }^{1/2} \right)
|s-\bar s|,
\end{split}
\]
where $s=(s_1,\ldots,s_u)$ and $\bar s=(\bar s_1,\ldots,\bar s_u)$. Hence, squaring and summing over $a=1,\ldots, m_r$, we obtain eventually
\[
\|h-\bar h\| \le
\scc(1+\nor{s}+\nor{\bar s})
\sum_{i=1}^k \nor{ h_i-\bar h_i }
+
\scc \nor{s-\bar s}
\left(
1+\sum_{i=1}^k \nor{ h_i }+ \nor{\bar h_i} 
\right).
\]
Dividing both sides by $\sqrt{m_r}$ and taking $L^p$ norms and using Hölder's inequality, the induction hypothesis controls the
differences, while the normalized moment bounds for the previously constructed
$\sH$-vars and $\sC$-vars control the coefficients. This yields
\[
\left\|\frac{h-\bar h}{\sqrt m_r}\right\|_{L^p}
\le
\scc\sum_{m \in \nhidd}\frac1{\sqrt{m}},
\]
and induction closes as in the proof of Theorem~\ref{thm:main}.

\section{Numerical Experiments}
\label{sec:numerical_experiments}

In this section, we validate the quantitative wide-limit convergence of finite-width random neural networks toward their infinite-width limits. We focus on the convergence of the network's final output law via an \emph{ensemble sampling} paradigm. 
For fully connected architectures, this output-law experiment is designed to reveal the
sharper \(\mathcal O(n^{-1})\) convergence rate known under non-degeneracy hypotheses
\cite{trevisan2023wide, Favaro2025}. For the other architectures, the experiments should be interpreted as
evidence of the finite-width convergence predicted by the present theorem, rather than as
a theorem-level claim of an \(\mathcal O(n^{-1})\) rate.

\subsection{Experimental Setup and Architectures}
We evaluate a suite of architectures of increasing complexity, corresponding to the examples introduced in Section \ref{subsec:program-graphs}, to illustrate the finite-width convergence predicted by Theorem \ref{thm:main} across several
architectures. In all of the following experiments, the infinite-width limit moments are computed using deterministic Gauss-Hermite quadrature to avoid integration noise.

\paragraph{Multi-Layer Perceptrons:} 
We evaluate both shallow and deep Multi-Layer Perceptrons using $\tanh$ and ReLU activations, mirroring the structures of Algorithm \ref{alg:shallow} and Algorithm \ref{alg:deep}, respectively. These setups capture standard layer-wise transitions where separate, independent weight matrices are sampled per layer. The deep MLP experiment ($L=4$) demonstrates how compounding non-linearities propagate through the infinite-width recursion while maintaining the finite-width convergence limits.

\paragraph{Time-unrolled Recurrent Neural Networks:} 
We test recurrent networks following the tied-weight architecture of Algorithm \ref{alg:rnn}. In these experiments, the network is unrolled over $T=5$ time steps with a $\tanh$ activation to demonstrate convergence under weight-sharing constraints. Because the recurrent matrix $\sW^h$ is shared across time, the infinite-width computation must track cross-time second moments $K_{t,s} = \mathbb{E}[h_t h_s^\top]$. The limiting marginal laws at any given time step $t$ are then extracted from the diagonal blocks $K_{t,t}$.

\paragraph{Residual Networks:} 
To further validate the framework, we evaluate deep networks with skip connections, formulated as $\sh^{(\ell)} = \sh^{(\ell-1)} + \phi(\sW^{(\ell)} \sh^{(\ell-1)})$ as detailed in Algorithm \ref{alg:resnet}, where $\phi$ is the ReLU activation. Because a new, independent weight matrix $\sW^{(\ell)}$ is drawn on each residual update, the samples for the current state and its update can be treated as independent when constructing marginal samples for the subsequent layer.

\begin{algorithm}[htpb]
\floatname{algorithm}{Netsor program}
\caption{Residual Network}\label{alg:resnet}
\begin{algorithmic}
\Require $\sx: \sH(d)$
\Require $\sW^{(0)}: \sA(n, d)$
\Require $\sW^{(\ell)}: \sA(n, n)$ for $\ell = 1, \dots, L$
\State \algline $\sg^{(0)} = \sW^{(0)} \sx: \sG(n)$ \Comment{\matmul}
\State \algline $\sh^{(0)} = \sg^{(0)}: \sH(n)$ \Comment{\nonlin}
\For{$\ell =1, \ldots, L$}
  \State \algline $\sg^{(\ell)} = \sW^{(\ell)} \sh^{(\ell-1)}: \sG(n)$ \Comment{\matmul}
  \State \algline $\sh^{(\ell)} = \sh^{(\ell-1)} + \phi( \sg^{(\ell)} ): \sH(n)$ \Comment{\nonlin}
\EndFor
\end{algorithmic}
\end{algorithm}

\subsection{Output-Law Convergence and Estimation Noise}

To quantify the convergence of the output distributions, we utilize the Sliced Wasserstein-1 distance. For probability measures $\mu$ and $\nu$ on $\mathbb{R}^M$, it is defined as:
\begin{equation}
    \mathcal{SW}_1(\mu,\nu) = \int_{\mathbb{S}^{M-1}} \mathcal{W}_1(\theta_\sharp \mu, \theta_\sharp \nu) \, d\sigma(\theta),
\end{equation}
where $\theta_\sharp$ denotes the pushforward measure onto the direction $\theta \in \mathbb{S}^{M-1}$, and $\sigma$ is the uniform measure on the unit sphere. The bound $\mathcal{SW}_1 \leq \mathcal{W}_1$ holds inherently because the orthogonal projection onto $\theta$ is a $1$-Lipschitz map. Moreover, $\mathcal{SW}_1$ circumvents the computational complexity of $\W_1$ associated with the curse of dimensionality, making it preferred for empirical evaluations.

For each architectural width $n$, we initialize an ensemble of $N = 5000$ independent networks and extract the scalar readouts. By evaluating these readouts across independent seeds, we decouple the empirical sample size from the network width $n$. The $\mathcal{SW}_1$ distance is computed between these $N$ outputs and the analytical distribution of the activations of the limiting Gaussian process.

For an empirical measure $\hat{\mu}_N$ constructed from $N$ i.i.d. samples of a distribution $\mu$, Theorem 1 in \cite{Fournier2015} yields $\mathbb{E}[\mathcal{SW}_1(\mu,\hat{\mu}_N)] \le c_1 N^{-1/2}$. Let $\mu_n$ denote the exact finite-width distribution of the network activations, $\hat{\mu}_{n,N}$ denote the empirical measure formed by the $N$ ensemble samples, and $\mu_\infty$ denote the limiting distribution of the Gaussian process activations. By the triangle inequality, the empirical distance we observe is bounded by the true architectural convergence plus this statistical noise:
\begin{equation}
    \mathcal{SW}_1(\hat{\mu}_{n,N}, \mu_\infty) \le \mathcal{SW}_1(\mu_n, \mu_\infty) + \frac{c_1}{\sqrt{N}} \le \frac{c_2}{n} + \frac{c_1}{\sqrt{N}},
\end{equation}
where we substitute the expected theoretical rate $\mathcal{O}(n^{-1})$ from \cite{Favaro2025,trevisan2023wide} for MLP architectures. The sampling error $N^{-1/2}$ does not dominate the architectural signal for network widths up to approximately $n \le \sqrt{N} \approx 70$; for larger $n$, the observed slope is expected to flatten toward $-1/2$ as the estimation noise becomes the leading term. This behavior is consistent with the fitted slopes reported in Table~\ref{tab:numerical-slopes}, which lie strictly between $-1/2$ and $-1$ across all architectures, reflecting a mixture of the true convergence signal and the sampling floor.

In Figure~\ref{fig:output_law_raw} we plot the Sliced Wasserstein results for our four experiments in log-log scale.

For each architecture, we estimate the empirical convergence exponent by a least-squares
fit of
\[
\log d_n = \alpha + \beta \log n,
\]
where \(d_n\) denotes the \(\mathcal{SW}_1\) distance between the finite-width execution at width $n$ and the infinite-width execution of each selected architecture.
The fitted value of \(\beta\) is reported in Table~\ref{tab:numerical-slopes}. Slopes strictly below
\(-1/2\) indicate that the output-law convergence is visible beyond the sampling noise floor.

\begin{figure}[htpb]
    \centering
    \begin{subfigure}[b]{0.48\textwidth}
        \centering
        \includegraphics[width=\textwidth]{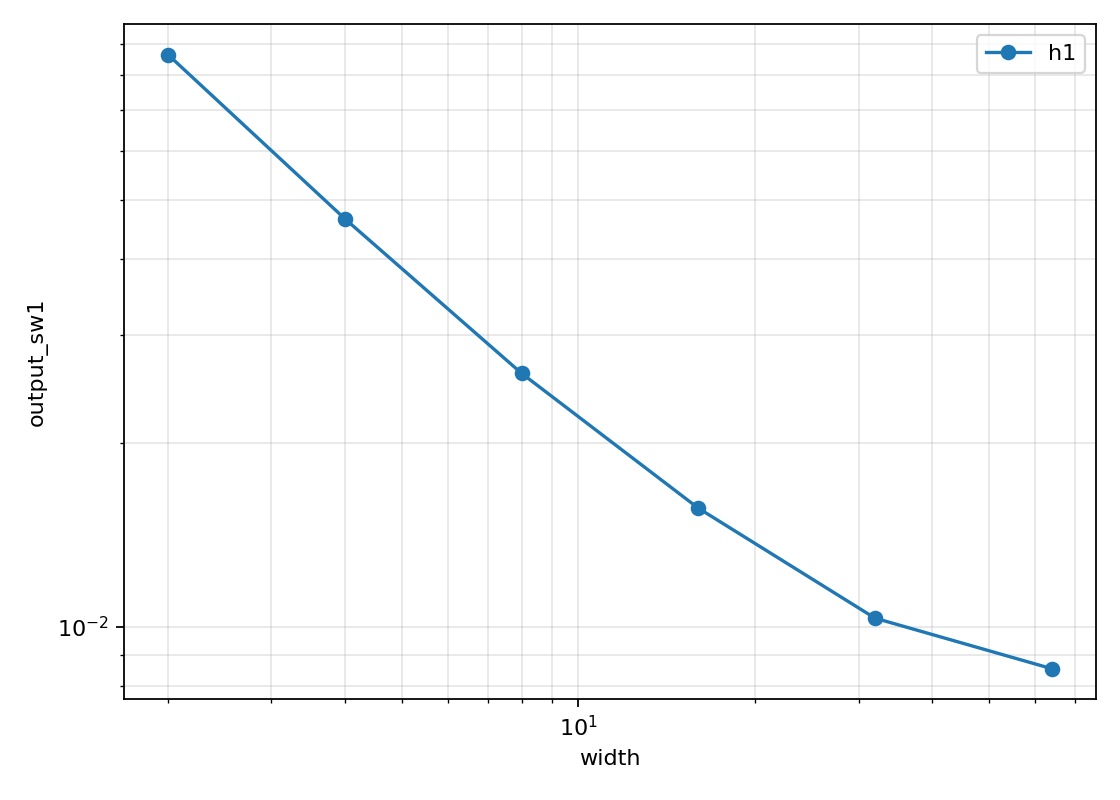}
        \caption{Shallow MLP}
    \end{subfigure}
    \hfill
    \begin{subfigure}[b]{0.48\textwidth}
        \centering
        \includegraphics[width=\textwidth]{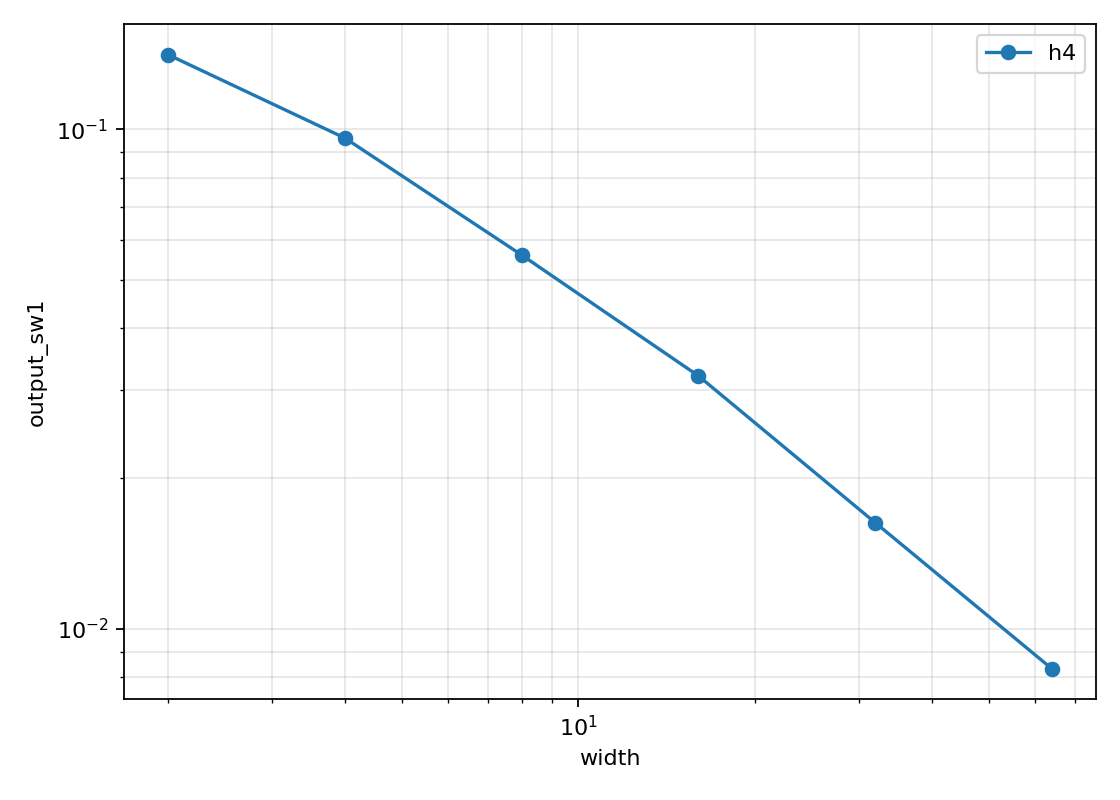}
        \caption{Deep MLP ($L=4$)}
    \end{subfigure}
    
    \vspace{0.4cm}
    
    \begin{subfigure}[b]{0.48\textwidth}
        \centering
        \includegraphics[width=\textwidth]{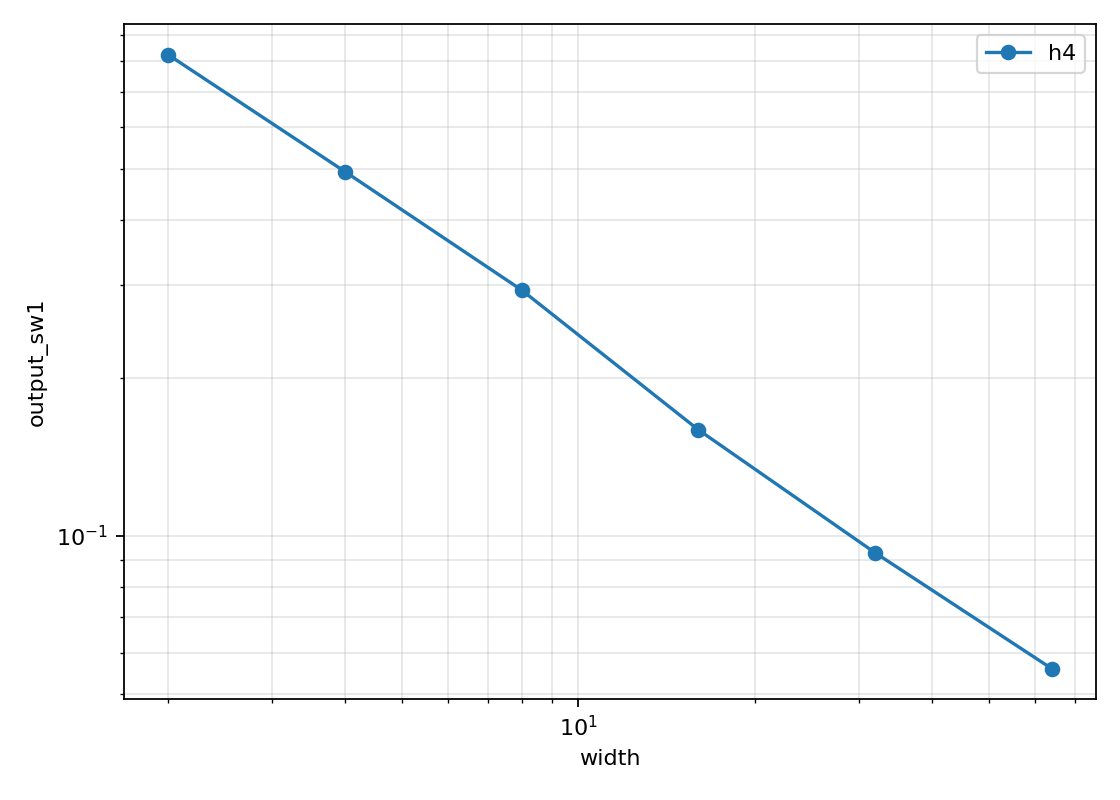}
        \caption{Residual Network}
    \end{subfigure}
    \hfill
    \begin{subfigure}[b]{0.48\textwidth}
        \centering
        \includegraphics[width=\textwidth]{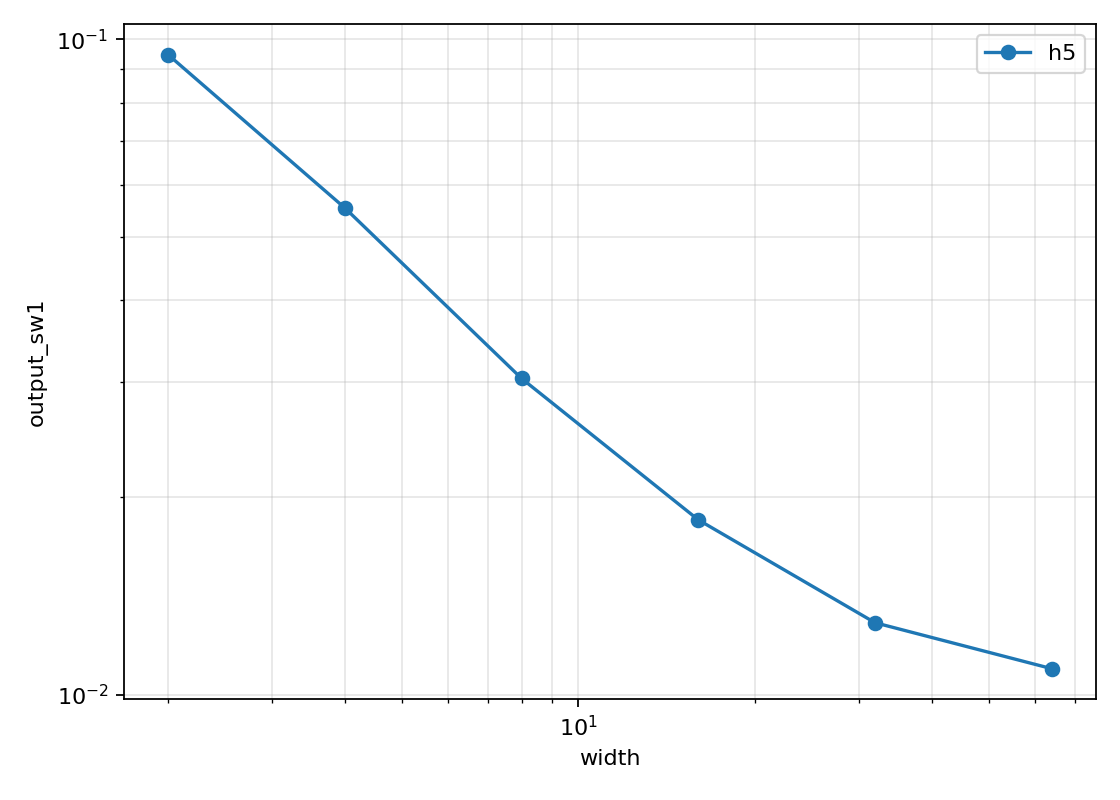}
        \caption{Time-unrolled RNN}
    \end{subfigure}
    \caption{Sliced Wasserstein-1 ($\mathcal{SW}_1$) distances between the empirical output laws ($N=5000$ ensemble samples) and the law of the limiting Gaussian process activations across four architectures, plotted in log-log scale.}
    \label{fig:output_law_raw}
\end{figure}

\begin{table}[h]
\centering
\begin{tabular}{lc}
\hline
Architecture & \(\mathcal{SW}_1\) slope \\
\hline
Shallow MLP & \(-0.721\) \\
Deep MLP \(L=4\) & \(-0.916\) \\
Residual network & \(-0.797\) \\
Time-unrolled RNN & \(-0.579\) \\
\hline
\end{tabular}
\caption{Least-squares log-log slopes for the $\mathcal{SW}_1$ curves shown in
Figure~\ref{fig:output_law_raw}. All slopes lie strictly below $-1/2$, confirming
that the architectural convergence signal is visible above the empirical sampling floor.}
\label{tab:numerical-slopes}
\end{table}

\printbibliography

\end{document}